\documentclass[11pt]{article}

\usepackage[margin=1in]{geometry}
\usepackage{amsmath,amssymb,amsthm,mathtools}
\usepackage{graphicx}
\usepackage{float}
\usepackage[authoryear,round,sort]{natbib}
\usepackage[colorlinks=true,citecolor=blue,linkcolor=blue,urlcolor=blue]{hyperref}
\usepackage{enumitem}
\usepackage{algorithm}
\usepackage{algpseudocode}

\newtheorem{theorem}{Theorem}[section]
\newtheorem{lemma}[theorem]{Lemma}
\newtheorem{corollary}[theorem]{Corollary}
\newtheorem{proposition}[theorem]{Proposition}
\newtheorem{definition}[theorem]{Definition}

\newtheorem*{informaltheorem}{Informal Theorem}
\newtheorem*{informalproposition}{Informal Proposition}

\newcommand{\calH}{\mathcal H}
\newcommand{\R}{\mathbb R}
\newcommand{\eps}{\varepsilon}
\newcommand{\conv}{\operatorname{conv}}
\newcommand{\spanop}{\operatorname{span}}
\newcommand{\Root}{\operatorname{Root}}
\newcommand{\edge}{\operatorname{edge}}

\title{Defensive Boosting for Online Probabilistic Forecasting}
\author{Georgy Noarov \and Aaron Roth}
\date{}
\hypersetup{
  pdftitle={Defensive Boosting for Online Probabilistic Forecasting},
  pdfauthor={Georgy Noarov and Aaron Roth}
}

\begin{document}
\maketitle

\begin{abstract}
We study online probabilistic forecasting of binary outcomes chosen by an adaptive adversary. Given an online learning algorithm for a weak hypothesis class $\calH$, we would like to efficiently obtain two incomparable guarantees that existing online boosting techniques provide separately. Online gradient boosting competes in Brier score with the best predictor induced by the span of $\calH$ on every sequence --- but promises nothing when the span does not contain an accurate predictor.  Online weak-to-strong boosting drives classification error to zero under a weak-learning condition, but promises little when that condition fails.

We give a simple defensive forecasting algorithm, the \emph{Defensive Booster}, that obtains both guarantees.  On every adaptive sequence, its Brier score is competitive with the best prediction induced by the span of $\calH$ at the same rate as online gradient boosting; simultaneously, whenever the realized transcript satisfies the smooth weak-learning condition, its Brier score and randomized classification error satisfy the same rate guarantee as online classification boosting. 
This is achieved by operationalizing the \emph{dual view} of boosting: When the Defensive Booster's randomized classification error is persistently high, its mistake weights form a smooth reweighting on which every weak hypothesis has low edge, yielding an ex-post \emph{hard-core} certificate that the weak-learning condition fails. 
We also develop a strongly adaptive variant, which satisfies both guarantees and provides local hard-core certificates on every time interval. The Defensive Booster is very efficient: it accesses just one weak-class learner, whereas the prior online boosting methods we compare against maintain large weak-learner ensembles. Experiments on synthetic and real data streams demonstrate its strong predictive performance (sometimes substantially improving over all prior baselines) coupled with orders-of-magnitude faster runtime.
\end{abstract}

\section{Introduction}
\label{sec:intro}

On each round $t$ of an online probabilistic forecasting problem, an
adversary reveals a context $x_t\in\mathcal X$, the learner announces a
probability $p_t\in[0,1]$ for the binary outcome $Y_t\in\{0,1\}$, and the
outcome is then revealed.  The sequence may be arbitrary and adaptive
to the learner's past predictions; we make no distributional assumptions. The forecast is scored by the Brier score
$(Y_t-p_t)^2$: the squared-error proper scoring rule, minimized in
expectation by the true conditional probability that $Y_t = 1$.  

We study this problem through the lens of boosting. The learner is
given an online learning algorithm for simple, ``weak'' probability
predictors and wants to make forecasts that are more accurate than any weak
learner can make alone.  There are two established ways to turn an
online weak learner into a stronger one, and they come with incomparable
guarantees.  To put both guarantees in common notation, encode a weak
prediction $q(x)\in[0,1]$ as $h(x)=2q(x)-1\in[-1,1]$, and let $\calH$
denote the resulting class; likewise, encode $Y_t$ as $\sigma_t=2Y_t-1$.
This normalization is useful for binary classification because it expresses
classification advantage as correlation between $h(x_t)$ and $\sigma_t$.

\emph{Online gradient boosting}
\citep{BeygelzimerHazanKaleLuo2015OnlineGradient} treats boosting as online
convex optimization over combinations of weak hypotheses.  Run with squared
loss, it guarantees Brier score competitive with the best predictor in the
convex hull of $\calH$, or more generally in the norm-bounded span.  This guarantee is assumption free in that it holds on \emph{every} sequence --- but of course there is no guarantee that there is an accurate predictor in the span.

\emph{Online weak-to-strong boosting}
\citep{ChenLinLu2012OnlineBoosting,BeygelzimerKaleLuo2015OptimalAdaptive}
instead obtains the ``AdaBoost phenomenon'' in the online setting under a
smooth weak-learning condition: every sufficiently smooth
reweighting of the examples, meaning one whose weight is not concentrated on
too few examples, admits a hypothesis with edge $\gamma$ over random
guessing.  With the encoding above, the edge of $h$ under weights $w_t$ is its
normalized weighted correlation
$|\sum_t w_t\sigma_t h(x_t)|/\sum_t w_t$.  For a binary-valued weak learner,
edge $\gamma$ is equivalent, after possibly flipping its sign, to weighted
classification error at most $(1-\gamma)/2$.  When the condition
holds down to the smoothness needed for the target accuracy, boosting drives
classification error to zero.  The resulting
classification accuracy can far exceed what squared-loss competition with
the span of $\calH$ alone guarantees.  But when the weak-learning
condition fails, these algorithms promise little, and
their natural output is a weighted vote over an ensemble of predictors rather than a probability.

\emph{This paper asks whether a single, natural, efficient online algorithm, outputting probability
forecasts, can enjoy both guarantees at once: the unconditional comparator
guarantee of gradient boosting, and the conditional weak-to-strong
guarantee of classification boosting. }

\subsection{Our results}
\label{sec:intro-results}

We answer affirmatively with a simple, efficient algorithm --- the
\emph{Defensive Booster} (Algorithm~\ref{alg:defensive}) ---
built as a black-box reduction from an online learning algorithm for the weak class $\calH$.  Write
$B_T=T^{-1}\sum_{t\le T}(Y_t-p_t)^2$ for its average Brier score.  Our main
guarantees, stated informally, follow. First, we unconditionally obtain the same guarantee as online gradient boosting, at the same rate:

\begin{informaltheorem}[Gradient-boosting-style span guarantee; Theorem~\ref{thm:span-guarantee}]
On every adaptive sequence, for every $f$ in the
$\Lambda$-norm-bounded span of $\calH$, define
$q_f(x)=(1+f(x))/2$.  Then
\[
  B_T
  \le
  \frac1T\sum_{t=1}^T
  \left(Y_t-q_f(x_t)\right)^2
  +
  O\!\left(\frac{\Lambda}{\sqrt T}\right).
\]
\end{informaltheorem}

Here $q_f=(1+f)/2$ reverses the affine encoding above and, for a general
span comparator, is an unrestricted real-valued score.  The actual bound is
second-order --- the regret term scales with the forecaster's own Brier
score rather than with $T$ --- yielding a fast $O(1/T)$ bound in the
realizable span case (Corollary~\ref{cor:low-loss-span}).

Next, we simultaneously obtain the guarantee of online weak-to-strong boosting, transforming a smooth weak-learning condition into perfect classification with the same $\gamma,\eps$ dependence that is optimal in the weak-online-learning model of \citet{BeygelzimerKaleLuo2015OptimalAdaptive}:

\begin{informaltheorem}[Weak-to-strong boosting guarantee; Corollary~\ref{cor:second-order-boosting}]
If the realized transcript satisfies the $(\rho,\gamma)$-smooth weak-learning
condition --- every reweighting $w_t\in[0,1]$ of the realized rounds with
average weight at least $\rho$ admits some $h\in\calH$
with normalized edge at least $\gamma$ ---
then the forecaster's Brier score and randomized classification error $\frac1T\sum_{t=1}^T |Y_t-p_t|$
are both at most $\max\{\rho,\ \tilde O(1/(\gamma^2T))\}$.  Consequently,
for any target $\eps>0$, if the weak-learning condition holds with $\rho=O(\eps)$, then
both errors are at most $\eps$ after
$T=\tilde O(1/(\gamma^2\eps))$ rounds.
\end{informaltheorem}

Thresholding each forecast at $1/2$ gives a deterministic classifier whose
average $0/1$ error is at most twice the randomized error.

The weak-to-strong guarantee has a complementary certificate
(Theorem~\ref{thm:main-hard-core}).  If the forecaster's error remains large
for long enough, its mistake weights $w_t=|Y_t-p_t|$ form a
\emph{hard-core witness}: a smooth reweighting of the realized rounds on
which every weak hypothesis has low edge.  Thus persistent error explicitly
certifies that the weak-learning condition fails on the realized transcript.

Both guarantees can also be made strongly adaptive.  A variant using
$O(\log T)$ active copies of the same weak-class oracle satisfies both
guarantees, up to polylogarithmic factors, simultaneously on every contiguous
interval (Section~\ref{sec:interval-boosting}).  On each interval it competes
with the best span comparator for that interval; if the smooth weak-learning
condition holds on the interval, it obtains the strong-learning guarantee
there.  It also localizes the certificate above: whenever error remains large
on an interval, the mistake weights restricted to that interval form a local
hard-core witness.  Because the interval may be chosen after observing the
transcript, this identifies where and when the smooth weak-learning condition
fails, even if the condition holds on the full sequence.

The two guarantees are genuinely different: neither implies the other, and
prior work that provides either guarantee in isolation does not provide both.

\begin{informalproposition}[Separation; Appendix~\ref{sec:separation}]
Neither guarantee implies the other.  In one direction, for arbitrarily small
constants $\gamma>0$, there are binary-valued weak classes and transcripts on
which every reweighting has edge at least $\gamma$ --- so the weak-to-strong
guarantee forces vanishing Brier score and randomized error --- yet every
 score induced by the span has squared loss bounded below by a
constant.  For every fixed coefficient-norm budget, a constant lower bound
also remains after clipping the scores to valid probabilities.  Conversely,
there are transcripts with an arbitrarily small-loss comparator in the span
but a smooth reweighting on which every weak hypothesis has zero edge, so the
smooth weak-learning condition fails.
\end{informalproposition}

We also evaluate the Defensive Booster empirically
(Section~\ref{sec:experiments}): on synthetic datasets engineered to favor
either gradient boosting or weak-to-strong classification boosting, and on
four real binary prediction datasets.  The stronger baseline family depends
on the instance: gradient boosting can substantially outperform
weak-to-strong boosting, and vice versa.  On every instance, the Defensive
Booster is competitive with, and often outperforms, the stronger baseline,
while the ensemble baselines take $20$--$66\times$ as much time per round:
it maintains one online weak learner, whereas they each maintain $100$.

A more naive alternative is to run instances of each kind of comparison
booster in parallel and combine their probability forecasts with an online
aggregator such as multiplicative weights.  We include this method as a
baseline.  It must run every constituent booster, while the Defensive Booster
runs one weak-class learner.  Moreover, aggregation by Brier loss gives a
weaker classification-error guarantee: a Brier guarantee controls randomized
classification error only through
$T^{-1}\sum_t|Y_t-p_t|\le\sqrt{B_T}$, so an $O(\eps)$ Brier guarantee yields
only $O(\sqrt\eps)$ randomized error, whereas the Defensive Booster directly
guarantees $O(\eps)$ under the smooth weak-learning condition.  The
aggregation scheme also provides no hard-core witness.

Finally, we note that our algorithm also handles arbitrary bounded real-valued outcomes, and its
squared-loss span guarantee holds unchanged, just as it does in the binary setting.
Appendix~\ref{sec:bounded-outcomes} describes this in more detail, and evaluates this extension on three
chronological regression streams.

\subsection{Technique: playing the dual side of the boosting game}
\label{sec:intro-technique}

\paragraph{The boosting game and its two views.}
In the batch setting, weak-to-strong boosting can be viewed as a zero-sum
game between a \emph{learner} player, who plays distributions over the weak
class $\calH$, and a \emph{data} player, who plays reweightings of the
dataset \citep{FreundSchapire1996Game}.  A winning strategy for the learner
player is a distribution over weak hypotheses whose weighted majority vote
attains perfect classification.  A winning strategy for the data player is
a \emph{hard-core distribution}: a reweighting of the data on which no weak
hypothesis has nontrivial edge \citep{Impagliazzo1995HardCore}.  The weak
learning assumption says that the data player has no sufficiently smooth
winning strategy, and minimax duality then supplies a winning majority vote
for the learner player.  

Existing online weak-to-strong boosting operationalizes the \emph{primal}
view: run many copies of the weak learner in parallel and learn a weighted
combination of their predictions
\citep{ChenLinLu2012OnlineBoosting,BeygelzimerKaleLuo2015OptimalAdaptive}.
We operationalize the \emph{dual} view.  Our forecaster never forms an
ensemble: it maintains one online learner for $\calH$ and two scalar
adaptive-gradient states, for a per-round cost of one oracle call plus
$O(1)$ arithmetic.  It makes probability forecasts that are --- in particular ---
\emph{multiaccurate} with respect to $\calH$
\citep{HebertJohnsonKimReingoldRothblum2018,KimGhorbaniZou2019}: no weak
hypothesis correlates with its forecast residuals.  Multiaccuracy implies
that weighting each round by the forecaster's randomized prediction error
yields a hard-core distribution for $\calH$.  This is the
same correlation-to-hard-core principle underlying the complexity-theoretic
regularity lemma of \citet{TrevisanTulsianiVadhan2009Regularity} --- see also recent work deriving related guarantees from strengthenings of multiaccuracy like calibrated multiaccuracy and multicalibration 
\citep{CasacubertaGopalanKanadeReingold2025,CasacubertaDworkVadhan2024}.
If the prediction error is high, the weights are large on average and so this
hard-core distribution is smooth in the standard boosting sense.  If the
smooth weak-learning condition holds --- weak hypotheses
have nontrivial edge on every smooth reweighting --- no such distribution
exists.  By contrapositive, the randomized classification error must be
low.  The remainder of this subsection makes each step concrete.

\paragraph{Defensive forecasting with orthogonality auditors.}
Using the encoding above, write $\mu_t=2p_t-1$ for the signed forecast and
$r_t=\sigma_t-\mu_t=2(Y_t-p_t)$ for the residual.  The design principle is
\emph{defensive forecasting}
\citep{VovkTakemuraShafer2005DefensiveForecasting}: rather than minimizing
a loss, the forecaster chooses $p_t$ so that a designated family of
statistical tests --- \emph{auditors} --- cannot accumulate evidence
that the forecasts differ from true probabilities.  We use two kinds of
auditors.  A weak-class auditor enforces multiaccuracy with respect to
$\calH$: hypotheses $h\in\calH$ should have small empirical correlation
with the residuals $r_t$.  A self-auditor enforces self-orthogonality:
the forecast $\mu_t$ itself should have small empirical correlation with
its own residuals.  Self-orthogonality is implied by, but substantially weaker
than \emph{calibration}, which is important, as calibration is impossible to obtain in the online setting at the rates we desire \citep{QiaoValiant2021Sidestepping,DaganDaskalakisFishelsonGolowichKleinbergOkoroafor2025Calibration}. An adaptive-gradient
procedure maintains a convex combination of the auditors, which we call the
\emph{aggregated auditor}.  On each round the
forecast $\mu_t$ is chosen by a one-dimensional \emph{root rule}: a point where
the aggregated auditor gain, viewed as a function of the forecast, changes
sign, so that the realized gain is nonpositive no matter how the label is
realized (Lemma~\ref{lem:root-sign}); this is a simple instantiation of the
more general framework recently introduced by
\citet{FarinaPerdomo2026BlackBox}.  On every sequence,
$\sup_{h\in\calH}|\sum_t h(x_t)r_t|$ --- multiaccuracy --- and
$|\sum_t \mu_tr_t|$ --- self-orthogonality --- are each at most
$A\sqrt{S_T}+B$, with its own constants determined by the corresponding
online-learning primitive, where $S_T=\sum_tr_t^2$ is the residual energy
(Theorem~\ref{thm:residual-certificate}).

\paragraph{High error yields a smooth hard-core witness.}
The residuals describe the forecaster's own randomized classification
mistakes.  Identify the forecast $p_t$ with the randomized classifier that
predicts $1$ with probability $p_t$.  Its conditional mistake probability is
$w_t=|Y_t-p_t|$, and these mistake weights satisfy the key identity
$w_t\sigma_t=r_t/2$.  Interpret $w_t$ as weights for a reweighting of the
transcript.  Their ``density'' $\rho_w=T^{-1}\sum_t w_t$ is exactly the
randomized classification error, while the identity converts multiaccuracy
into an edge bound for the mistake weighting: no weak hypothesis correlates
nontrivially with it.  So if the forecaster's randomized error is large
enough for $w$ to be smooth, then $w$ is a smooth reweighting of the
transcript on which no weak learner has nontrivial ``edge'' over random
guessing --- exactly the kind of winning strategy for the data player that
the smooth weak-learning condition rules out.  By contrapositive, the
forecaster must have sufficiently low error to avoid this contradiction,
and this is what turns the weak-learning condition into a strong-learning
guarantee.  The self-bounding form of the certificate (error term
$\sqrt{S_T}$, with $S_T\le4T\rho_w$) turns this qualitative contradiction
into the target $1/(\gamma^2\eps)$ rate
(Theorem~\ref{thm:main-hard-core}).  Self-orthogonality is not used in this
part of the argument; the hard-core conclusion follows from multiaccuracy
alone.

\paragraph{Adding self-orthogonality gives the span guarantee.}
On the other hand, it is known that multiaccuracy with respect to $\calH$
together with self-orthogonality gives squared error competitive with
every model in the span of $\calH$: these are exactly the first-order
optimality conditions for squared loss (see e.g. its use in
\citep{KearnsRothRyu2025Networked}).  For a span comparator $f$, the excess
squared loss of our forecasts over $f$ is controlled by two correlation
terms: the residuals against $f$, which multiaccuracy bounds because $f$ is
a linear combination of weak hypotheses, and the residuals against our own
forecasts, which self-orthogonality bounds.  Self-orthogonality is one
additional scalar constraint, enforced by one additional auditor at no cost
in the rate, and the result is Brier loss competitive with the best
predictor in the span on every sequence (Theorem~\ref{thm:span-guarantee}).

\paragraph{Empirical preview.}
Figure~\ref{fig:intro-experiment} previews the real-data behavior for both
binary probability forecasting and bounded regression; the
main comparisons are in Section~\ref{sec:experiments}, and the full protocol
and additional experiments are in
Appendices~\ref{sec:experimental-appendix}--\ref{sec:adaptive-diagnostics}.
The binary baselines are
\textsc{OGB} \citep{BeygelzimerHazanKaleLuo2015OnlineGradient} for online
gradient boosting; \textsc{Online BBM} and \textsc{AdaBoost.OL}
\citep{BeygelzimerKaleLuo2015OptimalAdaptive} and
\textsc{OSBoost} \citep{ChenLinLu2012OnlineBoosting} for online weak-to-strong
boosting; and a Brier-loss aggregator that combines the four ensemble
forecasts by multiplicative weights.  The regression comparison is with OGB,
the baseline providing the corresponding squared-loss span guarantee.

\begin{figure}[!t]
\centering
\includegraphics[width=.96\linewidth]{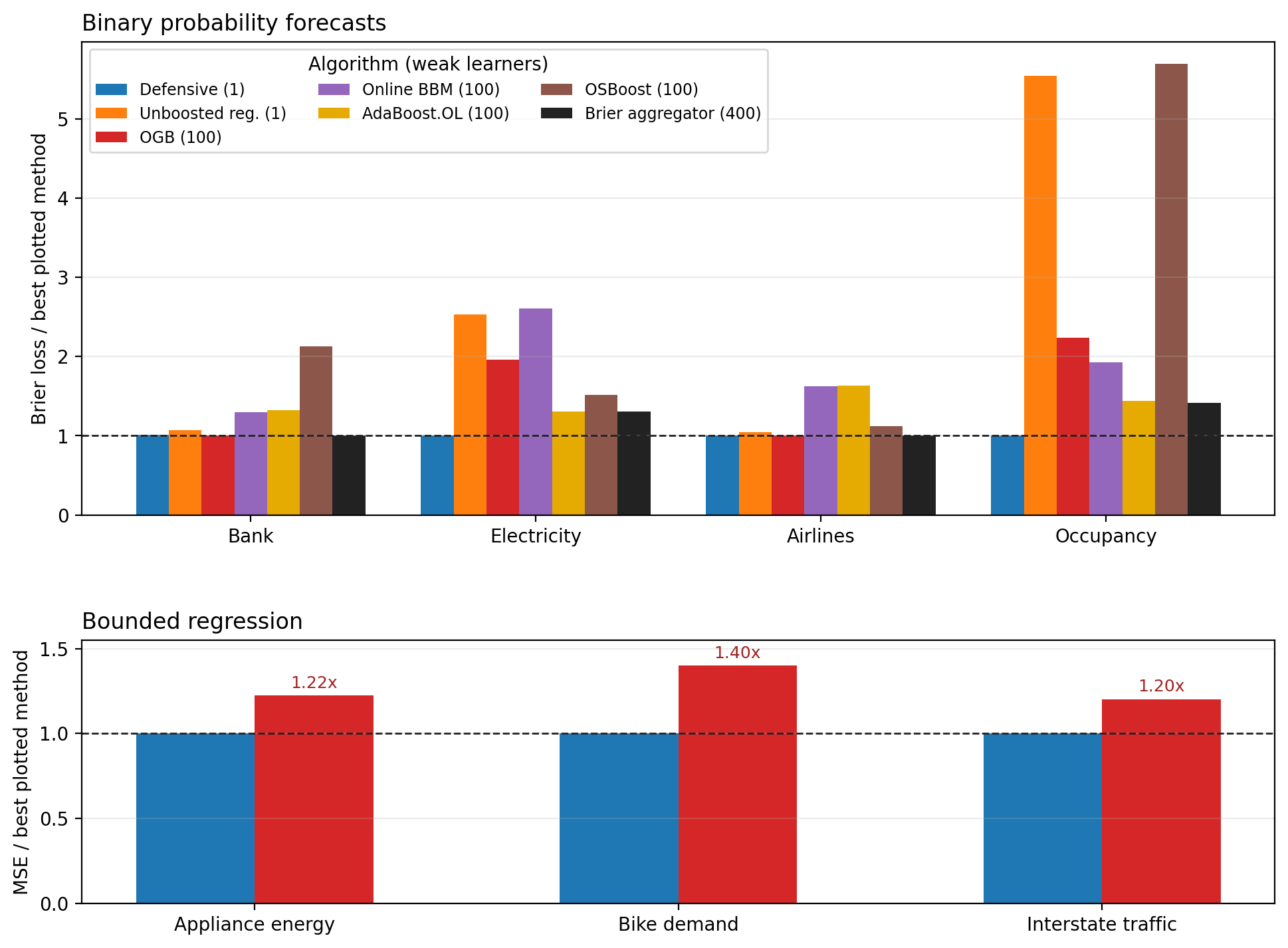}
\caption{Empirical preview (details in Section~\ref{sec:experiments}).
\emph{Top:} For each of four naturally ordered real binary streams, each
method's Brier loss is divided by the smallest loss among the methods shown.
The Defensive Booster
has the lowest Brier loss on Electricity and Occupancy, where it substantially
outperforms even the $400$-learner aggregator.  It essentially ties the
aggregator and OGB on Airlines and is within $.0010$ of the aggregator on
Bank, while maintaining one weak-class learner.  The hard-label unboosted
classifier is omitted for scale and reported in
Table~\ref{tab:real-brier-main}.  \emph{Bottom} (see Appendix~\ref{sec:regression-experiments}): On three chronological
bounded-regression streams, mean squared error is divided by the smaller loss
of the Defensive Booster and OGB.  The Defensive Booster reduces mean squared
error by $17$--$29\%$ while maintaining one weak learner rather than $100$.
In both panels lower is better, dashed lines mark the best plotted value, and
parentheses give the number of maintained online weak learners.}
\label{fig:intro-experiment}
\end{figure}

\paragraph{Organization.}
Section~\ref{sec:setting} defines the setting, the weak-class oracle, and the
fixed scalar update.  Section~\ref{sec:algorithm} gives the algorithm and the
multiaccuracy/self-orthogonality certificate.
Section~\ref{sec:main-guarantees} derives the Brier/span guarantee, the
hard-core mistake weighting, and the weak-to-strong corollary.
Section~\ref{sec:interval-boosting} gives a strongly adaptive variant whose
guarantees hold on every interval.
Section~\ref{sec:experiments} reports synthetic and real-data experiments,
and Section~\ref{sec:related-work} discusses related work.
Appendix~\ref{sec:separation} proves that the two guarantees are incomparable.
Appendix~\ref{sec:experimental-appendix} gives complete details for the binary
prediction experiments.
Appendix~\ref{sec:bounded-outcomes} extends the algorithm and its squared-loss span guarantee to bounded real-valued outcomes, and reports the
regression experiments. 
Finally, Appendix~\ref{sec:adaptive-diagnostics} evaluates
the strongly adaptive variant.

\section{Setting and algorithmic ingredients}
\label{sec:setting}

The interaction is adversarial and sequential.  On round $t$, the environment
reveals a context $x_t\in\mathcal X$.  The learner predicts
$p_t\in[0,1]$, interpreted as the probability of outcome $1$.  The
environment then reveals $Y_t\in\{0,1\}$, and the learner suffers Brier loss
$(Y_t-p_t)^2$.  The sequence may be adaptive to the learner's past
predictions.  We use binary outcomes for the probability and classification
interpretations; Appendix~\ref{sec:bounded-outcomes} generalizes to outcomes in
$[0,1]$.

The weak class $\calH$ uses the $[-1,1]$ encoding introduced above and
consists of functions $h:\mathcal X\to[-1,1]$.  We write
\[
  \sigma_t=2Y_t-1,
  \qquad
  \mu_t=2p_t-1,
  \qquad
  r_t=\sigma_t-\mu_t=2(Y_t-p_t).
\]
Thus $\mu_t$ is the $[-1,1]$-scaled forecast corresponding to the probability forecast
$p_t$.

\begin{definition}[Multiaccuracy and self-orthogonality]
\label{def:orthogonality}
For a realized forecast sequence, let
$r_t=\sigma_t-\mu_t$ denote the residual.  The forecasts are
$\alpha$-\emph{multiaccurate with respect to $\calH$} if
\[
  \sup_{h\in\calH}
  \left|
    \sum_{t=1}^T h(x_t)r_t
  \right|
  \le \alpha .
\]
They are $\beta$-\emph{self-orthogonal} if
\[
  \left|
    \sum_{t=1}^T \mu_t r_t
  \right|
  \le \beta .
\]
\end{definition}
After the affine encoding above, these are empirical versions of the
squared-loss orthogonality conditions used in e.g. \citet{KearnsRothRyu2025Networked}: multiaccuracy tests the residuals against an
external class of functions, while self-orthogonality tests the residuals
against the forecast itself.

We assume $\calH$ is symmetric: if $h\in\calH$, then $-h\in\calH$.
Otherwise one replaces $\calH$ by $\calH\cup(-\calH)$ and runs the weak
learner on both signs.

\begin{definition}[Norm-bounded span]
\label{def:norm-bounded-span}
For $\Lambda\ge0$, the $\Lambda$-norm-bounded span of $\calH$ is
\[
  \spanop_\Lambda(\calH)
  =
  \left\{
    f=\sum_{j=1}^m \alpha_j h_j:
    m<\infty,\ h_j\in\calH,\ \sum_{j=1}^m|\alpha_j|\le\Lambda
  \right\}.
\]
\end{definition}
The convex hull of $\calH$ is contained in $\spanop_1(\calH)$.
For every span comparator $f$, write
\[
  q_f(x)=\frac{1+f(x)}{2}.
\]
This is the affine rescaling of $f$ to the $\{0,1\}$ outcome scale on which
Brier loss is measured.  Since $f$ need not take values in $[-1,1]$, $q_f$
need not lie in $[0,1]$.

The algorithm has one problem-dependent online primitive: a weak-class
oracle for $\calH$.  We ask for a \emph{second-order} regret guarantee,
meaning that regret scales with the square root of the cumulative squared
coefficients rather than with $\sqrt T$.  The coefficients supplied to the
oracle will be the forecaster's residuals, so this gives a certificate whose
error scales with the residual energy
$S_T=\sum_t(\sigma_t-\mu_t)^2$ itself.  This self-bounding structure produces
the $1/(\gamma^2\eps)$ weak-to-strong sample complexity in
Corollary~\ref{cor:second-order-boosting}; a first-order $\sqrt T$ guarantee
would give only $1/(\gamma^2\eps^2)$.  The former dependence matches the rate
shown optimal in the prior weak-online-learning model
\citep{BeygelzimerKaleLuo2015OptimalAdaptive}, although our oracle model is
stronger.

\begin{definition}[Second-order weak-class oracle]
\label{def:weak-class-oracle}
A second-order weak-class oracle for $\calH$ has fixed constants
$a_{\calH},b_{\calH}\ge0$.  On round $t$, after observing $x_t$ but before
seeing a coefficient $c_t\in[-2,2]$, it outputs
$\widehat h_t\in[-1,1]$.  For every horizon $T$ and every resulting sequence,
it guarantees
\begin{equation}
\label{eq:weak-second-order}
  \sup_{h\in\calH}
  \sum_{t=1}^T c_t h(x_t)
  -
  \sum_{t=1}^T c_t\widehat h_t
  \le
  a_{\calH}\sqrt{\sum_{t=1}^T c_t^2}+b_{\calH}.
\end{equation}
The loss-minimization convention is obtained by replacing $c_t$ by $-c_t$.
\end{definition}

The assumption has standard instantiations.  If $\calH$ is finite, a
second-order experts algorithm with one expert per $h\in\calH$ gives
$a_{\calH}=O(\sqrt{\log|\calH|})$ and
$b_{\calH}=O(\log|\calH|)$
\citep{CesaBianchiMansourStoltz2007SecondOrder}.  More generally, adaptive
or scale-free online linear optimization over $\conv(\calH)$ gives
data-dependent regret in terms of cumulative gradient norms
\citep{Zinkevich2003OnlineConvexProgramming,KivinenSmolaWilliamson2004OnlineKernels,OrabonaPal2018ScaleFree}.
For example, for the RKHS ball
$\calH_B=\{x\mapsto\langle u,\phi(x)\rangle:\|u\|\le B\}$ with
$\|\phi(x)\|\le\kappa$ and $B\kappa\le1$, these methods give
$O(B\kappa\sqrt{\sum_t c_t^2})$ regret up to lower-order terms.

All remaining online machinery is class independent.  We use two copies of
the following fixed one-dimensional routine.

\begin{definition}[Scalar adaptive OGD]
\label{def:scalar-ogd}
Initialize $a_1=0$ and $V_0=4$.  On round $t$, output $a_t\in[-1,1]$.
After observing a coefficient $g_t\in[-2,2]$, update
\[
  a_{t+1}
  =
  \Pi_{[-1,1]}\!\left(a_t+\frac{g_t}{\sqrt{V_{t-1}}}\right),
  \qquad
  V_t=V_{t-1}+g_t^2,
\]
where $\Pi_{[-1,1]}$ denotes Euclidean projection onto $[-1,1]$.
\end{definition}

\begin{lemma}[Second-order scalar regret]
\label{lem:scalar-ogd}
Scalar adaptive OGD satisfies, for every $a\in[-1,1]$,
\[
  \sum_{t=1}^T(a-a_t)g_t
  \le
  4\sqrt{4+\sum_{t=1}^T g_t^2}
  \le
  4\sqrt{\sum_{t=1}^T g_t^2}+8.
\]
\end{lemma}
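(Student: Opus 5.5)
This is the standard analysis of projected online gradient ascent with adaptive step size $\eta_t = 1/\sqrt{V_{t-1}}$, which is nonincreasing in $t$. Fix $a\in[-1,1]$ and track the potential $(a-a_t)^2$.

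\emph{One-step inequality.} Since $\Pi_{[-1,1]}$ is nonexpansive and $a$ lies in $[-1,1]$,
\[
(a-a_{t+1})^2\le (a-a_t-\eta_t g_t)^2=(a-a_t)^2-2\eta_t g_t(a-a_t)+\eta_t^2g_t^2 .
\]
Rearranging gives
\[
(a-a_t)g_t\le \frac{(a-a_t)^2-(a-a_{t+1})^2}{2\eta_t}+\frac{\eta_t}{2}g_t^2 .
\]

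\emph{The telescoping term.} Sum over $t$ and apply Abel summation with the nonincreasing steps $\eta_t$. Because $(a-a_t)^2\le 4$, the telescoping part is bounded by
\[
\frac{4}{2\eta_T}=2\sqrt{V_{T-1}}\le 2\sqrt{V_T}.
\]

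\emph{The quadratic term.} We need to bound $\sum_t \frac{g_t^2}{2\sqrt{V_{t-1}}}$. Here the offset $V_0=4$ matters. Since $g_t^2\le 4\le V_{t-1}$, we have $V_t\le 2V_{t-1}$, so
\[
\frac{1}{\sqrt{V_{t-1}}}\le \frac{\sqrt2}{\sqrt{V_t}} .
\]
The standard inequality
\[
\frac{g_t^2}{\sqrt{V_t}}\le 2\left(\sqrt{V_t}-\sqrt{V_{t-1}}\right)
\]
then telescopes. This bounds the quadratic term by $\sqrt2\sqrt{V_T}$.

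\emph{Combining.} The total is at most $(2+\sqrt2)\sqrt{V_T}\le 4\sqrt{4+\sum_t g_t^2}$, which is the first inequality. The second follows from $\sqrt{x+y}\le\sqrt x+\sqrt y$, giving $4\sqrt{4+\sum_t g_t^2}\le 4\sqrt{\sum_t g_t^2}+8$.

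The only delicate point is the step-size bookkeeping. The step $\eta_t$ uses $V_{t-1}$ rather than $V_t$, so the quadratic term needs the doubling argument $V_t\le 2V_{t-1}$, which in turn relies on the initialization $V_0=4$ and the bound $|g_t|\le 2$. Everything else is routine, and the constant $4$ leaves slack.
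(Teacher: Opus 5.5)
Your proof is correct and follows the paper's argument: the same one-step inequality from nonexpansiveness of the projection, the same $2/\eta_T\le 2\sqrt{V_T}$ bound on the telescoping part, and the same use of $V_t\le 2V_{t-1}$ (from $V_0=4$ and $|g_t|\le 2$) to control the quadratic term. The only difference is in that last step: the paper writes $g_t^2/\sqrt{V_{t-1}}=(1+\sqrt{V_t/V_{t-1}})(\sqrt{V_t}-\sqrt{V_{t-1}})$ to get $\tfrac54\sqrt{V_T}$, whereas your route through $1/\sqrt{V_t}$ gives the slightly looser $\sqrt2\,\sqrt{V_T}$, which still fits under the constant $4$.
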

The proof is given in Appendix~\ref{app:scalar-regret}.

\section{The Defensive Booster}
\label{sec:algorithm}

The Defensive Booster combines the weak-class oracle with two copies of the
fixed scalar adaptive-OGD routine.  The only forecasting step is a
one-dimensional root rule on the signed mean $\mu_t$.
The labels of the two scalar states describe their roles: $\mathsf S$ controls
the self-auditor used to establish self-orthogonality, while $\mathsf A$
aggregates the weak-class and self auditors.

\begin{definition}[Root rule]
\label{def:root-rule}
For a continuous function $F:[-1,1]\to\R$, let $\Root(F)$ be any point
selected as follows:
\begin{enumerate}[label=(\roman*),leftmargin=2em]
\item if $F$ has a zero in $[-1,1]$, return any such zero;
\item if $F$ is positive throughout $[-1,1]$, return $1$;
\item if $F$ is negative throughout $[-1,1]$, return $-1$.
\end{enumerate}
\end{definition}
Continuity ensures that exactly one of these cases applies.

\begin{lemma}[Root sign property]
\label{lem:root-sign}
For every continuous $F:[-1,1]\to\R$, every signed label
$\sigma\in[-1,1]$, and $\mu=\Root(F)$,
\[
  F(\mu)(\sigma-\mu)\le0.
\]
\end{lemma}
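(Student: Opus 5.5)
The plan is a direct case analysis following the three cases of Definition~\ref{def:root-rule}. Exactly one case applies, so it suffices to verify the inequality $F(\mu)(\sigma-\mu)\le 0$ separately in each case for arbitrary $\sigma\in[-1,1]$.

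In case (i), $\mu$ is a zero of $F$. Then $F(\mu)=0$, so the product is $0\le 0$ for every $\sigma$.

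In case (ii), $F>0$ on all of $[-1,1]$ and $\mu=1$. Then $F(\mu)>0$, and $\sigma-\mu=\sigma-1\le 0$ because $\sigma\le 1$. The product is therefore nonpositive.

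Case (iii) is symmetric. Here $F<0$ throughout and $\mu=-1$, so $F(\mu)<0$. Also $\sigma-\mu=\sigma+1\ge 0$ because $\sigma\ge -1$. Again the product is nonpositive.

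The only subtlety is that the three cases really are exhaustive, which is what lets the proof avoid a fourth case. This is the remark after Definition~\ref{def:root-rule}. If $F$ has no zero on $[-1,1]$, then by the intermediate value theorem (using continuity of $F$ on the connected interval) $F$ cannot take both signs. So $F$ is either strictly positive or strictly negative everywhere, which is case (ii) or (iii).

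Nothing in the argument depends on $\sigma$ being binary. The bound uses only $\sigma\in[-1,1]$, so it also covers the bounded-outcome extension.
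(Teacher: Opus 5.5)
Your proof is correct and follows the paper's argument: the same three-case check on the root rule, using $\sigma\le 1$ when $\mu=1$ and $\sigma\ge -1$ when $\mu=-1$. Your intermediate-value justification that the cases are exhaustive is the continuity remark the paper makes right after Definition~\ref{def:root-rule}.
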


\begin{proof}
If $F(\mu)=0$, the claim is immediate.  If $F$ is positive throughout the
interval, then $\mu=1$ and $\sigma-\mu\le0$.  If $F$ is negative throughout,
then $\mu=-1$ and $\sigma-\mu\ge0$.
\end{proof}

\begin{algorithm}[H]
\caption{Defensive Booster}
\label{alg:defensive}
\begin{algorithmic}[1]
\State Initialize the weak-class oracle over $\calH$ and two independent
       copies $\mathsf S$ (the self-auditor state) and $\mathsf A$
       (the auditor-aggregation state) of scalar adaptive OGD
       (Definition~\ref{def:scalar-ogd}).
\For{$t=1,2,\ldots$}
  \State Observe $x_t$.  Obtain $\widehat h_t\in[-1,1]$ from the weak
         oracle, $\theta_t\in[-1,1]$ from $\mathsf S$, and
         $\lambda_t\in[-1,1]$ from $\mathsf A$.
  \State Set
         $q_{H,t}=(1+\lambda_t)/2$ and
         $q_{S,t}=(1-\lambda_t)/2$.
  \State Form $F_t(\mu)=q_{H,t}\widehat h_t+q_{S,t}\theta_t\mu$ and set
         $\mu_t=\Root(F_t)$.
         \Comment{Definition~\ref{def:root-rule}}
  \State Forecast $p_t=(1+\mu_t)/2$.
  \State Observe $Y_t\in\{0,1\}$; set $\sigma_t=2Y_t-1$ and
         $r_t=\sigma_t-\mu_t$.
  \State Set
         $z_{H,t}=\widehat h_tr_t$,
         $z_{S,t}=\theta_t\mu_tr_t$,
         $u_t=\mu_tr_t$, and
         $v_t=(z_{H,t}-z_{S,t})/2$.
  \State Update the weak-class oracle with $c_t=r_t$, update $\mathsf S$
         with $u_t$, and update $\mathsf A$ with $v_t$.
\EndFor
\end{algorithmic}
\end{algorithm}

Since $F_t$ is affine, the root is computed in constant time: return
$-q_{H,t}\widehat h_t/(q_{S,t}\theta_t)$ when this ratio is defined and lies
in $[-1,1]$, return $0$ when $F_t$ is identically zero, and otherwise return
the endpoint prescribed by Definition~\ref{def:root-rule}.  The per-round
cost is one oracle prediction/update plus $O(1)$ arithmetic.  The resulting
forecast need not be a linear combination or weighted vote of weak
hypotheses: the algorithm predicts a probability directly rather than
maintaining an explicit ensemble.  The algorithm can be viewed as a simple one-dimensional, deterministic instance of the online-learning and
variational-inequality framework of \citet{FarinaPerdomo2026BlackBox}: their
forecast-dependent variational inequality is solved here by an exact root of
the affine function $F_t$.

\paragraph{Multiaccuracy and self-orthogonality guarantees.}
The weak-class and self auditors have gains
$z_{H,t}=\widehat h_tr_t$ and $z_{S,t}=\theta_t\mu_tr_t$.
The scalar state $\mathsf A$ chooses their convex weights: writing
$v_t=(z_{H,t}-z_{S,t})/2$, its two endpoint comparators
$\lambda=1$ and $\lambda=-1$ correspond exactly to always selecting the
weak-class auditor and the self auditor, respectively.  The root rule makes
the resulting weighted gain nonpositive on every round, regardless of the
label.  Lemma~\ref{lem:scalar-ogd} therefore forces each auditor's cumulative
gain to be small.  The weak-class oracle transfers this to multiaccuracy,
while the scalar state $\mathsf S$ transfers the self-auditor bound to
self-orthogonality.  Since $|u_t|,|v_t|\le|r_t|$, every error term scales with
$\sqrt{S_T}$.

\begin{theorem}[Second-order multiaccuracy and self-orthogonality]
\label{thm:residual-certificate}
For every adaptive sequence with $Y_t\in\{0,1\}$, let
\[
  S_T=\sum_{t=1}^T (\sigma_t-\mu_t)^2
      =4\sum_{t=1}^T (Y_t-p_t)^2 .
\]
The Defensive Booster (Algorithm~\ref{alg:defensive}) is
$(A_H\sqrt{S_T}+B_H)$-multiaccurate with respect to $\calH$:
\[
  \sup_{h\in\calH}
  \left|
    \sum_{t=1}^T h(x_t)(\sigma_t-\mu_t)
  \right|
  \le
  A_H\sqrt{S_T}+B_H
\]
and $(A_S\sqrt{S_T}+B_S)$-self-orthogonal:
\[
  \left|
    \sum_{t=1}^T \mu_t(\sigma_t-\mu_t)
  \right|
  \le
  A_S\sqrt{S_T}+B_S,
\]
where
\[
  A_H=a_{\calH}+4,\qquad B_H=b_{\calH}+8,
  \qquad
  A_S=8,\qquad B_S=16.
\]
\end{theorem}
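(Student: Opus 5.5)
The plan is to turn the per-round sign property of the root rule into a cumulative bound on each auditor's gain, using the two endpoint comparators of $\mathsf A$. Then transfer these bounds to multiaccuracy through the weak-class oracle, and to self-orthogonality through $\mathsf S$.

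First, I check that every coefficient fed to an online primitive lies in $[-2,2]$ and is dominated by $|r_t|$. Since $\sigma_t,\mu_t\in[-1,1]$, we have $r_t\in[-2,2]$. Since $|\widehat h_t|,|\theta_t|,|\mu_t|\le 1$, we get $|z_{H,t}|,|z_{S,t}|,|u_t|\le|r_t|$, and hence $|v_t|\le|r_t|$. Therefore $\sum_t v_t^2\le S_T$ and $\sum_t u_t^2\le S_T$, so every second-order regret term below is at most $4\sqrt{S_T}+8$ by Lemma~\ref{lem:scalar-ogd}, or $a_{\calH}\sqrt{S_T}+b_{\calH}$ for the oracle.

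The key step is a decomposition of the aggregated gain. Apply Lemma~\ref{lem:root-sign} with $F=F_t$ and $\sigma=\sigma_t$; since $F_t(\mu_t)r_t=q_{H,t}z_{H,t}+q_{S,t}z_{S,t}$, this gives $q_{H,t}z_{H,t}+q_{S,t}z_{S,t}\le 0$. Writing $m_t=(z_{H,t}+z_{S,t})/2$, the aggregated gain equals $m_t+\lambda_t v_t$. Since $z_{H,t}=m_t+v_t$ and $z_{S,t}=m_t-v_t$, we obtain
\[
\sum_t z_{H,t}=\sum_t(m_t+\lambda_tv_t)+\sum_t(1-\lambda_t)v_t\le \sum_t(1-\lambda_t)v_t\le 4\sqrt{S_T}+8,
\]
using Lemma~\ref{lem:scalar-ogd} for $\mathsf A$ with comparator $a=1$. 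Symmetrically, $\sum_t z_{S,t}\le\sum_t(-1-\lambda_t)v_t\le 4\sqrt{S_T}+8$ with comparator $a=-1$. Identifying this decomposition is the main conceptual point: the endpoint comparators of $\mathsf A$ isolate each auditor. The rest is bookkeeping.

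For multiaccuracy, I apply \eqref{eq:weak-second-order} with $c_t=r_t$:
\[
\sup_h\sum_t h(x_t)r_t\le\sum_t z_{H,t}+a_{\calH}\sqrt{S_T}+b_{\calH}\le(a_{\calH}+4)\sqrt{S_T}+b_{\calH}+8.
\]
Symmetry of $\calH$ (replacing $h$ by $-h$) upgrades this one-sided bound to the absolute value.

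For self-orthogonality, I apply Lemma~\ref{lem:scalar-ogd} to $\mathsf S$ with $g_t=u_t$ and comparators $a=\pm1$. This gives $\pm\sum_t u_t\le\sum_t\theta_tu_t+4\sqrt{S_T}+8$. Since $\sum_t\theta_tu_t=\sum_t z_{S,t}\le4\sqrt{S_T}+8$, it follows that $|\sum_t\mu_tr_t|\le 8\sqrt{S_T}+16$. These are exactly the claimed constants $A_S=8$ and $B_S=16$.

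Finally, adaptivity of the adversary causes no difficulty. All three guarantees are deterministic regret bounds that hold for every coefficient sequence, and the root-rule inequality holds for every realized label.
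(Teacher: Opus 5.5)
Your proposal is correct and follows the paper's proof essentially step for step. Both arguments write the aggregated gain as $\bar z_t+\lambda_t v_t\le 0$ and use the endpoint comparators $\lambda=\pm1$ of $\mathsf A$ to bound each auditor's cumulative gain by $4\sqrt{S_T}+8$. Both then transfer the bounds through the weak-class oracle (using symmetry for the absolute value) and through $\mathsf S$, arriving at the same constants $A_H=a_{\calH}+4$, $B_H=b_{\calH}+8$, $A_S=8$, $B_S=16$.
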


\begin{proof}
Let
\[
  \bar z_t=\frac{z_{H,t}+z_{S,t}}2,
  \qquad
  v_t=\frac{z_{H,t}-z_{S,t}}2.
\]
The gain selected by $\mathsf A$ is
\[
  \bar z_t+\lambda_tv_t
  =
  q_{H,t}z_{H,t}+q_{S,t}z_{S,t}
  =
  F_t(\mu_t)(\sigma_t-\mu_t)
  \le0
\]
by Lemma~\ref{lem:root-sign}.  Competing in
Lemma~\ref{lem:scalar-ogd} with $\lambda=1$ and $\lambda=-1$ gives,
respectively,
\[
  \sum_{t=1}^T z_{H,t}
  -
  \sum_{t=1}^T(\bar z_t+\lambda_tv_t)
  \le
  4\sqrt{4+\sum_{t=1}^T v_t^2}
\]
and the same bound with $z_{S,t}$ in place of $z_{H,t}$.
Since $|v_t|\le|r_t|$ and the selected cumulative gain is nonpositive,
\begin{equation}
\label{eq:auditor-gains}
  \sum_{t=1}^T z_{H,t},
  \quad
  \sum_{t=1}^T z_{S,t}
  \le
  4\sqrt{S_T}+8.
\end{equation}

The weak-class oracle is updated with $c_t=r_t$.  Hence, for every
$h\in\calH$,
\[
  \sum_{t=1}^T h(x_t)r_t
  \le
  \sum_{t=1}^T\widehat h_tr_t
  +
  a_{\calH}\sqrt{S_T}+b_{\calH}
  \le
  (a_{\calH}+4)\sqrt{S_T}+b_{\calH}+8.
\]
Symmetry of $\calH$ gives the absolute-value multiaccuracy bound.

For self-orthogonality, $\mathsf S$ is updated with
$u_t=\mu_tr_t$.  For every $\theta\in[-1,1]$,
Lemma~\ref{lem:scalar-ogd} and \eqref{eq:auditor-gains} give
\[
  \sum_{t=1}^T\theta\mu_tr_t
  \le
  \sum_{t=1}^T\theta_t\mu_tr_t
  +
  4\sqrt{4+\sum_{t=1}^T\mu_t^2r_t^2}
  \le
  8\sqrt{S_T}+16.
\]
Taking the supremum over $\theta\in[-1,1]$ yields
$|\sum_t\mu_tr_t|$.
\end{proof}

\section{Main guarantees}
\label{sec:main-guarantees}

This section derives the paper's guarantees from the two inequalities in
Theorem~\ref{thm:residual-certificate}.  They are parallel consequences,
not consequences of one another: the hard-core statement uses multiaccuracy
alone, whereas the span statement also uses self-orthogonality.
Section~\ref{sec:span-guarantee} gives the Brier/span-regret guarantee, which
holds on every sequence.  Section~\ref{sec:hard-core} shows that the
multiaccuracy bound controls the edge of the forecaster's own mistake
weighting.
Section~\ref{sec:smooth-boosting} turns the edge bound into the weak-to-strong
boosting statement.  Appendix~\ref{sec:separation} gives
examples showing that the two guarantees are incomparable.

\subsection{Brier/span-regret guarantee}
\label{sec:span-guarantee}

\begin{theorem}[Brier/span guarantee]
\label{thm:span-guarantee}
For every adaptive binary sequence, the predictions of the Defensive
Booster (Algorithm~\ref{alg:defensive}) satisfy, for every
$f\in\spanop_\Lambda(\calH)$,
\[
  B_T
  :=
  \frac1T\sum_{t=1}^T (Y_t-p_t)^2
  \le
  \frac1T\sum_{t=1}^T (Y_t-q_f(x_t))^2
  +
  \frac{\Lambda A_H+A_S}{\sqrt T}\sqrt{B_T}
  +
  \frac{\Lambda B_H+B_S}{2T}.
\]
In particular, since $B_T\le1$,
\[
  B_T
  \le
  \frac1T\sum_{t=1}^T (Y_t-q_f(x_t))^2
  +
  \frac{\Lambda A_H+A_S}{\sqrt T}
  +
  \frac{\Lambda B_H+B_S}{2T}.
\]
The same bound holds for every $f\in\conv(\calH)$ with $\Lambda=1$.
\end{theorem}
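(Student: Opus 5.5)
The plan is to work in the signed encoding and expand the squared-loss difference algebraically, so that the excess loss is expressed through exactly the two correlations controlled by Theorem~\ref{thm:residual-certificate}. Since $Y_t-p_t=r_t/2$ and $Y_t-q_f(x_t)=(\sigma_t-f(x_t))/2$, we have $4\sum_t(Y_t-q_f(x_t))^2=\sum_t(\sigma_t-f_t)^2$ with $f_t=f(x_t)$. Writing $\sigma_t-f_t=r_t+(\mu_t-f_t)$ and expanding gives
\[
  \sum_t(\sigma_t-f_t)^2
  =
  S_T+2\sum_t(\mu_t-f_t)r_t+\sum_t(\mu_t-f_t)^2
  \ge
  S_T+2\sum_t\mu_tr_t-2\sum_t f_tr_t .
\]
Hence $S_T\le\sum_t(\sigma_t-f_t)^2+2\sum_t f_tr_t-2\sum_t\mu_tr_t$. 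Dropping the nonnegative term $\sum_t(\mu_t-f_t)^2$ is what makes the comparator unrestricted: no clipping of $q_f$ is needed.

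Next we bound the two correlation terms. Writing $f=\sum_j\alpha_jh_j$ with $\sum_j|\alpha_j|\le\Lambda$, linearity and the triangle inequality give $|\sum_t f_tr_t|\le\sum_j|\alpha_j|\,|\sum_t h_j(x_t)r_t|\le\Lambda(A_H\sqrt{S_T}+B_H)$ by the multiaccuracy bound. Self-orthogonality gives $|\sum_t\mu_tr_t|\le A_S\sqrt{S_T}+B_S$. Combining,
\[
  S_T\le\sum_t(\sigma_t-f_t)^2+2(\Lambda A_H+A_S)\sqrt{S_T}+2(\Lambda B_H+B_S).
\]

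Finally we convert back to the Brier scale. Substituting $S_T=4TB_T$ and $\sqrt{S_T}=2\sqrt{TB_T}$ and dividing by $4T$ yields
\[
  B_T\le\frac1T\sum_t(Y_t-q_f(x_t))^2+\frac{\Lambda A_H+A_S}{\sqrt T}\sqrt{B_T}+\frac{\Lambda B_H+B_S}{2T},
\]
which is the stated bound. The ``in particular'' form follows from $\sqrt{B_T}\le1$, since $p_t,Y_t\in[0,1]$. For $f\in\conv(\calH)$ the coefficients are nonnegative and sum to $1$, so $f\in\spanop_1(\calH)$ and we take $\Lambda=1$.

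There is no serious obstacle. The only points needing care are the factor-of-$4$ and factor-of-$2$ bookkeeping between the encodings, and checking that the sign of the cross term $\sum_t\mu_tr_t$ does not matter, since both terms are bounded in absolute value.
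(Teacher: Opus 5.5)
Your proposal is correct and follows the paper's proof essentially step for step. Your expansion that drops $\sum_t(\mu_t-f_t)^2$ is exactly the paper's convexity inequality $(\sigma_t-\mu_t)^2\le(\sigma_t-f(x_t))^2+2r_t(f(x_t)-\mu_t)$. The two correlation bounds from Theorem~\ref{thm:residual-certificate}, and the division by $4T$ using $S_T=4TB_T$, are also handled identically.
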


\begin{proof}
Let $r_t=\sigma_t-\mu_t$.  Convexity of
$\mu\mapsto(\sigma_t-\mu)^2$ gives
\[
  (\sigma_t-\mu_t)^2
  \le
  (\sigma_t-f(x_t))^2+2r_t(f(x_t)-\mu_t).
\]
If $f=\sum_{j=1}^m\alpha_jh_j$ with
$\sum_j|\alpha_j|\le\Lambda$, Theorem~\ref{thm:residual-certificate} gives
\[
  \left|\sum_{t=1}^T f(x_t)r_t\right|
  \le
  \Lambda(A_H\sqrt{S_T}+B_H).
\]
The same theorem gives
$|\sum_t\mu_tr_t|\le A_S\sqrt{S_T}+B_S$.  Substituting these two bounds into
the signed convexity inequality gives
\[
  S_T
  \le
  \sum_{t=1}^T(\sigma_t-f(x_t))^2
  +
  2(\Lambda A_H+A_S)\sqrt{S_T}
  +
  2(\Lambda B_H+B_S).
\]
Divide by $4T$ and use
$S_T=4TB_T$ and
$(\sigma_t-f(x_t))^2=4(Y_t-q_f(x_t))^2$.  Since
$B_T\le1$, the second bound follows.
\end{proof}

\begin{corollary}[Low-loss span guarantee]
\label{cor:low-loss-span}
In the setting of Theorem~\ref{thm:span-guarantee}, write
\[
  B_f=\frac1T\sum_{t=1}^T (Y_t-q_f(x_t))^2,
  \qquad
  C=\Lambda A_H+A_S,
  \qquad
  D=\Lambda B_H+B_S .
\]
Then
\[
  B_T
  \le
  B_f+C\sqrt{\frac{B_f}{T}}+\frac{3C^2}{2T}+\frac{3D}{4T}.
\]
In particular, if $B_f=0$, then $B_T=O((C^2+D)/T)$.
\end{corollary}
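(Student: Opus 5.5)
The plan is to treat the first (second-order) inequality of Theorem~\ref{thm:span-guarantee} as a quadratic inequality in $x=\sqrt{B_T}$ and solve it. Set
\[
  a=\frac{C}{\sqrt T},\qquad c=B_f+\frac{D}{2T}.
\]
With these, Theorem~\ref{thm:span-guarantee} reads $x^2\le a x+c$ with $x\ge0$, $a\ge0$, $c\ge0$.

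The first step is to take the nonnegative root of this quadratic:
\[
  x\le\frac{a+\sqrt{a^2+4c}}{2}.
\]
Squaring gives
\[
  B_T\le \frac{a^2}{2}+c+\frac a2\sqrt{a^2+4c}.
\]
To avoid a nested square root, I will use subadditivity, $\sqrt{a^2+4c}\le a+2\sqrt c$. This yields
\[
  B_T\le a^2+c+a\sqrt c.
\]

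The second step is to split $\sqrt c\le\sqrt{B_f}+\sqrt{D/(2T)}$, again by subadditivity of the square root. The term $a\sqrt{B_f}$ is exactly $C\sqrt{B_f/T}$. The cross term is $a\sqrt{D/(2T)}=C\sqrt{D/2}/T$. I will absorb it by AM--GM, $C\sqrt{D/2}\le C^2/2+D/4$.

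Collecting the pieces gives
\[
  B_T\le B_f+\frac{D}{2T}+\frac{C^2}{T}+C\sqrt{\frac{B_f}{T}}+\frac{C^2}{2T}+\frac{D}{4T}.
\]
This is exactly $B_f+C\sqrt{B_f/T}+\tfrac{3C^2}{2T}+\tfrac{3D}{4T}$. The ``in particular'' claim then follows immediately by setting $B_f=0$.

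There is no real obstacle here. The only delicate point is choosing the AM--GM split so that the constants come out as exactly $3/2$ and $3/4$, and the split above does so. Everything else is elementary algebra on the bound already established in Theorem~\ref{thm:span-guarantee}.
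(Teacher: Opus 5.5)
Your proof is correct and is essentially the paper's argument. The paper completes the square in $u=\sqrt{B_T}$, which is equivalent to your quadratic-formula root. It then applies subadditivity of the square root twice and absorbs the cross term with the same AM--GM step $a\sqrt{d}\le (a^2+d)/2$, where $d=D/(2T)$, giving the same constants $3/2$ and $3/4$.
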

The proof is given in Appendix~\ref{app:full-horizon-derivations}.

\subsection{The hard-core mistake weighting}
\label{sec:hard-core}

\begin{definition}[Reweighting, smoothness, and edge]
\label{def:smooth-edge}
A \emph{reweighting} of the realized transcript is a sequence
$w_t\in[0,1]$.  Its \emph{density} is
\[
  \rho(w)=\frac1T\sum_{t=1}^T w_t .
\]
For $\rho>0$, the reweighting is \emph{$\rho$-smooth} if
$\rho(w)\ge\rho$.
When $\sum_t w_t>0$, its \emph{normalized edge} against $\calH$ is
\[
  \edge_{\calH}(w)
  =
  \sup_{h\in\calH}
  \left|
    \frac{\sum_{t=1}^T w_t\sigma_th(x_t)}{\sum_{t=1}^T w_t}
  \right|.
\]
For $\rho,\gamma>0$, the transcript satisfies the
$(\rho,\gamma)$-\emph{smooth weak-learning condition} if every
$\rho$-smooth reweighting $w$ has $\edge_{\calH}(w)\ge\gamma$.
\end{definition}

When $\calH\subseteq\{-1,+1\}^{\mathcal X}$ is a class of binary-valued
classifiers, let
\[
  \operatorname{err}_w(h)
  =
  \frac{\sum_{t=1}^T w_t\mathbf 1\{h(x_t)\ne\sigma_t\}}
       {\sum_{t=1}^T w_t}
\]
denote the weighted error of $h\in\calH$.  Its normalized weighted correlation
is $1-2\operatorname{err}_w(h)$.  Because
$\operatorname{err}_w(-h)=1-\operatorname{err}_w(h)$,
\[
  \left|1-2\operatorname{err}_w(h)\right|
  =
  1-2\min\{\operatorname{err}_w(h),\operatorname{err}_w(-h)\}.
\]
Thus the absolute value in the edge compares $h$ with the classifier $-h$
obtained by flipping all of $h$'s predictions.  In particular,
$\edge_{\calH}(w)\ge\gamma$ means that, for some $h\in\calH$, either $h$ or
$-h$ has weighted error at most $(1-\gamma)/2$.  If $\calH$ is closed under
negation, both orientations are members of $\calH$.

\paragraph{Relation to smooth distributions.}
The condition in Definition~\ref{def:smooth-edge} is ex post: it is a
property of the realized transcript, not an input to the forecaster.  If $w$
is $\rho$-smooth, then its normalization
\[
  D_w(t)=\frac{w_t}{\sum_s w_s}
\]
is a distribution on the rounds satisfying $D_w(t)\le 1/(\rho T)$.
Conversely, any distribution $D$ on $[T]$ with
$D(t)\le 1/(\rho T)$ is represented by the $\rho$-smooth reweighting
$w_t=\rho T D(t)$.  Thus bounded smooth reweightings are exactly the
unnormalized form of the smooth distributions used by SmoothBoost
\citep{Servedio2003SmoothBoosting} and in boosting-based hard-core
constructions
\citep{KlivansServedio2003BoostingHardCore,BarakHardtKale2009UniformHardcore}.
The Defensive Booster does not maintain a distribution over past rounds or
train separate weak learners on different reweightings.  It sends the current
signed residual $2(Y_t-p_t)$ to one weak-class learner, as required to obtain
multiaccuracy.  The sequence $w_t=|Y_t-p_t|$ is interpreted only after the
fact as the witness analyzed below.

\begin{theorem}[Hard-core mistake weighting]
\label{thm:main-hard-core}
Let $p_1,\ldots,p_T$ be the probability forecasts of the Defensive
Booster, and let
\[
  B_T=\frac1T\sum_{t=1}^T(Y_t-p_t)^2 .
\]
Define the randomized mistake weights
\[
  w_t=Y_t(1-p_t)+(1-Y_t)p_t=|Y_t-p_t|,
  \qquad
  \rho_w=\frac1T\sum_{t=1}^T w_t .
\]
Then $w_t\in[0,1]$,
\[
  B_T=\frac1T\sum_{t=1}^T w_t^2\le\rho_w,
\]
and every $h\in\calH$ satisfies
\[
  \left|
    \frac1T\sum_{t=1}^T w_t\sigma_th(x_t)
  \right|
  \le
  \frac{A_H\sqrt{TB_T}+B_H/2}{T}.
\]
Consequently, if $\rho_w>0$ then
\[
  \edge_{\calH}(w)
  \le
  \frac{A_H\sqrt{TB_T}+B_H/2}{T\rho_w}.
\]
\end{theorem}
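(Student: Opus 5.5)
The plan is to reduce everything to the multiaccuracy half of Theorem~\ref{thm:residual-certificate} through the pointwise identity $w_t\sigma_t=r_t/2$. The steps are elementary facts about $w_t$, this identity, a rescaling of the multiaccuracy bound, and finally normalization by $\sum_t w_t$.

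\emph{Elementary facts.} Since $Y_t\in\{0,1\}$ and $p_t\in[0,1]$, a case split on $Y_t$ gives $Y_t(1-p_t)+(1-Y_t)p_t=|Y_t-p_t|\in[0,1]$. Consequently $w_t^2=(Y_t-p_t)^2$, so $B_T=\frac1T\sum_t w_t^2$. Because $w_t\in[0,1]$ we have $w_t^2\le w_t$, which gives $B_T\le\rho_w$.

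\emph{The key identity.} I will check $w_t\sigma_t=r_t/2$ case by case. If $Y_t=1$, then $\sigma_t=1$, $w_t=1-p_t$, and $r_t=2(1-p_t)$. If $Y_t=0$, then $\sigma_t=-1$, $w_t=p_t$, and $r_t=-2p_t$. In both cases $w_t\sigma_t=r_t/2$, and hence
\[
\sum_{t=1}^T w_t\sigma_t h(x_t)=\tfrac12\sum_{t=1}^T h(x_t)r_t .
\]

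\emph{Applying multiaccuracy.} Theorem~\ref{thm:residual-certificate} bounds the absolute value of the right-hand sum (before the factor $\tfrac12$) by $A_H\sqrt{S_T}+B_H$, with $S_T=4TB_T$ and so $\sqrt{S_T}=2\sqrt{TB_T}$. Halving gives
\[
\Bigl|\sum_t w_t\sigma_t h(x_t)\Bigr|\le A_H\sqrt{TB_T}+B_H/2 .
\]
Dividing by $T$ yields the per-hypothesis bound in the statement.

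\emph{The edge bound.} When $\rho_w>0$ we have $\sum_t w_t=T\rho_w>0$, so the normalized edge is defined. Dividing the bound above by $T\rho_w$ and taking the supremum over $h\in\calH$ gives the stated bound on $\edge_{\calH}(w)$.

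There is no real obstacle here. The only step needing care is getting the constants right, namely the factor $\tfrac12$ in the identity combined with $\sqrt{S_T}=2\sqrt{TB_T}$, which together produce exactly $A_H\sqrt{TB_T}+B_H/2$.
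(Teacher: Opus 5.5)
Your proposal is correct and follows the paper's proof essentially step for step. You use the same elementary facts about $w_t$, the same identity $w_t\sigma_t=(\sigma_t-\mu_t)/2$ feeding into the multiaccuracy half of Theorem~\ref{thm:residual-certificate}, the same substitution $S_T=4TB_T$ that produces the constants, and the same final division by $T\rho_w$.
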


\begin{proof}
Since
$Y_t\in\{0,1\}$ and $p_t\in[0,1]$, $w_t\in[0,1]$ and
$(Y_t-p_t)^2=w_t^2$.  Hence
$B_T=T^{-1}\sum_tw_t^2\le\rho_w$.

The key identity is
\[
  w_t\sigma_t
  =
  Y_t-p_t
  =
  \frac{\sigma_t-\mu_t}{2}.
\]
The multiaccuracy part of Theorem~\ref{thm:residual-certificate} therefore implies, for every
$h\in\calH$,
\[
  \left|
    \frac1T\sum_{t=1}^T w_t\sigma_th(x_t)
  \right|
  =
  \frac1{2T}
  \left|
    \sum_{t=1}^T h(x_t)(\sigma_t-\mu_t)
  \right|
  \le
  \frac{A_H\sqrt{TB_T}+B_H/2}{T},
\]
because $S_T=4TB_T$.
Dividing by $\rho_w$ gives the normalized edge bound when $\rho_w>0$.
\end{proof}

\subsection{The smooth weak-learning condition gives classification boosting}
\label{sec:smooth-boosting}

The smooth weak-learning condition turns the hard-core alternative around.  If
no sufficiently smooth small-edge weighting exists, the algorithm's own
mistake weighting cannot be smooth.

\begin{corollary}[Second-order weak-to-strong rate]
\label{cor:second-order-boosting}
Let $\rho_0,\gamma_0>0$.  If the realized transcript satisfies the
$(\rho_0,\gamma_0)$-smooth weak-learning condition for $\calH$, then the
Brier loss $B_T=T^{-1}\sum_t(Y_t-p_t)^2$ and the randomized classification
error $\rho_w=T^{-1}\sum_t|Y_t-p_t|$ both satisfy
\[
  B_T,\rho_w
  \le
  \max\left\{
    \rho_0,\,
    \frac{4A_H^2}{\gamma_0^2T},\,
    \frac{B_H}{\gamma_0T}
  \right\}.
\]
The deterministic threshold classifier
$\widehat Y_t=\mathbf 1\{p_t\ge1/2\}$, with arbitrary tie-breaking at
$p_t=1/2$, has average classification error at most $2\rho_w$.
\end{corollary}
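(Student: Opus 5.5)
The plan is a contrapositive argument built on Theorem~\ref{thm:main-hard-core}. By that theorem, $B_T\le\rho_w$, so it suffices to bound $\rho_w$. Suppose for contradiction that $\rho_w>\max\{\rho_0,\,4A_H^2/(\gamma_0^2T),\,B_H/(\gamma_0T)\}$. Then $\rho_w>0$ and $\rho_w\ge\rho_0$, so the mistake weighting $w_t=|Y_t-p_t|\in[0,1]$ is a $\rho_0$-smooth reweighting of the realized transcript. The $(\rho_0,\gamma_0)$-smooth weak-learning condition then forces $\edge_{\calH}(w)\ge\gamma_0$.

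Next I will show that the edge bound of Theorem~\ref{thm:main-hard-core} gives $\edge_{\calH}(w)<\gamma_0$, which is the contradiction. Using $B_T\le\rho_w$, the bound becomes
\[
  \edge_{\calH}(w)\le\frac{A_H\sqrt{T\rho_w}+B_H/2}{T\rho_w}
  =\frac{A_H}{\sqrt{T\rho_w}}+\frac{B_H}{2T\rho_w}.
\]
Each term is at most $\gamma_0/2$, strictly under the assumed strict inequality:
\begin{itemize}
\item $\rho_w>4A_H^2/(\gamma_0^2T)$ gives $A_H/\sqrt{T\rho_w}<\gamma_0/2$;
\item $\rho_w>B_H/(\gamma_0T)$ gives $B_H/(2T\rho_w)<\gamma_0/2$.
\end{itemize}
The sum is therefore $<\gamma_0$. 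If $A_H$ or $B_H$ is zero, the corresponding term vanishes and the inequality is still strict. This is the only step requiring care: the self-bounding substitution $TB_T\le T\rho_w$ is what converts the $\sqrt{S_T}$ error into the $1/(\gamma_0^2T)$ rate, and the thresholds must be split so that each term contributes half of $\gamma_0$. The bound on $\rho_w$ follows, and $B_T\le\rho_w$ transfers it to the Brier score.

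For the threshold classifier, I will show that the mistake indicator is at most $2w_t$ on every round. If $\widehat Y_t\ne Y_t$, then either $Y_t=1$ and $p_t\le1/2$, or $Y_t=0$ and $p_t\ge1/2$; in both cases $w_t=|Y_t-p_t|\ge1/2$. This holds under any tie-breaking at $p_t=1/2$. Averaging over rounds gives error at most $2\rho_w$.
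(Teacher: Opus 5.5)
Your proof is correct and follows essentially the paper's argument: use the smooth weak-learning condition once $\rho_w\ge\rho_0$, plug $B_T\le\rho_w$ into the numerator of the edge bound from Theorem~\ref{thm:main-hard-core}, and split the threshold so each term is below $\gamma_0/2$. The threshold-classifier step is identical to the paper's. The only difference is that you obtain the $B_T$ bound directly from the $\rho_w$ bound via $B_T\le\rho_w$. The paper instead runs a separate contradiction for $B_T$, substituting $B_T\le\rho_w$ in the denominator, which is redundant.
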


\paragraph{Proof idea.}
If the mistake weighting is not $\rho_0$-smooth, then its density
$\rho_w$ is already below $\rho_0$, as is $B_T$.  Otherwise, the smooth
weak-learning condition lower-bounds its edge by $\gamma_0$, whereas
Theorem~\ref{thm:main-hard-core} upper-bounds the same edge in terms of
$B_T$ and $\rho_w$.  Solving the two resulting inequalities gives the stated
bounds.  Thresholding adds at most a factor of two because every threshold
mistake has $|Y_t-p_t|\ge1/2$.  The complete calculation is given in
Appendix~\ref{app:full-horizon-derivations}.

Taking $\rho_0$ to be a sufficiently small constant multiple of $\eps$ and
\[
  T
  =
  \Omega\!\left(
    \frac{A_H^2}{\gamma_0^2\eps}
    +
    \frac{B_H}{\gamma_0\eps}
  \right)
\]
gives Brier loss, randomized classification error, and deterministic classification error for the thresholded classifier
at most $\eps$, up to constants.  Thus the smooth
weak-learning condition must hold at smoothness
$\rho_0=O(\eps)$ for a target error $\eps$.  When $B_H$ is logarithmic or
lower order, this is the usual $1/(\gamma_0^2\eps)$ dependence.  The lower
bound of \citet{BeygelzimerKaleLuo2015OptimalAdaptive} shows that this
dependence is unavoidable in their weak-online-learning model, up to
logarithmic and excess-loss terms. 

\section{Boosting on every interval}
\label{sec:interval-boosting}

The preceding guarantees average over the full horizon.  We now give a
strongly adaptive variant: one forecast sequence satisfies the same two
guarantees, up to polylogarithmic factors, on every contiguous interval.
The construction uses a standard second-order specialist reduction.  We
state the reduction first because preserving dependence on the local
residual energy is essential; an ordinary $O(\sqrt{|I|})$ interval-regret
bound would lose the optimal weak-to-strong rate and get a $1/(\gamma_0^2\eps^2)$ dependence instead.

\begin{proposition}[Second-order interval wrapper]
\label{prop:interval-wrapper}
Fix a horizon $T$.  Suppose an online learner $\mathsf B$, whenever started
fresh, outputs $z_t\in[-1,1]$ and, for every comparator sequence
$z^\star=(z_t^\star)_t$ in a fixed class and every coefficient sequence
$c_t\in[-2,2]$, satisfies
\[
  \sum_{t=1}^n c_t(z_t^\star-z_t)
  \le
  a\sqrt{\sum_{t=1}^n c_t^2}+b.
\]
Set
\[
  L_T=\log(4T),
  \qquad
  M_T=2\left\lceil\log_2(2T)\right\rceil.
\]
There is a wrapper $\mathsf{SA}(\mathsf B)$ whose output
$\widetilde z_t\in[-1,1]$ satisfies, simultaneously for every interval
$I\subseteq[T]$ and every comparator $z^\star$,
\begin{equation}
\label{eq:interval-wrapper}
  \sum_{t\in I}c_t(z_t^\star-\widetilde z_t)
  \le
  \alpha_T(a)\sqrt{\sum_{t\in I}c_t^2}+\beta_T(b),
\end{equation}
where, for a universal constant $C_0$,
\[
  \alpha_T(a)=\sqrt{M_T}\bigl(a+C_0\sqrt{L_T}\bigr),
  \qquad
  \beta_T(b)=M_T\bigl(b+C_0L_T\bigr).
\]
The wrapper maintains at most $1+\lceil\log_2T\rceil$ active copies of
$\mathsf B$ per round.
\end{proposition}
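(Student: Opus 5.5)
We build the wrapper from a geometric covering of intervals together with a second-order sleeping-experts aggregator.

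\emph{Covering.} Partition $[T]$ into geometric covering (dyadic) intervals: at each level $k=0,\dots,\lceil\log_2T\rceil$, consider the blocks $[j2^k+1,(j+1)2^k]$. At every round, exactly one block per level is active, which gives the count $1+\lceil\log_2 T\rceil$. At the start of each block, a fresh copy $\mathsf B_J$ of $\mathsf B$ is started; it is run only during its block.

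\emph{Aggregation.} On round $t$, the wrapper outputs a convex combination $\widetilde z_t=\sum_{J\ni t}\pi_{t,J}z_{t,J}$ of the active copies' outputs. The weights $\pi_t$ come from a second-order sleeping-experts algorithm run on gains $g_{t,J}=c_t(z_{t,J}-\widetilde z_t)$. Note that $|g_{t,J}|\le 4$, and these gains are zero-mean under $\pi_t$. Suitable choices are Adapt-ML-Prod, Squint, or a sleeping version of the Cesa-Bianchi--Mansour--Stoltz prod algorithm, with per-expert learning-rate grids. With $N\le 2T$ experts, the guarantee we need is
\[
\sum_{t\in J}c_t(z_{t,J}-\widetilde z_t)\le C_0\sqrt{L_T\sum_{t\in J}c_t^2}+C_0L_T
\]
for each expert $J$ on its own active period. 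This holds because $\sum_{t\in J} g_{t,J}^2\le 4\sum_{t\in J}c_t^2$, and the $\log\log$ grid overhead can be absorbed into $L_T=\log(4T)$. We keep $\widetilde z_t\in[-1,1]$ by convexity.

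\emph{Per-block bound.} For a block $J$, we add the base learner's guarantee, $\sum_{t\in J} c_t(z^\star_t-z_{t,J})\le a\sqrt{\sum_J c_t^2}+b$, to the aggregator bound. This gives the per-block regret
\[
(a+C_0\sqrt{L_T})\sqrt{\textstyle\sum_{t\in J}c_t^2}+b+C_0L_T .
\]

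\emph{Decomposing an arbitrary interval.} Any interval $I$ decomposes into at most $M_T=2\lceil\log_2(2T)\rceil$ disjoint dyadic blocks, at most two per level. These are the standard left and right staircases. The comparator $z^\star$ restricted to each block is still in the class; this uses the assumption that the guarantee holds for every comparator sequence and for fresh starts. Summing the per-block bounds over the $m\le M_T$ blocks, we apply Cauchy--Schwarz:
\[
\sum_{i=1}^m\sqrt{\textstyle\sum_{J_i}c_t^2}\le\sqrt{m}\sqrt{\textstyle\sum_I c_t^2}.
\]
This yields $\alpha_T(a)=\sqrt{M_T}(a+C_0\sqrt{L_T})$, and the additive terms sum to $\beta_T(b)=M_T(b+C_0L_T)$.

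The main obstacle is the sleeping-experts step. We need a genuinely second-order bound, scaling with $\sum c_t^2$ rather than with $|J|$ or with $\sum g_{t,J}^2$ under unknown scale, that holds simultaneously for all experts with only an $O(\log T)$ overhead. Our first choice is to cite Adapt-ML-Prod (Gaillard--Stoltz--van Erven) in its sleeping form, treating inactive experts as receiving gain equal to the algorithm's own. This makes their instantaneous regret zero, so the bound restricts to active periods. The price is a $\log N+\log\log$ term, which is $O(L_T)$.
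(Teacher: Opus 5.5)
Your proposal is correct and follows essentially the same route as the paper. Both arguments run fresh copies of $\mathsf B$ on the dyadic cover and aggregate them with a second-order sleeping/confidence-rated experts algorithm (Adapt-ML-Prod via the Gaillard--Stoltz--van Erven reduction), which gives per-block regret $C_0(\sqrt{L_T\sum_{t\in J}c_t^2}+L_T)$; both then split $I$ into at most $M_T$ blocks and apply Cauchy--Schwarz. One harmless slip: the number of dyadic blocks meeting $[T]$ can slightly exceed $2T$ (e.g.\ $11$ for $T=5$), but it is always below $4T$ after padding to a power of two, so $\log N\le L_T$ still holds.
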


The wrapper combines fresh copies of $\mathsf B$ on dyadic intervals with a
second-order confidence-rated experts algorithm.  Its standard proof is
given in Appendix~\ref{app:interval-proofs}.

Apply Proposition~\ref{prop:interval-wrapper} separately to the weak-class
oracle and to the two scalar routines $\mathsf S$ and $\mathsf A$ in
Algorithm~\ref{alg:defensive}; use their aggregate outputs in the same root
rule and feed the wrappers the same coefficients as before.  Call the
resulting forecaster the \emph{strongly adaptive Defensive Booster}.  Let
$a_H^{\rm int},b_H^{\rm int}$ denote the coefficients in
\eqref{eq:interval-wrapper} for the weak-class wrapper, and let
$a_{\rm sc}^{\rm int},b_{\rm sc}^{\rm int}$ denote them for either scalar
wrapper.  Since scalar adaptive OGD has fresh-run constants $a=4$ and $b=8$,
Proposition~\ref{prop:interval-wrapper} gives the explicit values
\[
  a_H^{\rm int}=\alpha_T(a_{\calH}),
  \quad b_H^{\rm int}=\beta_T(b_{\calH}),
  \qquad
  a_{\rm sc}^{\rm int}=\alpha_T(4),
  \quad b_{\rm sc}^{\rm int}=\beta_T(8).
\]

\begin{theorem}[Interval certificate]
\label{thm:interval-certificate}
For every adaptive binary sequence, the strongly adaptive Defensive Booster
satisfies, simultaneously for every interval $I\subseteq[T]$,
\[
  \sup_{h\in\calH}
  \left|\sum_{t\in I}h(x_t)r_t\right|
  \le
  A_H^{\rm int}\sqrt{S_I}+B_H^{\rm int},
  \qquad
  \left|\sum_{t\in I}\mu_tr_t\right|
  \le
  A_S^{\rm int}\sqrt{S_I}+B_S^{\rm int},
\]
where $S_I=\sum_{t\in I}r_t^2$ and
\[
  A_H^{\rm int}=a_H^{\rm int}+a_{\rm sc}^{\rm int},
  \quad B_H^{\rm int}=b_H^{\rm int}+b_{\rm sc}^{\rm int},
  \qquad
  A_S^{\rm int}=2a_{\rm sc}^{\rm int},
  \quad B_S^{\rm int}=2b_{\rm sc}^{\rm int}.
\]
\end{theorem}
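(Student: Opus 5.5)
The plan is to repeat the proof of Theorem~\ref{thm:residual-certificate} with every sum restricted to an interval $I$. The fresh-run regret bounds are replaced by the interval bounds \eqref{eq:interval-wrapper} of Proposition~\ref{prop:interval-wrapper}. The root-sign property (Lemma~\ref{lem:root-sign}) is pointwise in $t$, so it holds for the aggregate outputs of the wrappers: if $\lambda_t,\theta_t,\widehat h_t$ now denote the wrapped outputs (all in $[-1,1]$), then $F_t$ is still continuous on $[-1,1]$ and the selected gain satisfies
\[
\bar z_t+\lambda_t v_t=q_{H,t}z_{H,t}+q_{S,t}z_{S,t}=F_t(\mu_t)(\sigma_t-\mu_t)\le0
\]
for every $t$. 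Summing over $t\in I$ gives a nonpositive interval total.

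\textbf{Step 1 (auditor gains on $I$).} Apply Proposition~\ref{prop:interval-wrapper} to the wrapper around $\mathsf A$. Take the constant comparator sequences $z^\star\equiv1$ and $z^\star\equiv-1$, which lie in the comparator class of scalar OGD by Lemma~\ref{lem:scalar-ogd}, with coefficients $c_t=v_t\in[-2,2]$. This gives
\[
\sum_{t\in I}z_{H,t}-\sum_{t\in I}(\bar z_t+\lambda_tv_t)\le a^{\rm int}_{\rm sc}\sqrt{\textstyle\sum_{t\in I}v_t^2}+b^{\rm int}_{\rm sc},
\]
and the same bound with $z_{S,t}$ in place of $z_{H,t}$. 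Using $|v_t|\le|r_t|$ and nonpositivity of the selected gain, we get $\sum_{t\in I}z_{H,t},\ \sum_{t\in I}z_{S,t}\le a^{\rm int}_{\rm sc}\sqrt{S_I}+b^{\rm int}_{\rm sc}$.

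\textbf{Step 2 (multiaccuracy).} Apply the weak-class wrapper with $c_t=r_t$ and constant comparator $h(x_t)$. This yields $\sum_{t\in I}h(x_t)r_t\le\sum_{t\in I}\widehat h_tr_t+a_H^{\rm int}\sqrt{S_I}+b_H^{\rm int}$. Combining with Step 1 gives the upper bound with constants $A_H^{\rm int},B_H^{\rm int}$. Symmetry of $\calH$ then gives the absolute value.

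\textbf{Step 3 (self-orthogonality).} Apply the $\mathsf S$ wrapper with $c_t=u_t=\mu_tr_t$ (note $|u_t|\le|r_t|$) and constant comparator $\theta\in\{\pm1\}$. This gives $\theta\sum_{t\in I}\mu_tr_t\le\sum_{t\in I}z_{S,t}+a^{\rm int}_{\rm sc}\sqrt{S_I}+b^{\rm int}_{\rm sc}\le 2a^{\rm int}_{\rm sc}\sqrt{S_I}+2b^{\rm int}_{\rm sc}$, and taking $\theta=\pm1$ gives the absolute value.

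\textbf{Expected obstacle.} The only delicate point is checking that the wrapper's hypotheses are met, since the argument itself is a direct transcription. The fresh-run guarantee of each base learner must hold against the comparators we use: fixed $h\in\calH$ for the oracle, constants in $[-1,1]$ for scalar OGD, both given by Definition~\ref{def:weak-class-oracle} and Lemma~\ref{lem:scalar-ogd}. All coefficients must lie in $[-2,2]$: $|r_t|\le2$, and $|u_t|,|v_t|\le|r_t|$. The coefficients are chosen adaptively, since they depend on $\mu_t$, which depends on the wrapped outputs; but the wrapper's guarantee holds for every coefficient sequence revealed after its output, exactly as in the fresh-run setting, so this causes no problem. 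Finally, the interval bound must be applied to the same $I$ in all three wrappers simultaneously, which is allowed since \eqref{eq:interval-wrapper} holds for all $I$ at once.
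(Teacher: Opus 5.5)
Your proposal is correct and follows the paper's proof essentially line for line. Both arguments proceed in the same order and arrive at the same constants:
\begin{itemize}
\item pointwise nonpositivity of the aggregated gain, from the root-sign property;
\item interval regret of the wrapped $\mathsf A$ against the endpoints $\pm1$, bounding both auditor gains;
\item transfer of these bounds through the wrapped weak-class oracle to multiaccuracy, and through the wrapped $\mathsf S$ to self-orthogonality.
\end{itemize}
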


The proof repeats the argument of
Theorem~\ref{thm:residual-certificate} using the interval-regret bounds of
Proposition~\ref{prop:interval-wrapper}; details are given in
Appendix~\ref{app:interval-proofs}.

\begin{corollary}[Strongly adaptive boosting]
\label{cor:interval-boosting}
For an interval $I\subseteq[T]$, let $n=|I|$ and define
\[
  B_I=\frac1n\sum_{t\in I}(Y_t-p_t)^2,
  \qquad
  \rho_I=\frac1n\sum_{t\in I}|Y_t-p_t|.
\]
The interval mistake weights $w_t=|Y_t-p_t|$ form a local hard-core
witness: if $\rho_I>0$, then
\[
  \sup_{h\in\calH}
  \frac{\left|\sum_{t\in I}w_t\sigma_th(x_t)\right|}
       {\sum_{t\in I}w_t}
  \le
  \frac{A_H^{\rm int}\sqrt{nB_I}+B_H^{\rm int}/2}{n\rho_I}.
\]
Simultaneously for every interval $I$:
\begin{enumerate}[label=(\roman*),leftmargin=2em]
\item for every $f\in\spanop_\Lambda(\calH)$,
\[
  B_I
  \le
  \frac1n\sum_{t\in I}(Y_t-q_f(x_t))^2
  +
  \frac{\Lambda A_H^{\rm int}+A_S^{\rm int}}{\sqrt n}\sqrt{B_I}
  +
  \frac{\Lambda B_H^{\rm int}+B_S^{\rm int}}{2n};
\]
\item for any $\rho_0,\gamma_0>0$, if every weighting $w\in[0,1]^I$ with
$n^{-1}\sum_{t\in I}w_t\ge\rho_0$ satisfies
\[
  \sup_{h\in\calH}
  \frac{\left|\sum_{t\in I}w_t\sigma_th(x_t)\right|}
       {\sum_{t\in I}w_t}
  \ge\gamma_0,
\]
then
\[
  B_I,\rho_I
  \le
  \max\left\{
    \rho_0,
    \frac{4(A_H^{\rm int})^2}{\gamma_0^2n},
    \frac{B_H^{\rm int}}{\gamma_0n}
  \right\}.
\]
The threshold classifier has error at most $2\rho_I$ on $I$.
\end{enumerate}
\end{corollary}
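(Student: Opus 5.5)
The plan is to repeat the full-horizon arguments verbatim with sums restricted to $I$, using Theorem~\ref{thm:interval-certificate} in place of Theorem~\ref{thm:residual-certificate}. Since the interval certificate has the same form as the full-horizon certificate, with $T$ replaced by $n$, $S_T$ by $S_I$, and $(A_H,B_H,A_S,B_S)$ by their interval counterparts, each part follows by substitution. The one thing to check is that no step of the earlier proofs used a property of the whole horizon beyond the certificate itself.

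\emph{Hard-core witness.} For $t\in I$, set $w_t=|Y_t-p_t|\in[0,1]$ and use the identity $w_t\sigma_t=(\sigma_t-\mu_t)/2=r_t/2$. Then
\[
\Bigl|\sum_{t\in I}w_t\sigma_th(x_t)\Bigr|=\tfrac12\Bigl|\sum_{t\in I}h(x_t)r_t\Bigr|\le\tfrac12\bigl(A_H^{\rm int}\sqrt{S_I}+B_H^{\rm int}\bigr).
\]
Since $S_I=4nB_I$, the right-hand side equals $A_H^{\rm int}\sqrt{nB_I}+B_H^{\rm int}/2$. Dividing by $\sum_{t\in I}w_t=n\rho_I>0$ gives the stated bound. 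We also record $B_I=n^{-1}\sum_{t\in I}w_t^2\le\rho_I$.

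\emph{Part (i).} For each $t\in I$, convexity gives $r_t^2\le(\sigma_t-f(x_t))^2+2r_t(f(x_t)-\mu_t)$. Sum over $I$ and write $f=\sum_j\alpha_jh_j$ with $\sum_j|\alpha_j|\le\Lambda$. The interval multiaccuracy bound then controls $|\sum_{t\in I}f(x_t)r_t|$ by $\Lambda(A_H^{\rm int}\sqrt{S_I}+B_H^{\rm int})$, and the interval self-orthogonality bound controls $|\sum_{t\in I}\mu_tr_t|$. Dividing by $4n$ and using $(\sigma_t-f(x_t))^2=4(Y_t-q_f(x_t))^2$ yields exactly the displayed inequality.

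\emph{Part (ii).} Split into two cases.
\begin{itemize}
\item If $\rho_I<\rho_0$, then $B_I\le\rho_I<\rho_0$ and we are done.
\item Otherwise the interval mistake weighting is $\rho_0$-smooth on $I$, so the hypothesis gives edge at least $\gamma_0$. Combined with the witness bound, this yields $\gamma_0 n\rho_I\le A_H^{\rm int}\sqrt{nB_I}+B_H^{\rm int}/2\le A_H^{\rm int}\sqrt{n\rho_I}+B_H^{\rm int}/2$.
\end{itemize}
In the second case, consider which of the two right-hand terms is larger. If $A_H^{\rm int}\sqrt{n\rho_I}\ge B_H^{\rm int}/2$, then $\gamma_0 n\rho_I\le 2A_H^{\rm int}\sqrt{n\rho_I}$, so $\rho_I\le 4(A_H^{\rm int})^2/(\gamma_0^2n)$. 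Otherwise $\gamma_0n\rho_I\le B_H^{\rm int}$, so $\rho_I\le B_H^{\rm int}/(\gamma_0n)$. The same bounds transfer to $B_I$ because $B_I\le\rho_I$.

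For the threshold classifier, every mistake on $I$ has $|Y_t-p_t|\ge1/2$. The number of mistakes is therefore at most $2\sum_{t\in I}w_t$, giving error at most $2\rho_I$.

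No step is a genuine obstacle; this is bookkeeping. The only point to verify is that the quadratic solving in part (ii) reproduces the constants exactly as stated. The split above shows it does.
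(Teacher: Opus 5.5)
Your proposal is correct and follows the paper's own route: rerun the proofs of Theorem~\ref{thm:span-guarantee}, Theorem~\ref{thm:main-hard-core}, and Corollary~\ref{cor:second-order-boosting} on $I$, with Theorem~\ref{thm:interval-certificate} in place of the full-horizon certificate, and your constants match the statement. The only difference is cosmetic: in part (ii) you bound $\rho_I$ first and pass to $B_I$ via $B_I\le\rho_I$, whereas the paper's full-horizon proof bounds the Brier loss by a separate, equivalent contradiction argument.
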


The corollary follows by applying the proofs of
Theorem~\ref{thm:span-guarantee}, Theorem~\ref{thm:main-hard-core}, and
Corollary~\ref{cor:second-order-boosting} on $I$, with
Theorem~\ref{thm:interval-certificate} in place of the full-horizon
certificate.

Thus, for fixed oracle constants, a target interval error $\eps$ requires
$n=O(\log^2(T)/(\gamma_0^2\eps))$ when the local weak-learning condition
holds with $\rho_0=O(\eps)$, while the span-regret guarantee holds without any
weak-learning condition.  Since the bounds hold simultaneously, the
interval and its span comparator may be selected after observing the
transcript.  No assumption is made about rounds outside $I$.
The price for this simultaneous interval guarantee is the explicit
logarithmic factors in Proposition~\ref{prop:interval-wrapper} and at most
$1+\lceil\log_2T\rceil$ active weak-class oracle copies; the basic Defensive Booster
retains its guarantee while maintaining one weak-class oracle.  Structurally,
one forecast sequence therefore
produces a family of data-dependent local hard-core witnesses: whenever error
remains high on an interval, the mistake weights on that interval identify a
smooth distribution on which the entire weak class has small edge.  This
conclusion goes beyond interval comparator regret by identifying where and
when weak learnability fails; it does not require the algorithm to detect a
change point or explicitly construct a hard subset.
Figure~\ref{fig:adaptive-hard-core-evolution} in
Appendix~\ref{sec:adaptive-diagnostics} visualizes the density and weak-class
edge of these local witnesses across interval endpoints and time scales on a
stream with known change points.

\section{Experiments}
\label{sec:experiments}
We compare the Defensive Booster with online gradient boosting, online
weak-to-strong boosting, and the more naive Brier aggregator strategy on controlled synthetic streams and four naturally
ordered real datasets.  The synthetic streams separately test settings in
which the span contains an informative predictor and settings in which the
smooth weak-learning condition holds.  The real streams test performance on
naturally ordered binary data; Appendix~\ref{sec:regression-experiments}
separately evaluates three chronological regression datasets.  Across the
binary experiments, the Defensive Booster's
Brier loss is competitive with the best baseline on each dataset and often
improves upon it substantially, while the gradient boosting and weak-to-strong boosting ensemble baselines take
$20$--$66\times$ as much time per round.  The Brier-loss aggregator over the
four ensembles is yet more expensive and does not close the gap on the two
real streams where the Defensive Booster performs best.

\paragraph{Protocol.}
We compare eight methods.  Two unboosted controls isolate the benefit of
aggregation by simply running the learning algorithm that the boosting techniques take as input: \textsc{Unboosted reg.} performs online squared-loss regression
over the weak class, while \textsc{Unboosted cls.} runs the online classifier
used as the base learner by the classification boosters.  Four ensemble
baselines represent the two boosting traditions.  \textsc{OGB} is online
gradient boosting
\citep{BeygelzimerHazanKaleLuo2015OnlineGradient}; \textsc{Online BBM} is
the rate-optimal online boost-by-majority algorithm;
\textsc{AdaBoost.OL} is the adaptive logistic-loss algorithm from the same
paper \citep{BeygelzimerKaleLuo2015OptimalAdaptive}; and \textsc{OSBoost} is
online SmoothBoost \citep{ChenLinLu2012OnlineBoosting}.  The \textsc{Brier
aggregator} combines the forecasts of these four ensembles by
exponential weighting under Brier loss.  The Defensive Booster and each
unboosted control maintain one online learner over the weak class; each
boosting baseline maintains an ensemble of $N=100$ such learners, and the
aggregator must run all four ensembles, for a total of $400$.

We evaluate deterministic $0/1$ classification error, Brier loss, and randomized classification error
$T^{-1}\sum_t|Y_t-p_t|$.  Online BBM and \textsc{Unboosted cls.} output hard
labels, which we view as probabilities in $\{0,1\}$ when computing Brier loss
and randomized error.  Consequently, all three metrics coincide for these
two methods.  AdaBoost.OL randomizes over its partial ensembles; we report
its probability of predicting one.  Its randomized-error score is therefore
the expected classification error of the original randomized output, while
its Brier score evaluates that probability directly.

We fix all hyperparameters before examining performance and do not tune them
separately for each stream.  In particular, Online BBM and OSBoost use the
analytically guaranteed weak-learning advantage on controlled synthetic
streams where one is known, and the fixed target classification advantage
$\gamma=.1$ on all other streams.
All synthetic results use $T=3000$ and report means over $20$ seeds.  We
process each real dataset stream once in its recorded order, without shuffling.
Appendix~\ref{sec:experimental-appendix} gives the complete details including algorithm hyperparameter
settings, data generators and preprocessing steps, standard errors, and
runtime tables.  Code, public-data loaders, and exact reproduction commands
are available at \url{https://github.com/aaroth/defensive-boosting}.

We use two synthetic streams to isolate the two guarantees.  To test the
weak-to-strong guarantee, we use the
\emph{binary aggregation stream}.  The weak class contains $200$ binary
hypotheses, arranged as $100$ opposite pairs $\{\pm h_j\}_{j=1}^{100}$.
The algorithms see all $200$
binary predictions in random order and are not told which orientation in
each pair is useful.  Signed labels are balanced and randomly ordered.  There is
a hidden choice of one orientation from each pair such that all $100$ chosen
rules are correct on half the rounds; on each remaining round exactly $58$
are correct and $42$ are incorrect, in a cyclically balanced pattern.  No
single rule is perfect, and averaging all $200$ displayed rules gives zero.
The hidden average, however, has signed margin at least
$2(.58)-1=.16$ on every round and therefore classifies perfectly.  Averaging
over the chosen orientations shows that every nonzero reweighting admits a
displayed rule with edge at least $.16$.  Adding negations and hiding the
orientations does not change the span, so the symmetry calculation in
Proposition~\ref{prop:separation} shows that every fixed affine span score has
Brier loss at least
$(1-.16)^2/(8(1+.16^2))=.0860$.  The stream thus directly instantiates the
separation between span prediction and weak-to-strong aggregation.

To test the span guarantee, we use the \emph{random-label mixture stream}.
Here the contexts are normalized vectors in $\mathbb R^{30}$ and the weak
class is the infinite Euclidean linear class
$\calH=\{x\mapsto\langle u,x\rangle:\|u\|_2\le1\}$.  Independently on each
round, with probability $.65$ the label follows a fixed noisy linear rule and
with probability $.35$ it is an independent random bit.  Uniform weighting
over the random-label rounds is smooth and, with high probability, has low
edge, so the smooth weak-learning condition fails for any constant target edge;
nevertheless, the linear span remains informative on the structured rounds.
These streams are deliberately favorable to different baseline families.  On
the binary aggregation stream, the Bayes classification error is zero and the
weak-to-strong boosters approach it.  On the random-label mixture stream, OGB is
the strongest baseline in Brier loss and approaches the least-squares span
benchmark, whereas the classification boosters incur substantially larger
Brier loss.  Each stream is therefore tailored to one baseline family.  The
 test is whether the Defensive Booster approaches the stronger
baseline on each stream while improving on the other family.
Figure~\ref{fig:core-experiments} shows deterministic classification error on
the binary aggregation stream and Brier loss on the random-label mixture.

\begin{figure}[H]
\centering
\includegraphics[width=.48\linewidth]{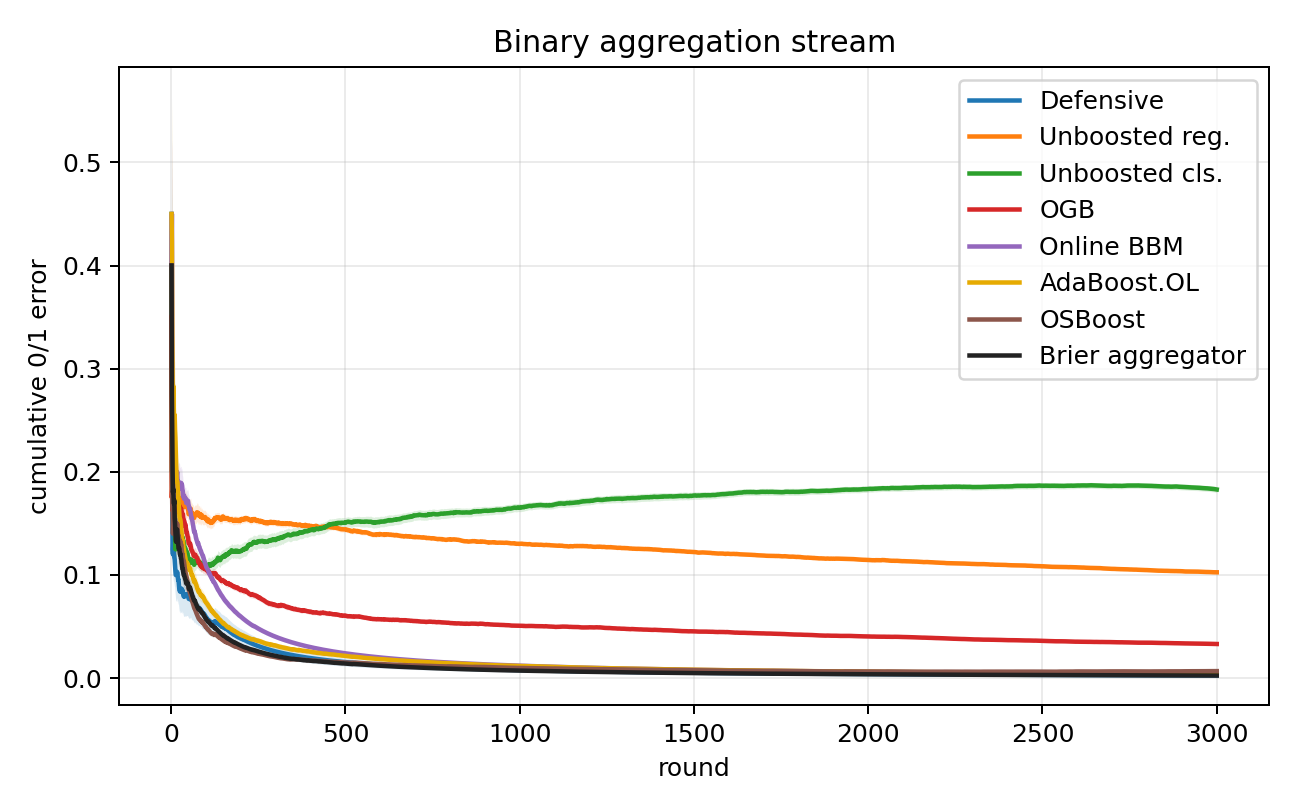}
\hfill
\includegraphics[width=.48\linewidth]{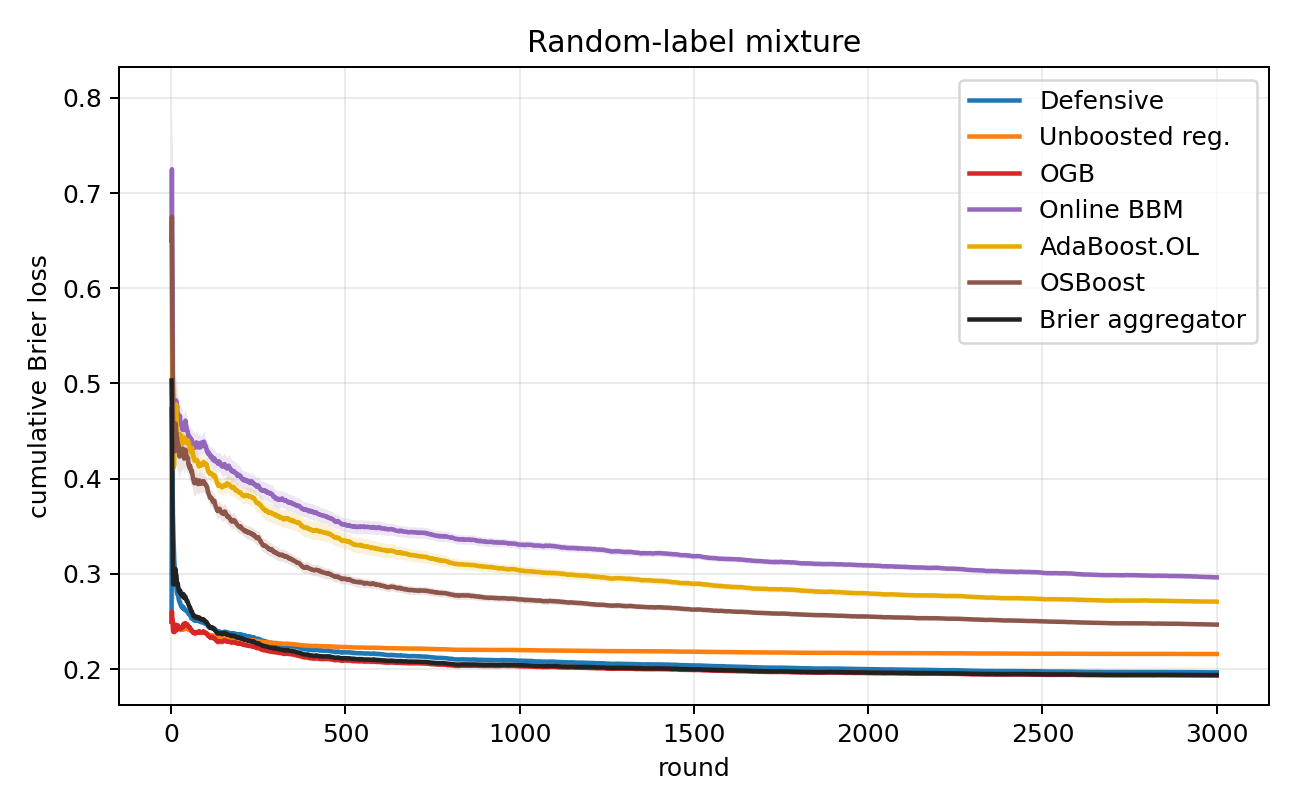}
\caption{The two baseline families excel on complementary streams, and the
Defensive Booster tracks the better family in each.  Left: cumulative $0/1$
error on the binary aggregation stream.  Online BBM, AdaBoost.OL, and OSBoost
approach zero, while OGB retains positive error; the
Defensive Booster also approaches zero and is within $.0001$ of the Brier
aggregator's final mean.  Right:
cumulative Brier loss on the random-label mixture.
OGB has the lowest baseline loss, while the weak-to-strong boosters remain
higher; the Defensive Booster tracks OGB.  The Brier aggregator tracks the
better family in both panels by running all four $100$-learner ensembles,
whereas the Defensive Booster maintains one learner.  Curves are means over $20$ seeds.
The hard-label unboosted classifier is omitted from the Brier panel for scale
and reported in Table~\ref{tab:experiments}.}
\label{fig:core-experiments}
\end{figure}

\paragraph{Results.}
On the binary aggregation stream at $T=3000$, Online BBM reaches
hard-prediction error $.0041$, AdaBoost.OL reaches $.0042$, and OSBoost
reaches $.0068$.  The Defensive Booster reaches $.0026$,
compared with $.0331$ for OGB and $.1829$ for the unboosted classifier, while
using one learner rather than $100$.  Its Brier loss is $.0018$, below every
individual ensemble and the $.0025$ loss of the Brier aggregator.

On the random-label mixture stream, OGB and the Defensive Booster have Brier losses
$.1933$ and $.1965$, respectively, while OSBoost, AdaBoost.OL, and Online BBM have losses
$.2467$, $.2708$, and $.2963$.  The Brier aggregator reaches $.1937$ by running all
four ensembles.  Thus the Defensive Booster remains competitive with OGB
on a sequence where the weak-to-strong guarantee does not apply.  The full
synthetic table in Appendix~\ref{sec:synthetic-details} includes standard
errors and three additional streams: a planted weak rule among decoys, an
infinite linear weak class, and random labels.

The random-label mixture stream also lets us inspect the hard-core guarantee
directly.  Figure~\ref{fig:certificate-diagnostic} tracks
the Defensive Booster's multiaccuracy and self-orthogonality errors, together
with the density of its mistake weighting and the weak class's edge under that
weighting.  The two errors and the class edge decay while the density remains
nontrivial, so the mistake weights form the smooth, low-edge witness predicted
by Theorem~\ref{thm:main-hard-core}.

\begin{figure}[H]
\centering
\includegraphics[width=.92\linewidth]{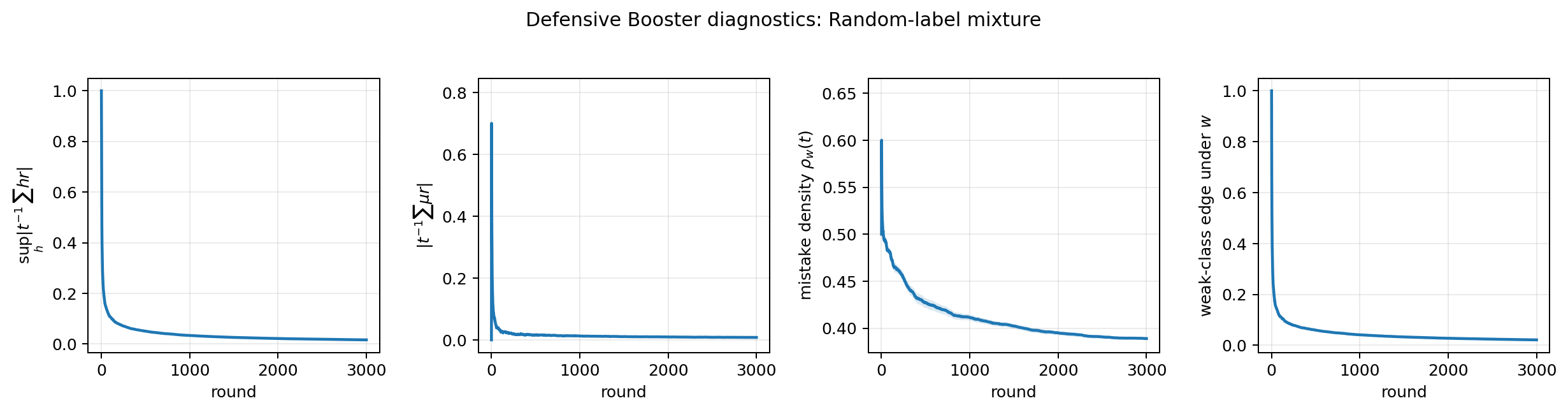}
\caption{Multiaccuracy, self-orthogonality, and hard-core diagnostics on the
random-label mixture stream, averaged over $20$ seeds.  The first two panels
show the cumulative multiaccuracy and self-orthogonality correlations from
Theorem~\ref{thm:residual-certificate}, divided by the current round $t$; the
last two show the density and $\edge_{\calH}(w)$, the weak-class edge under
the mistake weighting $w_t=|Y_t-p_t|$.  Randomized error remains nontrivial
because of the random-label component, but $\edge_{\calH}(w)$ decays: the forecaster's
mistakes are the smooth low-edge witness predicted by
Theorem~\ref{thm:main-hard-core}.}
\label{fig:certificate-diagnostic}
\end{figure}

\paragraph{One learner versus an ensemble.}
Each unboosted control and the Defensive Booster have running time scaling as $C_H+O(1)$ per
round, where $C_H$ is the running time of one weak-learner prediction and update.  OGB, Online
BBM, AdaBoost.OL, and OSBoost have per-round running time $NC_H+O(N)$ with $N$ learners; OSBoost also projects
its combiner onto a simplex.  The Brier aggregator runs all four ensemble boosters. Figure~\ref{fig:compute-sweep} compares
prediction quality as the ensemble size varies.  On this stream, the
Defensive Booster has lower Brier loss and randomized error than each
$N=100$ ensemble and the Brier aggregator.  In our implementation, the
$N=100$ methods take $20$--$66$
times as much wall-clock time per round across the synthetic and real
experiments.  Absolute constants are implementation-dependent, but the
difference in the number of maintained weak learners is part of the
algorithms themselves.

\begin{figure}[H]
\centering
\includegraphics[width=.82\linewidth]{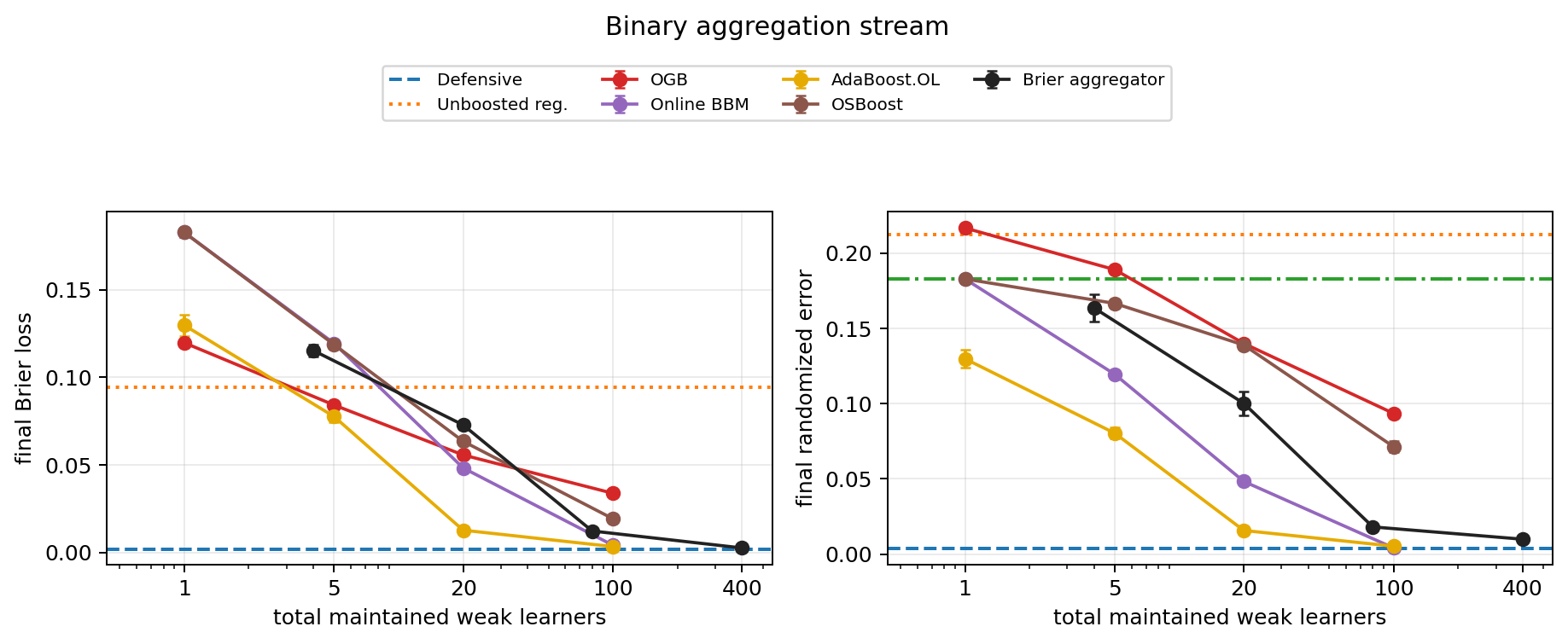}
\caption{Prediction quality as a function of the total number of maintained
weak learners on the binary aggregation stream.  OGB, Online BBM, AdaBoost.OL, and OSBoost use
$N\in\{1,5,20,100\}$ learners; the Brier aggregator at each setting runs all
four ensembles and is therefore plotted at $4N$.  The Defensive Booster and
the unboosted controls each maintain one learner, so their performance appears
as a horizontal line.  The hard-label unboosted classifier is omitted from
the Brier panel for scale.  With one learner, the Defensive Booster achieves
Brier loss and randomized error below the 100-learner ensembles and their
aggregator.}
\label{fig:compute-sweep}
\end{figure}

\paragraph{Real-world data streams.}
We next evaluate the algorithms on four public binary data streams, each
processed in its recorded order.  Bank Marketing predicts whether a client
subscribes to a term deposit \citep{MoroRitaCortez2012BankMarketing};
Electricity predicts price movement in the New South Wales electricity market
\citep{Harries1999Electricity,MOADatasets}; Airlines predicts flight delays
\citep{MOADatasets}; and Occupancy predicts whether an office is occupied from
contemporaneous sensor measurements
\citep{CandanedoFeldheim2016Occupancy}.  Figure~\ref{fig:intro-experiment}
compares each method's final
Brier loss with the best observed loss on that stream, and
Table~\ref{tab:real-brier-main} reports the absolute averages.  The
Defensive Booster has the lowest Brier loss on Electricity and Occupancy by a
wide margin, and it also has the lowest deterministic error on Occupancy.
AdaBoost.OL has the lowest classification errors on Electricity and the
lowest randomized error on Occupancy.  On Bank, the
Brier aggregator is best by $.0010$ over the Defensive Booster.  On Airlines,
the Defensive Booster, OGB, and the aggregator differ by less than
$6\cdot10^{-5}$.  Both unboosted controls are substantially worse on
Electricity and Occupancy, showing that the gains come from the Defensive
Booster's aggregation rather than merely from maintaining fewer learners.  Complete
preprocessing, cumulative curves, classification and randomized errors, and
runtimes appear in
Appendix~\ref{sec:real-data-details}.

\begin{table}[H]
\centering
\scriptsize
\begin{tabular}{lrrrrrrrr}
\hline
Dataset & Defensive & Unboosted reg. & Unboosted cls. & OGB & BBM & AdaBoost.OL & OSBoost & Brier agg. \\
\hline
Bank & $.0800$ & $.0845$ & $.1091$ & $.0791$ & $.1026$ & $.1046$ & $.1679$ & $\mathbf{.0790}$ \\
Electricity & $\mathbf{.0772}$ & $.1957$ & $.3669$ & $.1516$ & $.2010$ & $.1007$ & $.1167$ & $.1006$ \\
Airlines & $.2094$ & $.2190$ & $.3901$ & $\mathbf{.2094}$ & $.3411$ & $.3417$ & $.2354$ & $.2094$ \\
Occupancy & $\mathbf{.0071}$ & $.0396$ & $.0961$ & $.0159$ & $.0138$ & $.0103$ & $.0406$ & $.0101$ \\
\hline
\end{tabular}
\caption{Average online Brier loss on four real binary streams processed in
recorded order; lower is better.  No dataset is shuffled.  The hard-label methods
\textsc{Unboosted cls.} and BBM are scored as $0/1$ probability forecasts.
Bold marks the lowest unrounded value in each row.  The Brier aggregator runs OGB, Online BBM, AdaBoost.OL, and OSBoost, for a total of $400$
weak learners.  The Defensive Booster is best on Electricity and Occupancy,
essentially ties OGB and the aggregator on Airlines, and is within $.0010$ of
the aggregator on Bank.}
\label{tab:real-brier-main}
\end{table}

\paragraph{Regression beyond binary outcomes.}
Appendix~\ref{sec:regression-experiments} evaluates the bounded-outcome
extension on three chronological regression datasets.  Relative to 100-stage
OGB, the Defensive Booster lowers normalized mean squared error by $18\%$ on
Appliance Energy, $29\%$ on Bike Demand, and $17\%$ on Interstate Traffic,
while maintaining one weak learner rather than $100$; OGB takes
$65$--$70\times$ as much wall-clock time per round in our implementation.

The synthetic streams isolate the strengths of the two baseline families:
OGB is strongest when an informative span comparator is available, while the
classification boosters are strongest when the smooth weak-learning condition
holds.  We use the same tuning protocol for every method rather than retuning
each method on each stream.  The Defensive
Booster remains competitive on both streams, uses one online learner, and
its mistake weights expose the smooth, low-edge witness measured in
Figure~\ref{fig:certificate-diagnostic}.  Appendix~\ref{sec:adaptive-diagnostics}
evaluates the strongly adaptive variant of the Defensive Booster from
Section~\ref{sec:interval-boosting}.  On the four original real streams, its
16--20 weak-class learners, active at different time scales, make it slower
than the basic Defensive Booster.  It remains $3$--$10\times$ faster than the
100-learner ensembles and
further reduces both forecasting and classification error on Electricity,
Airlines, and Occupancy
(Table~\ref{tab:real-data}).  On the INSECTS optical-sensor benchmark, whose
released streams have controlled abrupt, gradual, incremental, and recurring
distribution shifts, the same adaptive variant improves both errors on four of
five drift patterns and essentially ties the basic method on the fifth
(Table~\ref{tab:insects-drift}).

\section{Related work}
\label{sec:related-work}
Our work connects to multiple streams of prior work. The most directly relevant to our application is prior work on online boosting, which is where the baseline algorithms in our experiments are drawn from:

\paragraph{Online gradient boosting.}
\citet{BeygelzimerHazanKaleLuo2015OnlineGradient} start from an online
linear-loss learner for $\calH$ and use $N$ copies to compete with
$\conv(\calH)$ or a norm-bounded span under smooth convex losses.
\citet{HuSunVenkatramanHebertBagnell2017Streaming} study gradient boosting
on i.i.d. data streams and extend their analysis to adversarial streams under
a stronger edge assumption; \citet{HazanSingh2021BoostingOCO} use a
multiplicative weak learner to obtain regret to a convex hull in online convex
optimization.  Our Brier/span guarantee is closest to the first of these in the
squared-loss case: it competes with unrestricted real-valued scores in a
norm-bounded span.  Under the same online-linear-oracle primitive, however,
the Defensive Booster uses one weak-class learner rather than an $N$-stage
ensemble. 

\paragraph{Online weak-to-strong boosting.}
\citet{OzaRussell2001OnlineBaggingBoosting} initiated work on practical online
bagging and boosting methods.  The closest classification predecessor to our
work is \citet{ChenLinLu2012OnlineBoosting}, who adapt SmoothBoost to online
binary boosting using smooth distributions.
\citet{BeygelzimerKaleLuo2015OptimalAdaptive} give the rate-optimal online
boosting algorithm, Online BBM, under weak online learnability assumptions.
They also prove
matching lower bounds: in their model, the optimal sample-complexity
dependence for error $\eps$ is $1/(\gamma^2\eps)$ up to logarithmic and
excess-loss terms.  Our algorithm matches this $\gamma,\eps$ dependence.
It also uses only one online linear oracle over $\calH$ rather than
many parallel weak learners.
These papers make different weak-oracle assumptions.  Chen et al.\ and
Beygelzimer et al.\ assume that the online learner's own predictions have a
fixed positive edge over random guessing on every admissible stream---with
smooth importance weights in the former case---up to an excess-loss term.
\citet{BrukhimChenHazanMoran2020OnlineAgnostic} instead assume a
multiplicative agnostic oracle that obtains a fixed fraction of the best
correlation in $\calH$, and boost it to regret against the best $h\in\calH$.
Our primitive is instead a no-regret learning algorithm for $\calH$ under
linear losses: it need not have any absolute edge, but it competes with every
$h\in\calH$ on the realized residual losses.  Thus $\calH$ is the final
comparator class in the agnostic framework of Brukhim et al., whereas here it
supplies weak directions that are aggregated into forecasts competing with
$\spanop(\calH)$; positive edge enters separately through our ex-post smooth
weak-learning condition.

\paragraph{AdaBoost as loss optimization.}
In the offline setting, weak-to-strong boosting algorithms such as AdaBoost
have also been analyzed through the lens of loss minimization.
\citet{MasonBaxterBartlettFrean2000BoostingGradientDescent} place boosting
inside the broader view of functional gradient descent.
\citet{MukherjeeRudinSchapire2011RateAdaBoost} show that AdaBoost converges
to the infimum empirical exponential loss over additive combinations of weak
hypotheses, without assuming weak learnability or a finite minimizer.  This
is analogous to our span-regret guarantee: both retain a span-optimization
interpretation when weak learning assumptions fail.  The objective and
setting differ: their guarantee is batch optimization of exponential margin
loss over scores, while ours is an online pathwise Brier-regret guarantee
for probability forecasts.  Exponential loss rewards large margins and does
not by itself produce calibrated probabilities; empirically,
\citet{NiculescuMizilCaruana2012CalibratedBoosting} show that boosted
outputs can have poor squared error and cross-entropy because they are not
well-calibrated posterior probabilities.

\paragraph{Strong adaptivity.}
Strongly adaptive online learning asks for low regret on every contiguous
interval.  Generic geometric-cover reductions obtain this guarantee from a
standard online learner with $O(\log T)$ active copies
\citep{DanielyGonenShalevShwartz2015StronglyAdaptive}; second-order
confidence bounds preserve dependence on local gradient energy
\citep{GaillardStoltzVanErven2014SecondOrder,Cutkosky2020StronglyAdaptive}.
The ``adaptive'' online booster of
\citet{BeygelzimerKaleLuo2015OptimalAdaptive} is parameter-free rather than
strongly adaptive in this interval sense.  Section~\ref{sec:interval-boosting}
applies the strongly adaptive machinery inside the Defensive Booster's
auditors, preserving both interval span regret and the guarantee that
persistent interval error yields a smooth, low-edge mistake weighting.  The
defensive-forecasting construction therefore extends to strong adaptivity
through standard online-learning machinery.

\paragraph{Smooth boosting and hard-core sets.}
Smooth distributions are central in smooth boosting
\citep{Servedio2003SmoothBoosting}; \citet{Gavinsky2003OptimallySmooth}
develops smooth adaptive boosting in the agnostic setting.  The minimax view
of boosting goes back to \citet{FreundSchapire1996Game}.  The connection
between boosting and hard-core construction starts from
\citet{Impagliazzo1995HardCore} and was made algorithmic by
\citet{KlivansServedio2003BoostingHardCore} and
\citet{BarakHardtKale2009UniformHardcore}.  Our reweightings are the online
transcript analogue of these smooth distributions.  

\paragraph{Multicalibration, multiaccuracy, and loss minimization.}
Multicalibration was introduced by
\citet{HebertJohnsonKimReingoldRothblum2018}; multiaccuracy was isolated as
a black-box correction criterion by \citet{KimGhorbaniZou2019}.  Outcome
indistinguishability and omniprediction turn stronger prediction certificates
into simultaneous downstream loss guarantees
\citep{DworkKimReingoldRothblumYona2021,GopalanKalaiReingoldSharanWieder2022,GopalanHuKimReingoldWieder2023}.
\citet{GlobusHarrisHarrisonKearnsRothSorrell2023} characterize when batch
multicalibration boosts squared-loss regression to Bayes optimality.
\citet{KearnsRothRyu2025Networked} use the weaker pair of conditions
used here for loss minimization: multiaccuracy and self-orthogonality.

\paragraph{Multicalibration and hard-core measures.}
Our weak-to-strong guarantee uses the connection between multiaccuracy and
hard-core measures that appears in the complexity-theoretic regularity lemma
of \citet{TrevisanTulsianiVadhan2009Regularity}.  Stronger variants derive
hard-core measures from multicalibration
\citep{CasacubertaDworkVadhan2024} or calibrated multiaccuracy
\citep{CasacubertaGopalanKanadeReingold2025}.  Our proof requires only
multiaccuracy. This is important online: adversarial
sequential calibration error cannot generally be bounded at the $O(\sqrt{T})$ scale achieved here
\citep{QiaoValiant2021Sidestepping,DaganDaskalakisFishelsonGolowichKleinbergOkoroafor2025Calibration,CollinaLuNoarovRoth2026Lower}.

\paragraph{Defensive forecasting.}
Our algorithm is developed in the defensive forecasting framework which  chooses probabilities that prevent
continuous ``skeptic'' strategies from increasing their capital by betting
against the forecasts
\citep{VovkTakemuraShafer2005DefensiveForecasting,VovkNouretdinovTakemuraShafer2005LinearProtocols}.
\citet{Vovk2007OptimalExpertAdvice} shows that defensive forecasting also
handles continuous ``second-guessing'' experts whose advice depends on the
learner's current forecast; the affine test $F_t$ in our root rule has this
form.
Many online calibration and multicalibration algorithms can be interpreted in
this framework
\citep{GuptaJungNoarovPaiRoth2022,bastani2022practical,NoarovRoth2023StatisticalScope,noarovhigh,GargJungReingoldRoth2024,GhugeMuthukumarSingla2025,perdomo2025defense,HuLuoSenapatiSharan2026Swap}.
In a recent general result, \citet{FarinaPerdomo2026BlackBox} give generic
reductions from online multicalibration to a no-regret learner plus an expected
variational-inequality solver and recover traditional defensive-forecasting
algorithms as special cases.  The affine root step in
Algorithm~\ref{alg:defensive} is a deterministic one-dimensional instance of
their framework.  To our knowledge, ours is the first online boosting theorem
obtained this way.  Its additional structure yields the span guarantee and
the hard-core mistake weighting.

\subsection*{Acknowledgments}
The authors used AI tools, specifically GPT 5.6 Pro, and GPT 5.6 in the Codex environment in the development of this paper. All of the final theorems and proofs are written and verified by the authors. The code for the empirical evaluation was written via GPT 5.6 Codex.

\bibliographystyle{plainnat}
\bibliography{online_hardcore_refs}

\clearpage
\appendix

\section{Deferred proofs}
\label{app:deferred-proofs}

This appendix contains proofs of the standard online-learning tools and
routine consequences used in the main text.

\subsection{Scalar second-order regret}
\label{app:scalar-regret}

\begin{proof}[Proof of Lemma~\ref{lem:scalar-ogd}]
Let $\eta_t=1/\sqrt{V_{t-1}}$.  Nonexpansiveness of projection gives
\[
  g_t(a-a_t)
  \le
  \frac{(a-a_t)^2-(a-a_{t+1})^2}{2\eta_t}
  +
  \frac{\eta_tg_t^2}{2}.
\]
Because $\eta_t$ is nonincreasing and the diameter of $[-1,1]$ is $2$, the
sum of the first terms is at most
$2/\eta_T\le2\sqrt{V_T}$.  Moreover, $g_t^2\le4\le V_{t-1}$, and hence
\[
  \frac{g_t^2}{\sqrt{V_{t-1}}}
  =
  \left(1+\sqrt{\frac{V_t}{V_{t-1}}}\right)
  \left(\sqrt{V_t}-\sqrt{V_{t-1}}\right)
  \le
  \frac52\left(\sqrt{V_t}-\sqrt{V_{t-1}}\right).
\]
Thus the sum of the second terms is at most
$\frac54\sqrt{V_T}$.  Adding the two contributions gives
\[
  \sum_{t=1}^T(a-a_t)g_t
  \le
  \left(2+\frac54\right)\sqrt{V_T}
  =
  \frac{13}{4}\sqrt{V_T}
  \le
  4\sqrt{V_T}.
\]
Finally,
$\sqrt{4+x}\le2+\sqrt x$ gives the final inequality.
\end{proof}

\subsection{Consequences of the full-horizon certificate}
\label{app:full-horizon-derivations}

\begin{proof}[Proof of Corollary~\ref{cor:low-loss-span}]
Theorem~\ref{thm:span-guarantee} gives
\[
  B_T\le B_f+\frac{C}{\sqrt T}\sqrt{B_T}+\frac{D}{2T}.
\]
Let $x=B_T$, $a=C/\sqrt T$, and $d=D/(2T)$.  For $u=\sqrt x$,
\[
  u^2\le B_f+au+d,
  \qquad\text{so}\qquad
  \left(u-\frac a2\right)^2\le B_f+d+\frac{a^2}{4}.
\]
Taking the positive square root gives
\[
  u
  \le
  \frac a2+\sqrt{B_f+d+\frac{a^2}{4}}.
\]
Hence
\[
  x
  \le
  B_f+d+a^2+a\sqrt{B_f+d}
  \le
  B_f+a\sqrt{B_f}+\frac32a^2+\frac32d,
\]
where the last inequality uses
$\sqrt{B_f+d}\le\sqrt{B_f}+\sqrt d$ and
$a\sqrt d\le(a^2+d)/2$.  Substituting the definitions of $a$ and $d$
proves the claim.
\end{proof}

\begin{proof}[Proof of Corollary~\ref{cor:second-order-boosting}]
Let $w_t=|Y_t-p_t|$.  If $\rho_w<\rho_0$, then
Theorem~\ref{thm:main-hard-core} gives $B_T\le\rho_w<\rho_0$, and the
randomized-error bound is immediate.  Now suppose $\rho_w\ge\rho_0$.  The
smooth weak-learning condition applies to $w$, while
Theorem~\ref{thm:main-hard-core} gives
\[
  \gamma_0
  \le
  \edge_{\calH}(w)
  \le
  \frac{A_H\sqrt{TB_T}+B_H/2}{T\rho_w}.
\]
Since $B_T\le\rho_w$, we have
\[
  \gamma_0
  \le
  \frac{A_H}{\sqrt{TB_T}}
  +
  \frac{B_H}{2TB_T}.
\]
The denominator is nonzero: $\rho_w\ge\rho_0>0$ implies that some $w_t>0$,
and hence $B_T=T^{-1}\sum_t w_t^2>0$.
If both
$B_T>4A_H^2/(\gamma_0^2T)$ and
$B_T>B_H/(\gamma_0T)$ held, then the two terms on the right would each be
strictly smaller than $\gamma_0/2$, a contradiction.  This proves the stated
bound on $B_T$.

For the randomized-error bound, combine the edge inequality with
$B_T\le\rho_w$:
\[
  \gamma_0
  \le
  \frac{A_H\sqrt{T\rho_w}+B_H/2}{T\rho_w}
  =
  \frac{A_H}{\sqrt{T\rho_w}}
  +
  \frac{B_H}{2T\rho_w}.
\]
Writing $u=\sqrt{\rho_w}$, this becomes
\[
  u^2
  \le
  \frac{A_H}{\gamma_0\sqrt T}u
  +
  \frac{B_H}{2\gamma_0T}.
\]
If both
$u^2>4A_H^2/(\gamma_0^2T)$ and
$u^2>B_H/(\gamma_0T)$ held, the right side would be strictly smaller than
$u^2$, again a contradiction.  This proves the randomized-error bound.

Finally, if $\widehat Y_t\ne Y_t$ then $|Y_t-p_t|\ge1/2$.  Averaging shows
that the deterministic threshold error is at most $2\rho_w$.
\end{proof}

\subsection{Strongly adaptive extension}
\label{app:interval-proofs}

\begin{proof}[Proof of Proposition~\ref{prop:interval-wrapper}]
Pad $[T]$ to the next power of two and let $\mathcal J$ be its canonical
family of dyadic intervals.  The family has fewer than $4T$ members, at most
$1+\lceil\log_2T\rceil$ of which contain any round, and every
interval $I\subseteq[T]$ is a disjoint union of at most $M_T$ members of
$\mathcal J$ \citep{DanielyGonenShalevShwartz2015StronglyAdaptive}.  Start
one copy $\mathsf B_J$ at the left endpoint of each $J\in\mathcal J$ and
run it only on $J$.

Aggregate the active copies with a second-order confidence-rated experts
algorithm, treating membership in $J$ as expert $J$'s confidence.  To see
that the standard guarantee applies, let $q_{J,t}$ be the algorithm's
weights on the active intervals and set
$\widetilde z_t=\sum_Jq_{J,t}z_{J,t}$.  Map the linear gain to the loss
\[
  \ell_{J,t}=\frac{2-c_tz_{J,t}}4\in[0,1].
\]
The confidence-regret reduction and second-order excess-loss bound of
\citet{GaillardStoltzVanErven2014SecondOrder} give, for every
$J\in\mathcal J$,
\[
  \sum_{t\in J}c_t(z_{J,t}-\widetilde z_t)
  \le
  C_0\left(
    \sqrt{L_T\sum_{t\in J}c_t^2}+L_T
  \right),
\]
for a universal constant $C_0$.  Here we assign a uniform prior to the
fewer than $4T$ dyadic specialists.  A geometric grid of learning rates, if
needed, contributes only $O(\log\log T)$, which is absorbed by $C_0L_T$.
The loss range used in the second-order bound is valid because
$|\ell_{J,t}-\sum_{J'}q_{J',t}\ell_{J',t}|\le |c_t|/2$.

Now partition an arbitrary $I$ into
$J_1,\ldots,J_m\in\mathcal J$.  On each block, insert the prediction of
$\mathsf B_{J_j}$ between the comparator and the wrapper.  The fresh-run
guarantee and the preceding confidence-regret bound give
\[
\begin{split}
  \sum_{t\in I}c_t(z_t^\star-\widetilde z_t)
  &\le
  \sum_{j=1}^m
  \left(
    a\sqrt{\sum_{t\in J_j}c_t^2}+b
    +C_0\sqrt{L_T\sum_{t\in J_j}c_t^2}+C_0L_T
  \right) \\
  &\le
  \sqrt{M_T}\bigl(a+C_0\sqrt{L_T}\bigr)
  \sqrt{\sum_{t\in I}c_t^2}
  +M_T\bigl(b+C_0L_T\bigr),
\end{split}
\]
where the last step uses Cauchy--Schwarz and $m\le M_T$.  Only the
$1+\lceil\log_2T\rceil$ copies associated with intervals containing the
current round are active.  This proves \eqref{eq:interval-wrapper}.

Advance knowledge of $T$ is not essential.  Partition time into epochs
$[2^j,2^{j+1}-1]$ and run the fixed-horizon construction afresh in each
epoch.  Any interval up to time $T$ meets at most
$1+\lceil\log_2T\rceil$ epochs.  Summing the fixed-horizon bounds over those
pieces and applying Cauchy--Schwarz adds a factor
$\sqrt{1+\lceil\log_2T\rceil}$ to the coefficient of the second-order term
and a factor $1+\lceil\log_2T\rceil$ to the additive term.  At any time only
the wrapper for the current epoch is active.
\end{proof}

\begin{proof}[Proof of Theorem~\ref{thm:interval-certificate}]
Fix an interval $I$.  The root sign property holds separately on every
round, so the aggregated auditor gain is nonpositive on $I$.  Interval
regret for the wrapped $\mathsf A$ against its two endpoint comparators
therefore gives
\[
  \sum_{t\in I}z_{H,t},\quad \sum_{t\in I}z_{S,t}
  \le
  a_{\rm sc}^{\rm int}\sqrt{S_I}+b_{\rm sc}^{\rm int}.
\]
For every $h\in\calH$, interval regret for the wrapped weak-class oracle
then gives
\[
  \sum_{t\in I}h(x_t)r_t
  \le
  \sum_{t\in I}z_{H,t}
  +a_H^{\rm int}\sqrt{S_I}+b_H^{\rm int}
  \le
  A_H^{\rm int}\sqrt{S_I}+B_H^{\rm int}.
\]
Symmetry of $\calH$ gives the absolute value.  Similarly, for
$\theta\in\{-1,1\}$, interval regret for the wrapped $\mathsf S$ gives
\[
  \sum_{t\in I}\theta\mu_tr_t
  \le
  \sum_{t\in I}z_{S,t}
  +a_{\rm sc}^{\rm int}\sqrt{\sum_{t\in I}\mu_t^2r_t^2}
  +b_{\rm sc}^{\rm int}
  \le
  A_S^{\rm int}\sqrt{S_I}+B_S^{\rm int}.
\]
Taking both signs proves the self-orthogonality inequality.
\end{proof}

\section{Separation: the guarantees are incomparable}
\label{sec:separation}

The Brier/span and weak-to-strong guarantees do not imply one another.
Proposition~\ref{prop:separation} gives a transcript on which every
reweighting has positive weak-class edge, yet every real-valued score induced
by the span has constant squared loss.  The construction exploits a basic
difference between the guarantees: squared loss depends on the numerical
values of a score, whereas weak-to-strong boosting can exploit its sign.
Proposition~\ref{prop:converse-separation} gives the converse separation: a
small subset of uninformative rounds defeats the smooth weak-learning
condition even though a span comparator has small squared loss.

\begin{proposition}[Separation]
\label{prop:separation}
For every $\gamma\in(0,1)$ and $\eta>0$, there are a number
$\delta\in[\gamma,\min\{\gamma+\eta,1\})$, a symmetric binary-valued class
$\calH$, and a transcript $(x_t,Y_t)_{t\le T}$ such that:
\begin{enumerate}[label=(\roman*),leftmargin=2em]
\item every reweighting $w\in[0,1]^T$ with $\sum_tw_t>0$ has
      $\edge_\calH(w)\ge\delta$; in particular the
      $(\rho,\gamma)$-smooth weak-learning condition holds for every
      $\rho>0$;
\item no single hypothesis is correct on every round, but a uniform average
      of hypotheses in $\calH$ has positive signed margin on every round and
      therefore classifies the transcript perfectly;
\item every affine score $q_f(x)=(1+f(x))/2$ induced by the span of
      $\calH$, with no range or norm constraint, has
      average squared loss at least
      $(1-\delta)^2/(8(1+\delta^2))$.
\end{enumerate}
\end{proposition}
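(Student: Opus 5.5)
The plan is to build the transcript explicitly, modelled on the binary aggregation stream of Section~\ref{sec:experiments}. Each part then follows from one structural fact: the uniform average of a hidden family of rules has signed margin at least $\delta$ on every round. For part (iii) we add a symmetry (convexity) argument.

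\textbf{Choice of parameters and construction.} Given $\gamma\in(0,1)$ and $\eta>0$, pick integers $m$ and $k$ with $m/2<k<m$ so that $\delta:=2k/m-1$ lies in $[\gamma,\min\{\gamma+\eta,1\})$. This is possible for $m$ large, since the values $2k/m-1$ form a grid of mesh $2/m$ in $(0,1)$, and $k<m$ gives $\delta<1$.

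Take hidden rules $h_1,\dots,h_m$ and set $\calH=\{\pm h_j\}$, which is symmetric and binary-valued. All contexts $x_t$ are distinct, so each $h_j$ may be defined pointwise. There are two blocks of rounds.
\begin{itemize}[leftmargin=2em]
\item \emph{Type A rounds.} Every $h_j(x_t)=\sigma_t$.
\item \emph{Type B rounds.} These are indexed by a shift $s\in\{0,\dots,m-1\}$, and $h_j(x_t)=\sigma_t$ iff $(j-s)\bmod m<k$. Otherwise $h_j(x_t)=-\sigma_t$.
\end{itemize}
Each block has equal total size; for instance, one copy of each type B round and $m$ type A rounds. Every round is duplicated with $\sigma$ and all $h_j$ values negated, so labels are balanced.

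\textbf{Parts (i) and (ii).} On every round, $\frac1m\sum_j h_j(x_t)\sigma_t$ equals $1$ on type A rounds and $2k/m-1=\delta$ on type B rounds, so it is always at least $\delta>0$.

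For (ii), the uniform average of the $h_j$ therefore classifies every round correctly. Since $k<m$, each $h_j$ errs on some type B round, so no single hypothesis is perfect.

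For (i), fix any $w$ with $\sum_t w_t>0$. Then
\[
\frac1m\sum_j\frac{\sum_t w_t\sigma_t h_j(x_t)}{\sum_t w_t}\ge\delta,
\]
so some $h_j$ has edge at least $\delta\ge\gamma$. This gives the $(\rho,\gamma)$ condition for every $\rho>0$.

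\textbf{Part (iii), the main step.} Since $-h_j\in\calH$ only flips signs of coefficients, every span score is $f=\sum_j\alpha_jh_j$ with $\alpha\in\R^m$ unrestricted. Its loss $L(\alpha)=\frac1T\sum_t(\sigma_t-f(x_t))^2$ is convex in $\alpha$.

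The cyclic shift $j\mapsto j+1$ of the coefficients corresponds to relabelling the type B rounds $s\mapsto s+1$. This relabelling permutes the rounds while preserving labels, so $L$ is invariant under the shift. Averaging $\alpha$ over all $m$ cyclic shifts therefore yields the constant vector $\alpha_j=c/m$, and by convexity this does not increase the loss.

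For such $f$, we have $f(x_t)=c\sigma_t$ on type A rounds and $f(x_t)=c\delta\sigma_t$ on type B rounds. Hence
\[
L=\tfrac12\bigl[(1-c)^2+(1-c\delta)^2\bigr].
\]
This quadratic is minimized at $c=(1+\delta)/(1+\delta^2)$, where it equals $(1-\delta)^2/(2(1+\delta^2))$. Converting via $(\sigma_t-f)^2=4(Y_t-q_f)^2$ gives the bound $(1-\delta)^2/(8(1+\delta^2))$.

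The delicate point is checking that the shift symmetry really is an automorphism of the transcript, including the negated duplicate rounds. It is, because the pattern depends only on $(j-s)\bmod m$.
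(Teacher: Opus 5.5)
Your proof is correct and takes essentially the same approach as the paper. Both use two equal blocks (one where every rule is correct, one where exactly a $(1+\delta)/2$ fraction are), the averaged-margin argument for (i)--(ii), and symmetrization plus convexity to reduce (iii) to the quadratic $\frac12[(1-c)^2+(1-c\delta)^2]$. The only differences are that the paper uses one round per $r$-subset of the rules with the full permutation group and all labels equal to one, whereas your cyclic windows with the cyclic group give a smaller transcript and work equally well; your negated duplicate rounds are harmless but unnecessary.
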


\begin{proof}
Choose an integer $k$ large enough that
$2/k<\min\{\eta,1-\gamma\}$, and set
\[
  r=\left\lceil\frac{(1+\gamma)k}{2}\right\rceil,
  \qquad
  \delta=\frac{2r-k}{k}.
\]
Then $k/2<r<k$ and
$\gamma\le\delta<\min\{\gamma+\eta,1\}$.  Let
$h_1,\ldots,h_k$ be binary-valued hypotheses and include their negations in
$\calH$.  All labels equal one, so $\sigma_t=1$ on every round.

The transcript has two equally large parts.  On every context in the first
part, set $h_j(x_t)=1$ for all $j$.  The second part contains one context for
each $r$-element subset $A\subseteq[k]$, and on that context set
\[
  h_j(x_A)=
  \begin{cases}
    1,&j\in A,\\
    -1,&j\notin A.
  \end{cases}
\]
Repeat the all-positive context $\binom{k}{r}$ times so that the two parts
have equal size.  At every context,
\[
  \frac1k\sum_{j=1}^k \sigma_t h_j(x_t)
  \ge \delta.
\]
Consequently, for every reweighting $w$ with positive mass,
\[
  \frac1k\sum_{j=1}^k
  \frac{\sum_{t=1}^Tw_t\sigma_th_j(x_t)}{\sum_{t=1}^Tw_t}
  \ge\delta.
\]
Some $h_j$ must therefore have weighted edge at least $\delta$, proving
(i).  The uniform average $k^{-1}\sum_jh_j$ equals $1$ on the first part
and $\delta$ on the second, so its sign is always correct.  On the other
hand, every $h_j$ equals $-1$ on a positive fraction of the second part
because $r<k$.  This proves (ii).

It remains to minimize squared loss over the span.  Write
$f=\sum_{j=1}^k\alpha_jh_j$.  The transcript and its squared-loss objective
are invariant under permutations of the $k$ coordinates.  Averaging the
coefficient vector over all permutations and using convexity therefore
cannot increase loss.  Hence an optimum has
$\alpha_1=\cdots=\alpha_k=b/k$ for some $b\in\mathbb R$.  Its signed score
equals $b$ on the first part and $\delta b$ on the second, so its Brier loss
is
\[
  \psi(b)
  =\frac{(1-b)^2+(1-\delta b)^2}{8}.
\]
This is minimized at $b^\star=(1+\delta)/(1+\delta^2)$, where it equals
$(1-\delta)^2/(8(1+\delta^2))$.  This proves (iii).
\end{proof}

The separation also survives clipping span scores to the probability range
when the coefficient norm is bounded.  Fix $\Lambda\ge1$, write
$f=\sum_j\alpha_jh_j$ with $\sum_j|\alpha_j|\le\Lambda$, and clip $f$ to
$[-1,1]$ before converting it to a probability.  On the second half of the
construction, the average of $f$ over all $r$-subsets is
$\delta\sum_j\alpha_j\le\delta\Lambda$.  For every
$u\in[-\Lambda,\Lambda]$,
\[
  \operatorname{clip}_{[-1,1]}(u)
  \le \frac{2u+\Lambda-1}{\Lambda+1}.
\]
If $\delta\le1/(2\Lambda)$, the average clipped signed score on this half is
therefore at most $\Lambda/(\Lambda+1)$.  Jensen's inequality shows that
these rounds have average Brier loss at least
$1/(4(\Lambda+1)^2)$, and hence the full transcript has average Brier loss
at least $1/(8(\Lambda+1)^2)$.  In particular, clipping does not remove the
separation for any fixed coefficient-norm budget.  Moreover, perfect clipped
prediction would require $f(x_A)\ge1$ for every $r$-subset $A$; averaging
these inequalities gives
$\delta\sum_j\alpha_j\ge1$, and therefore
$\sum_j|\alpha_j|\ge1/\delta$.

As $\delta\to0$, the span's best squared loss approaches $1/8$, while
Corollary~\ref{cor:second-order-boosting} forces the Defensive Booster's
Brier score and classification error to vanish as $T$ grows.  Thus a
Brier/span-regret guarantee alone can leave constant squared loss on
transcripts where the smooth weak-learning condition forces near-perfect
prediction.

The reverse implication fails for a different reason: a small set of rounds
can support a zero-edge reweighting while contributing little to average
squared loss.

\begin{proposition}[Converse separation]
\label{prop:converse-separation}
For every $T$ and every even $m\in\{2,\ldots,T\}$, there is a symmetric
binary-valued class
$\calH=\{h_1,-h_1,h_2,-h_2\}$ and a transcript
$(x_t,Y_t)_{t\le T}$ such that:
\begin{enumerate}[label=(\roman*),leftmargin=2em]
\item the smooth weak-learning condition fails for every
      $\rho\le m/T$ and every $\gamma>0$;
\item the span comparator $f=(h_1+h_2)/2\in\spanop_1(\calH)$ induces the score
      $q_f=(1+f)/2$ with average squared loss
      $T^{-1}\sum_t(Y_t-q_f(x_t))^2=m/(4T)$.
\end{enumerate}
\end{proposition}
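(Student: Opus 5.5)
The plan is to build the transcript explicitly and make the two claims almost immediate. The idea is to plant $m$ ``uninformative'' rounds on which $h_1$ and $h_2$ disagree in a balanced way. These rounds will carry a zero-edge reweighting, while the averaged comparator predicts $1/2$ on them and is exact everywhere else.

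\textbf{Construction.} Set every label to $Y_t=1$, so $\sigma_t=1$ on every round. Split the rounds as follows.
\begin{enumerate}[label=(\alph*),leftmargin=2em]
\item On the first $m/2$ rounds, set $h_1(x_t)=1$ and $h_2(x_t)=-1$.
\item On the next $m/2$ rounds, set $h_1(x_t)=-1$ and $h_2(x_t)=1$.
\item On the remaining $T-m$ rounds, set $h_1(x_t)=h_2(x_t)=1$.
\end{enumerate}
Use distinct contexts for the three groups, so the hypotheses are well defined, and take $\calH=\{h_1,-h_1,h_2,-h_2\}$. This class is symmetric and binary-valued. Evenness of $m$ is used exactly to make the two balanced halves in (a) and (b) equal in size.

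\textbf{Claim (i).} Let $w_t=1$ on the $m$ planted rounds and $w_t=0$ otherwise. Then $w$ is a reweighting with $\rho(w)=m/T$, so it is $\rho$-smooth for every $\rho\le m/T$. For $h_1$,
\[
  \sum_t w_t\sigma_t h_1(x_t)=\tfrac m2-\tfrac m2=0,
\]
and the same computation gives $0$ for $h_2$. By symmetry it is also $0$ for $-h_1$ and $-h_2$. Hence $\edge_\calH(w)=0<\gamma$ for every $\gamma>0$, so this $w$ violates the $(\rho,\gamma)$-smooth weak-learning condition of Definition~\ref{def:smooth-edge} for every such $\rho$ and $\gamma$.

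\textbf{Claim (ii).} The comparator $f=(h_1+h_2)/2$ has coefficient norm $1$, so $f\in\spanop_1(\calH)$. On the planted rounds $f=0$, so $q_f=1/2$ and each such round has squared loss $1/4$. On the remaining rounds $f=1$, so $q_f=1=Y_t$ and the loss is $0$. Averaging gives
\[
  \frac1T\sum_{t=1}^T\bigl(Y_t-q_f(x_t)\bigr)^2=\frac{m}{4T}.
\]

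There is no genuine obstacle here. The only design constraint is to choose the planted rounds so that they simultaneously zero out every hypothesis's correlation and give the comparator loss exactly $1/4$ there. Both requirements are met by making $h_1=-h_2$ on those rounds, with each sign pattern occurring equally often.
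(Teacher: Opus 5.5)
Your proposal is correct and follows essentially the same construction as the paper's proof: $m/2$ rounds each with the two opposite disagreement patterns, the indicator reweighting on those rounds as the zero-edge witness, and $f=(h_1+h_2)/2$ predicting $1/2$ there and exactly elsewhere. The only difference is that the paper allows arbitrary labels by prescribing $\sigma_t h_j(x_t)$ rather than $h_j(x_t)$, whereas you fix $\sigma_t\equiv 1$; this is a harmless special case.
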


\begin{proof}
Choose distinct contexts $x_1,\ldots,x_T$, arbitrary binary labels, and let
$R\subseteq\{1,\ldots,T\}$ have size $m$.  Split $R$ into equal parts
$R_1$ and $R_2$.  Outside $R$, set
$h_1(x_t)=h_2(x_t)=\sigma_t$.  On $R_1$, set
$(\sigma_th_1(x_t),\sigma_th_2(x_t))=(1,-1)$; on $R_2$, reverse these two
values.  Consider the reweighting
$w_t=\mathbf 1\{t\in R\}$.  It has density $m/T$, and
\[
  \sum_{t=1}^T w_t\sigma_th_1(x_t)
  =
  \sum_{t=1}^T w_t\sigma_th_2(x_t)
  =0
\]
and the same holds for their negations, so $\edge_\calH(w)=0$.  Hence the
$(\rho,\gamma)$-smooth weak-learning condition fails for every
$\rho\le m/T$ and every $\gamma>0$.

The comparator $f=(h_1+h_2)/2$ equals $\sigma_t$ off $R$ and zero on $R$.
It therefore predicts perfectly off $R$ and induces probability $1/2$ on
$R$, so each
round in $R$ contributes squared loss $1/4$.  Its average squared loss is
therefore $m/(4T)$.
\end{proof}

The random-label mixture experiment in Section~\ref{sec:experiments}
instantiates the second separation qualitatively: the randomly labeled rounds
support a smooth reweighting with small weak-class edge, while the structured
rounds retain an informative least-squares span score.

\section{Additional experimental details and results}
\label{sec:experimental-appendix}

This appendix section gives the complete protocol and additional results for
the binary prediction experiments.  The
synthetic streams separately examine the two guarantees: Brier loss and an
offline least-squares span comparator measure competition with the span,
while randomized classification error
$T^{-1}\sum_t|Y_t-p_t|$ measures weak-to-strong performance and equals the
density of the forecaster's mistake weighting.  The real streams evaluate
forecasting and classification performance on examples processed in their
recorded order.  Appendices~\ref{sec:regression-experiments}
and~\ref{sec:adaptive-diagnostics} report the regression and strongly adaptive
experiments, respectively.

\subsection{Protocol and implementation}
\label{sec:experimental-protocol}

\paragraph{Algorithms.}
The main comparison includes eight methods.  \textsc{Defensive} implements the
Defensive Booster
(Algorithm~\ref{alg:defensive}), including the two class-independent scalar
adaptive-OGD states in Definition~\ref{def:scalar-ogd}.
For the real and distribution-shift experiments, we additionally report
\textsc{Adaptive Def.}, the strongly adaptive Defensive Booster from
Section~\ref{sec:interval-boosting}; it maintains one weak learner at each
active dyadic scale.  Section~\ref{sec:adaptive-diagnostics} gives its
implementation details and local diagnostics.

The two unboosted controls each maintain a single learner.
\textsc{Unboosted reg.} performs online squared-loss regression over the same
weak class, using online linear optimization on the squared-loss gradient.
\textsc{Unboosted cls.} runs the online binary classifier used as the weak
learner by Online BBM, AdaBoost.OL, and OSBoost; it isolates the benefit of boosting from
that of the base classification algorithm.

The four ensemble baselines each maintain $100$ weak learners.  \textsc{OGB}
is the online
gradient boosting algorithm of
\citet{BeygelzimerHazanKaleLuo2015OnlineGradient}, specialized to
one-dimensional signed squared loss, equivalently Brier loss up to a factor
of four, and run with $100$ boosting stages and stage step
$\eta=(\log N)/N$, as suggested by their theoretical discussion.  In the
learner-count sweep, the $N=1$ endpoint uses the admissible step $\eta=1$.
\textsc{Online BBM} is the boost-by-majority algorithm of
\citet{BeygelzimerKaleLuo2015OptimalAdaptive}, the optimal-rate binary online
classification booster; we run its importance-weighted version with $100$ weak
learners and the target advantage described below.  Online BBM outputs hard binary predictions, so its Brier score in
our tables is the Brier score of the induced $0/1$ probability forecast.
\textsc{AdaBoost.OL} is the adaptive logistic-loss booster from the same
paper.  We implement the importance-weighted version used in its experiments:
projected online gradient descent learns each weak learner's coefficient, and
exponential weights aggregate the hard predictions of the partial ensembles.
The algorithm's probability of predicting one is used as its forecast.  Thus
its randomized-error score is the expected $0/1$ error of the original
randomized classifier, while its Brier score evaluates that probability
forecast directly.  AdaBoost.OL requires no target-edge parameter.
\textsc{OSBoost} is the online SmoothBoost algorithm of
\citet{ChenLinLu2012OnlineBoosting} with its online-convex-programming
combiner and $100$ weak learners, following the experimental convention in
that paper.  We use the target-selection rule
below and importance-weighted updates as in their experiments.  We include
OSBoost because its smooth reweighting
mechanism is especially close to the smooth-distribution mechanism analyzed
here.

The Online BBM/AdaBoost.OL paper and the OSBoost paper define $\gamma$ as the amount by which weak
classification error improves on $1/2$; equivalently, a binary weak prediction
has signed correlation $2\gamma$.  On a synthetic stream with a known
guaranteed advantage, we supply that value rather than tune it from the
observed results.  Thus Online BBM and OSBoost use $\gamma=.08$ on binary
aggregation, whose guaranteed correlation edge is $.16$, and $\gamma=.1$ on
the other streams.  This convention keeps the baselines' classification
parameter distinct from the correlation edge in our theorem.

Finally, \textsc{Brier aggregator} runs OGB, Online BBM, AdaBoost.OL, and OSBoost in
parallel.  Before observing $Y_t$, it predicts the weighted average of their
four probabilities and, after observing $Y_t$, multiplies each weight by
$\exp\{-(Y_t-p_{t,i})^2/2\}$.  Since Brier loss is $1/2$-exp-concave on
$[0,1]$, its cumulative Brier loss is at most that of the best constituent
plus $2\log 4$: for each binary outcome $y$,
$p\mapsto\exp(-(y-p)^2/2)$ is concave, so the standard exponential-weights
potential argument applies to the weighted-average forecast.  This baseline
maintains all $400$ constituent weak learners.

To place all outputs on the same probability scale, we interpret each
method's signed score $s_t\in[-1,1]$ as the probability
$p_t=(1+s_t)/2$.  OGB projects its aggregate score to $[-1,1]$, OSBoost uses
its simplex-weighted vote, AdaBoost.OL uses its exponential-weights
randomization probability, and \textsc{Unboosted cls.} and Online BBM return
$s_t\in\{-1,1\}$; we apply no post-hoc calibration.  Thus the reported Brier
loss evaluates the probability induced directly by each online algorithm's
output.

For each algorithm family, we use one predeclared tuning rule rather than
optimizing parameters against the observed performance of each stream.  In
\textsc{Defensive}, both scalar states use Definition~\ref{def:scalar-ogd}
with $V_0=4$; no scalar learning rate is tuned.  The linear-loss oracles used
by \textsc{Defensive}, \textsc{Unboosted reg.}, and every OGB stage are
second-order.  For a finite class with $d$ base hypotheses, we run
entropy-FTRL over its symmetric closure with
\[
  \eta_t
  =
  \min\left\{.25,
  \sqrt{\frac{\log(2d)}{4+\sum_{s<t}c_s^2}}\right\},
\]
where $c_s$ is the scalar multiplying the weak prediction in the round-$s$ 
linear gain.  For the Euclidean unit ball, we use projected adaptive gradient
ascent with step $.5/\sqrt{4+\sum_{s<t}\|c_sx_s\|_2^2}$.  These choices give
the coefficient-energy regret bounds required by the theory.  The finite
classification learners used by \textsc{Unboosted cls.}, Online BBM,
AdaBoost.OL, and OSBoost instead share the horizon-aware Hedge scale
$\min\{.5,\sqrt{8\log(d)/T}\}$; their linear-class counterparts share the
same weighted projected-perceptron update with a $1/\sqrt t$ step scale.

All synthetic results average $20$ random seeds with $T=3000$ rounds, and
tables report mean $\pm$ standard error.  The real datasets have fixed
chronological order and deterministic algorithm updates, so we report one
run.  We never shuffle a real stream: each example is predicted in its
recorded order before it is used for the update.
The \texttt{experiments/} directory contains the generators, loaders,
algorithm implementations, and commands used for every reported result.  The
synthetic streams are original controlled constructions rather than
reproductions of prior benchmarks.

\paragraph{Runtime.}
Let $C_H$ be the cost of one prediction/update for the online learner over
the weak class.  Each unboosted control and the Defensive Booster costs
$C_H+O(1)$ per round.  OGB, Online BBM, AdaBoost.OL, and OSBoost each cost
$N C_H+O(N)$ with $N$ weak learners; OSBoost additionally pays an
$O(N\log N)$ simplex projection when the OCP combiner updates.  We use
$N=100$ for all ensemble baselines in the main comparison, and additionally
evaluate each ensemble baseline with $N\in\{1,5,20,100\}$ learners.  The
Brier aggregator runs all four ensembles and therefore costs
$4NC_H+O(N\log N)$ and maintains $4N$ weak learners.

\subsection{Controlled synthetic streams}
\label{sec:synthetic-details}

\paragraph{Synthetic streams.}
We use five streams to separate two questions: whether every smooth
reweighting has a weak rule with positive edge, and whether the span contains
an informative squared-loss comparator.  The planted-decoy and binary aggregation
streams satisfy the smooth weak-learning condition by construction.  The
three linear streams instead test span competition with noisy labels or weak
signal.

For the two finite-class streams, an algorithm uses a context only through
the vector of weak-rule values, so we generate that vector directly.  Round
$t$ is represented by $(h_1(x_t),\ldots,h_d(x_t))$, and the linear-loss
oracle uses the symmetric closure of the coordinate rules.  For the other
three streams, the weak class is the Euclidean unit ball
$\calH=\{x\mapsto\langle u,x\rangle:\|u\|_2\le1\}$.

In \emph{planted decoy}, $\sigma_t$ is uniform on $\{-1,1\}$, the useful rule is
$h_1(x_t)=s_t\sigma_t$ for an independent uniform
$s_t\in\{.12,1\}$, and the other $199$ rule scores are independent uniform
signs.  Thus the $200$ base rules (and their symmetric closure) contain a
sign-perfect rule with edge at least $.12$ under every reweighting.

In \emph{binary aggregation}, the displayed weak class consists of $100$
opposite pairs $\{\pm h_j\}_{j=1}^{100}$, and signed labels are balanced and
randomly ordered.  One latent orientation from each pair is designated useful.
On half the rounds all $100$ useful orientations equal the signed label.  On
the remaining rounds, we cycle through $100$ binary patterns: each pattern
has $58$ useful orientations correct and $42$ incorrect, and every useful
orientation occupies each position equally often.  Independent sign flips
and a column permutation hide the useful orientation in each pair; we also
randomly permute the rounds.  The algorithms receive only the resulting
vector of $200$ weak-rule values.  On every round,
\[
  \frac1{100}\sum_{j=1}^{100}\sigma_t h_j(x_t)\ge .16.
\]
Here the sum uses the latent useful orientation from each pair.  After
multiplying by any nonzero weights and averaging, at least one displayed rule
has edge $.16$.  No individual rule is perfect, and the average of all $200$
displayed rules is identically zero, whereas the hidden average above has
positive margin on every round.

Adding the opposite rules, applying sign flips, and permuting the columns do
not change the span.  The same symmetry therefore yields an exact span
obstruction.  Averaging any coefficient vector over cyclic shifts cannot
increase squared loss, so an optimal fixed span score assigns every latent
useful orientation the same coefficient.  Its signed score is $b$ on the
first half of the stream and $.16b$ on the second half.  Minimizing the
resulting Brier loss gives
\[
  \min_{b\in\mathbb R}
  \frac{(1-b)^2+(1-.16b)^2}{8}
  =\frac{(1-.16)^2}{8(1+.16^2)}=.0860.
\]
Thus the stream is a finite cyclic version of the binary separation in
Proposition~\ref{prop:separation}: aggregation classifies perfectly, but every
fixed affine span score has constant Brier loss.

For the three linear streams, draw $x_t\sim N(0,I_d)$ and normalize it to
unit Euclidean norm; draw and normalize a fixed $\beta\sim N(0,I_d)$.  In
\emph{linear span}, $d=40$ and
$\sigma_t=\operatorname{sign}(\langle\beta,x_t\rangle+\xi_t)$ for
$\xi_t\sim N(0,.02^2)$.  In \emph{random-label mixture}, $d=30$ and the same model
uses noise $N(0,.05^2)$, but independently on $35\%$ of rounds its label is
replaced by a uniform random sign.  The reweighting supported on the replaced
labels has density close to $.35$, and because those labels are independent
of the contexts, its weak-class edge tends to zero as $T$ grows.  The
structured rounds still admit an informative linear score.
In \emph{random labels}, $d=30$ and every signed label is an independent
uniform sign.  All draws are independent except where the construction
explicitly shares $\sigma_t$ or $\beta$.

\paragraph{Offline span diagnostic.}
For each realized synthetic transcript, let
$\widehat\beta\in\arg\min_\beta\sum_t(\sigma_t-\langle\beta,x_t\rangle)^2$.
The induced real-valued prediction is
$q_{\widehat\beta}(x)=(1+\langle\widehat\beta,x\rangle)/2$, whose average
squared loss is exactly
\[
  \frac1T\sum_{t=1}^T(Y_t-q_{\widehat\beta}(x_t))^2
  =
  \frac1{4T}\sum_{t=1}^T
  (\sigma_t-\langle\widehat\beta,x_t\rangle)^2.
\]
We report this loss and a norm
$\Lambda_{\rm LS}$ witnessing membership in the comparator class of
Theorem~\ref{thm:span-guarantee}: $\|\widehat\beta\|_1$ for a finite
coordinate class and $\|\widehat\beta\|_2$ for the Euclidean linear ball.
The score is not clipped, so it is an unrestricted span comparator covered by
Theorem~\ref{thm:span-guarantee}.  Because $\widehat\beta$ is fit after observing the full
transcript, it is a diagnostic benchmark rather than an online algorithm.

Figure~\ref{fig:weak-experiments} reports randomized classification error on
the two streams satisfying the smooth weak-learning condition.
Figure~\ref{fig:brier-experiments} reports Brier loss on the three linear
streams.  Tables~\ref{tab:span-diagnostics} and~\ref{tab:experiments} give the
corresponding span benchmarks and complete numerical results.

\begin{figure}[H]
\centering
\includegraphics[width=.48\linewidth]{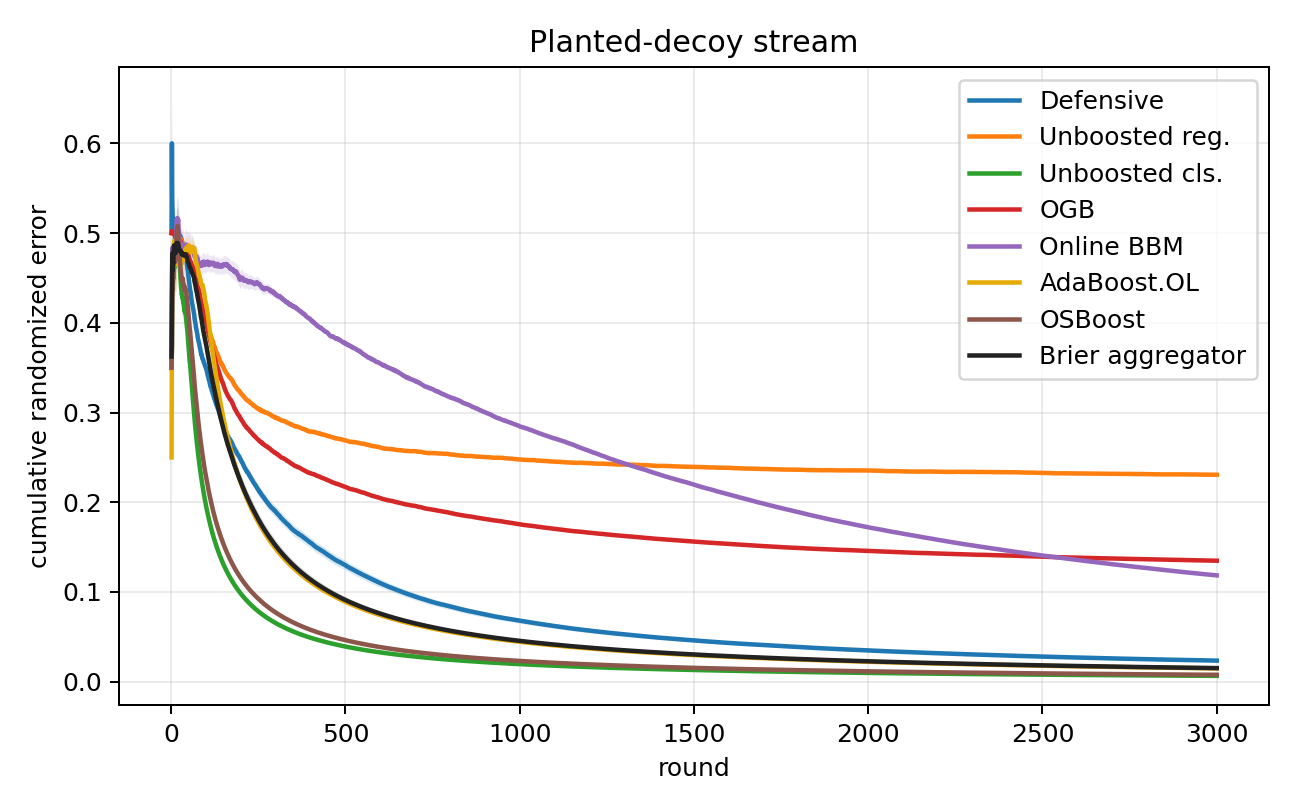}
\hfill
\includegraphics[width=.48\linewidth]{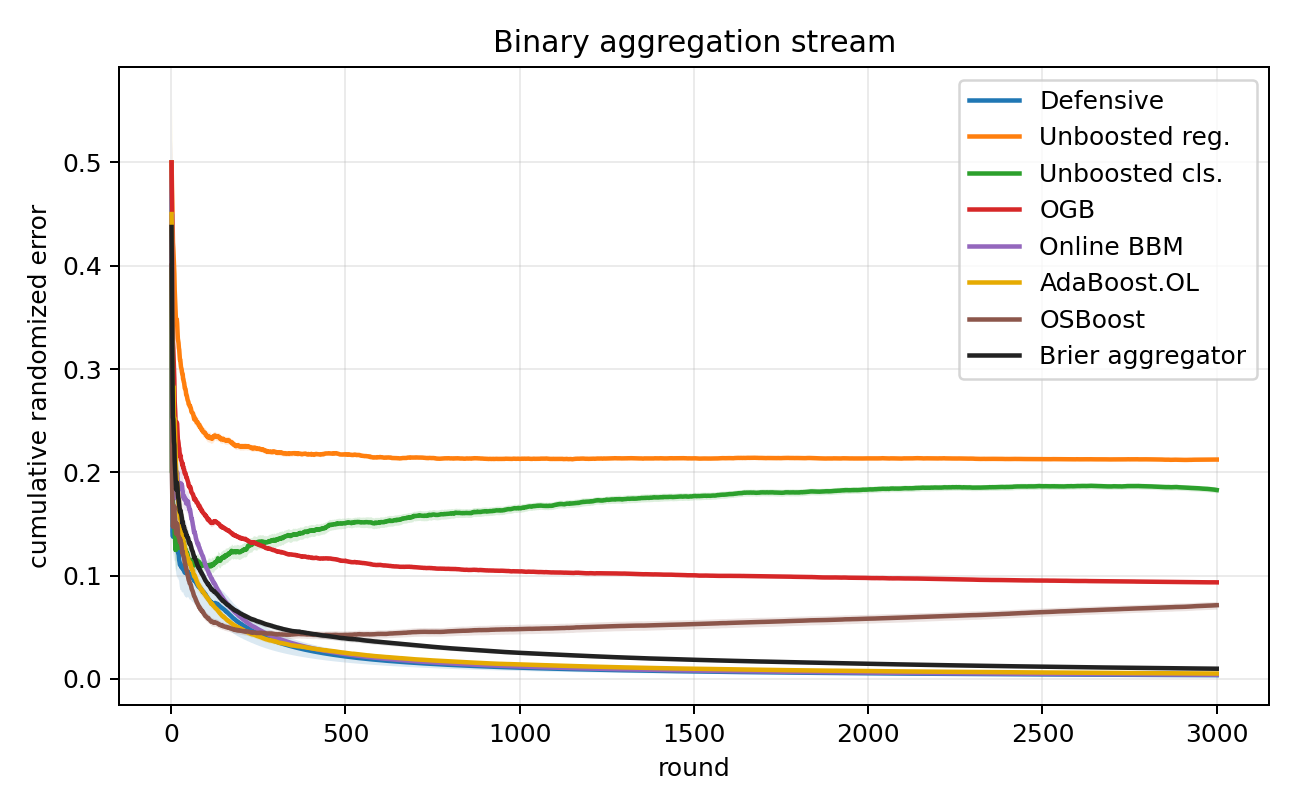}
\caption{Randomized classification error on the streams satisfying the smooth
weak-learning condition.  Left: on planted decoy, one base rule has the
correct sign on every round but is hidden among $199$ decoys; the unboosted
classifier and OSBoost identify it fastest, and the Defensive Booster also
reaches low error.  Right: on binary aggregation, no individual rule is
perfect and the average of all displayed rules is zero, but a hidden choice
of one orientation from each opposite pair has positive margin.  The
Defensive Booster and the three weak-to-strong ensembles all attain low
randomized error; the Defensive Booster uses one weak learner.}
\label{fig:weak-experiments}
\end{figure}

\begin{figure}[H]
\centering
\includegraphics[width=.48\linewidth]{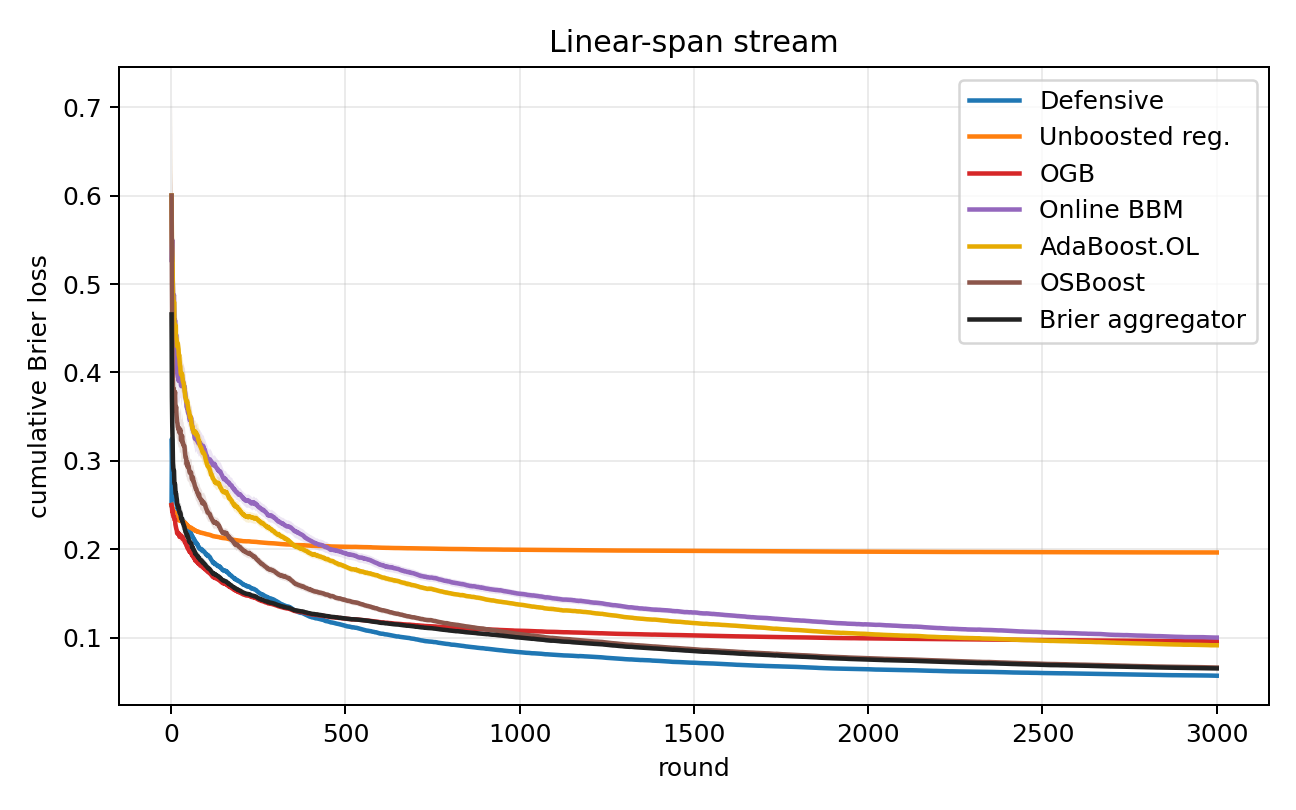}
\hfill
\includegraphics[width=.48\linewidth]{figures/random_label_mixture_brier_loss.png}

\vspace{.6em}
\includegraphics[width=.48\linewidth]{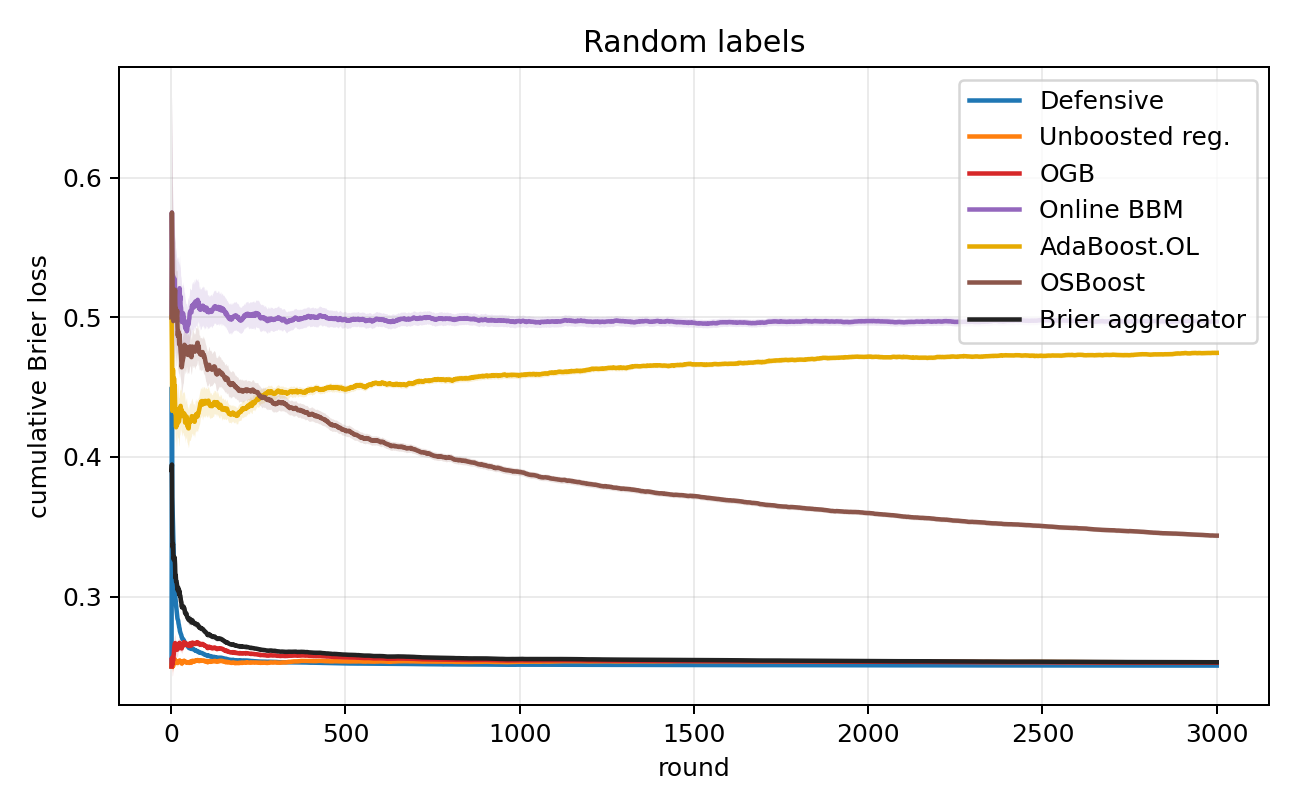}
\caption{Brier loss on the three linear streams.  Top left: on linear span,
the span comparator is informative even though the smooth weak-learning
condition can fail on near-margin examples.  Top right: on random-label
mixture, the structured rounds retain an informative span comparator; OGB,
unboosted regression, and the Defensive Booster outperform the
classification-boosting baselines.  Bottom: on random labels, the Defensive Booster, unboosted
regression, and OGB remain near the $p=1/2$ Brier baseline.  The hard-label
unboosted classifier is omitted from these plots for scale and reported in
Table~\ref{tab:experiments}.}
\label{fig:brier-experiments}
\end{figure}

\begin{table}[H]
\centering
\small
\begin{tabular}{lcc}
\hline
Stream & LS span squared loss & $\Lambda_{\rm LS}$ \\
\hline
Planted decoy & $.0887\pm.0004$ & $2.982\pm.020$ \\
Binary aggregation & $.0860\pm.0000$ & $1.131\pm.000$ \\
Linear span & $.0900\pm.0003$ & $5.084\pm.017$ \\
Random-label mixture & $.1847\pm.0007$ & $2.794\pm.018$ \\
Random labels & $.2472\pm.0001$ & $.576\pm.013$ \\
\hline
\end{tabular}
\caption{Offline least-squares span diagnostics, averaged over the same
$20$ seeds used for the online results.  The score is not clipped.  The
reported $\Lambda_{\rm LS}$ is the $\ell_1$ norm of the fitted coefficient
vector for a finite class or the
$\ell_2$ norm for the Euclidean linear class.  Hence each row is an actual
$\spanop_{\Lambda_{\rm LS}}(\calH)$ comparator covered by
Theorem~\ref{thm:span-guarantee}.}
\label{tab:span-diagnostics}
\end{table}

\clearpage

\begin{table}[H]
\centering
\scriptsize
\begin{tabular}{llccc}
\hline
Stream & Algorithm & 0/1 & Brier & Rand. err. \\
\hline
Planted decoy & Defensive & $.0166\pm.0008$ & $.0119\pm.0005$ & $.0235\pm.0010$ \\
Planted decoy & Unboosted reg. & $.0072\pm.0004$ & $.0999\pm.0004$ & $.2306\pm.0009$ \\
Planted decoy & Unboosted cls. & $.0065\pm.0003$ & $.0065\pm.0003$ & $.0065\pm.0003$ \\
Planted decoy & OGB & $.0162\pm.0005$ & $.0390\pm.0002$ & $.1348\pm.0005$ \\
Planted decoy & BBM & $.1185\pm.0008$ & $.1185\pm.0008$ & $.1185\pm.0008$ \\
Planted decoy & AdaBoost.OL & $.0147\pm.0004$ & $.0135\pm.0004$ & $.0148\pm.0004$ \\
Planted decoy & OSBoost & $.0073\pm.0003$ & $.0071\pm.0002$ & $.0077\pm.0003$ \\
Planted decoy & Brier agg. & $.0096\pm.0005$ & $.0069\pm.0003$ & $.0152\pm.0006$ \\
\hline
Binary aggregation & Defensive & $.0026\pm.0006$ & $.0018\pm.0005$ & $.0036\pm.0009$ \\
Binary aggregation & Unboosted reg. & $.1026\pm.0013$ & $.0944\pm.0001$ & $.2124\pm.0002$ \\
Binary aggregation & Unboosted cls. & $.1829\pm.0028$ & $.1829\pm.0028$ & $.1829\pm.0028$ \\
Binary aggregation & OGB & $.0331\pm.0009$ & $.0338\pm.0002$ & $.0935\pm.0003$ \\
Binary aggregation & BBM & $.0041\pm.0002$ & $.0041\pm.0002$ & $.0041\pm.0002$ \\
Binary aggregation & AdaBoost.OL & $.0042\pm.0004$ & $.0033\pm.0002$ & $.0052\pm.0005$ \\
Binary aggregation & OSBoost & $.0068\pm.0007$ & $.0192\pm.0014$ & $.0713\pm.0037$ \\
Binary aggregation & Brier agg. & $.0025\pm.0002$ & $.0025\pm.0001$ & $.0099\pm.0007$ \\
\hline
Linear span & Defensive & $.0744\pm.0007$ & $.0570\pm.0004$ & $.1120\pm.0008$ \\
Linear span & Unboosted reg. & $.0952\pm.0010$ & $.1964\pm.0001$ & $.4403\pm.0002$ \\
Linear span & Unboosted cls. & $.1233\pm.0015$ & $.1233\pm.0015$ & $.1233\pm.0015$ \\
Linear span & OGB & $.0866\pm.0010$ & $.0960\pm.0003$ & $.2511\pm.0006$ \\
Linear span & BBM & $.1002\pm.0014$ & $.1002\pm.0014$ & $.1002\pm.0014$ \\
Linear span & AdaBoost.OL & $.0966\pm.0011$ & $.0915\pm.0009$ & $.0971\pm.0010$ \\
Linear span & OSBoost & $.0849\pm.0011$ & $.0663\pm.0008$ & $.0953\pm.0011$ \\
Linear span & Brier agg. & $.0815\pm.0011$ & $.0653\pm.0008$ & $.1327\pm.0028$ \\
\hline
Random-label mixture & Defensive & $.2667\pm.0017$ & $.1965\pm.0007$ & $.3890\pm.0015$ \\
Random-label mixture & Unboosted reg. & $.2671\pm.0017$ & $.2158\pm.0003$ & $.4576\pm.0003$ \\
Random-label mixture & Unboosted cls. & $.2789\pm.0018$ & $.2789\pm.0018$ & $.2789\pm.0018$ \\
Random-label mixture & OGB & $.2656\pm.0013$ & $.1933\pm.0007$ & $.3859\pm.0013$ \\
Random-label mixture & BBM & $.2963\pm.0021$ & $.2963\pm.0021$ & $.2963\pm.0021$ \\
Random-label mixture & AdaBoost.OL & $.2758\pm.0022$ & $.2708\pm.0021$ & $.2757\pm.0022$ \\
Random-label mixture & OSBoost & $.3260\pm.0024$ & $.2467\pm.0012$ & $.3655\pm.0018$ \\
Random-label mixture & Brier agg. & $.2658\pm.0013$ & $.1937\pm.0007$ & $.3859\pm.0014$ \\
\hline
Random labels & Defensive & $.4980\pm.0020$ & $.2506\pm.0000$ & $.4999\pm.0001$ \\
Random labels & Unboosted reg. & $.4970\pm.0016$ & $.2529\pm.0001$ & $.4995\pm.0002$ \\
Random labels & Unboosted cls. & $.4978\pm.0015$ & $.4978\pm.0015$ & $.4978\pm.0015$ \\
Random labels & OGB & $.4968\pm.0019$ & $.2527\pm.0001$ & $.4995\pm.0002$ \\
Random labels & BBM & $.4971\pm.0025$ & $.4971\pm.0025$ & $.4971\pm.0025$ \\
Random labels & AdaBoost.OL & $.4983\pm.0020$ & $.4745\pm.0014$ & $.4995\pm.0019$ \\
Random labels & OSBoost & $.4994\pm.0024$ & $.3437\pm.0015$ & $.4977\pm.0016$ \\
Random labels & Brier agg. & $.4966\pm.0018$ & $.2530\pm.0001$ & $.4994\pm.0002$ \\
\hline
\end{tabular}
\caption{Average online performance over $20$ random seeds, reported as
mean $\pm$ standard error.  ``Rand. err.''
is $T^{-1}\sum_t |Y_t-p_t|$, the error of the randomized classifier induced
by the probability forecast.  \textsc{Unboosted cls.} and BBM output hard
labels, so their Brier and randomized-error entries equal their $0/1$ error.
\textsc{Brier agg.} combines the forecasts of OGB, BBM, AdaBoost.OL, and OSBoost before
observing the current label, then updates its weights after the label is
revealed; it therefore maintains $400$ weak learners.  AdaBoost.OL's Brier
entry scores its probability of predicting one, while its randomized-error
entry is the expected $0/1$ loss of its randomized classifier.
Entries are rounded to four decimals.}
\label{tab:experiments}
\end{table}

Online BBM is designed to output a hard majority prediction.  To check whether
its Brier results are merely an artifact of that convention, we also score the
normalized raw vote $(1+N^{-1}\sum_i h_{t,i}(x_t))/2$ as a probability.  This
diagnostic is not the output analyzed by the Online BBM theorem.
Figure~\ref{fig:bbm-vote-diagnostic} shows that the normalized vote lowers
planted-decoy Brier loss from $.1185$ to $.0838$, but remains far above the
best forecasting methods.  On binary aggregation, the hard output has Brier
loss and randomized error $.0041$, while the normalized vote has Brier loss
$.0520$ and randomized error $.1562$.  Thus a softer output helps on planted
decoy but hurts substantially on binary aggregation; it does not account for
the main comparisons.

\begin{figure}[H]
\centering
\includegraphics[width=.78\linewidth]{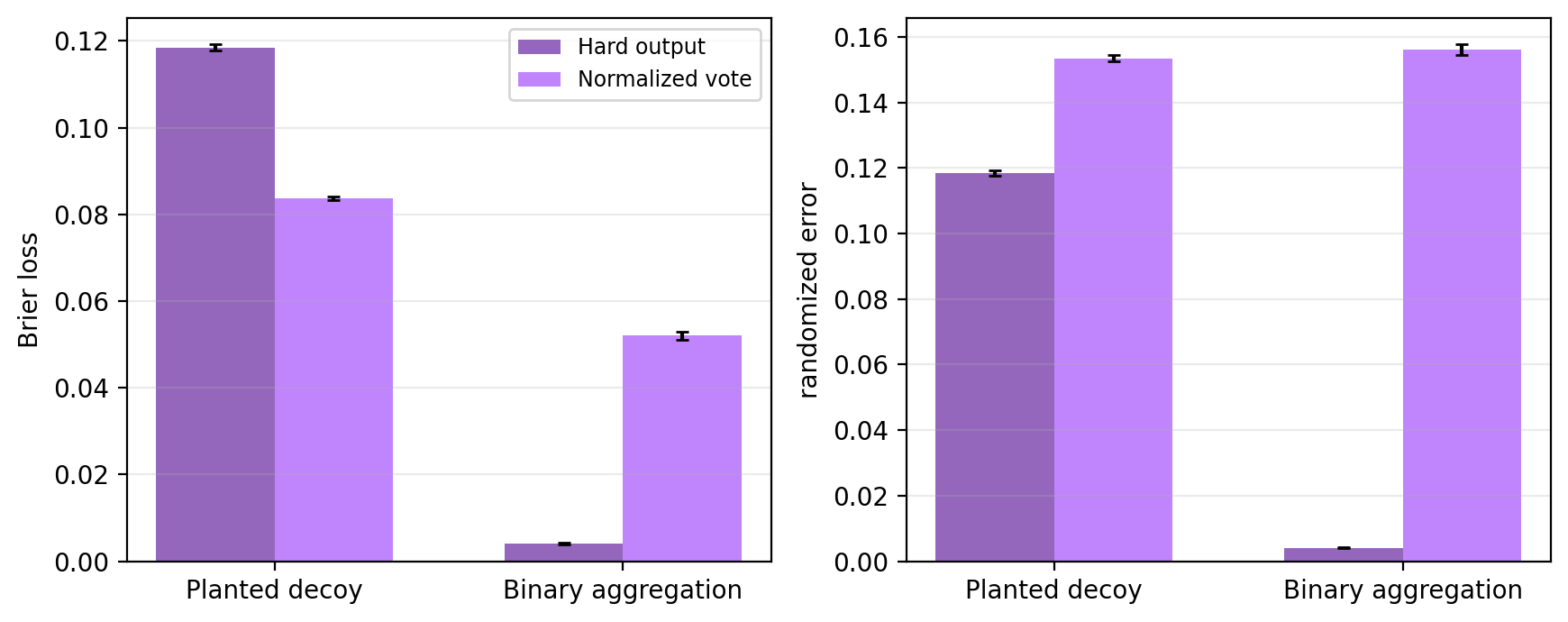}
\caption{Online BBM's specified hard output versus its normalized raw vote,
averaged over $20$ seeds.  The vote is scored directly as a probability, with
no post-hoc calibration.  It improves Brier loss on planted decoy but worsens
randomized error there; on binary aggregation, the specified hard output is
substantially better under both metrics.}
\label{fig:bbm-vote-diagnostic}
\end{figure}

\begin{table}[H]
\centering
\small
\begin{tabular}{lrrr}
\hline
Algorithm & Weak learners & Seconds per stream & Microseconds per round \\
\hline
Unboosted reg. & 1 & .032 & 11 \\
Unboosted cls. & 1 & .030 & 10 \\
Defensive & 1 & .046 & 15 \\
OGB & 100 & 2.959 & 986 \\
BBM & 100 & 1.478 & 493 \\
AdaBoost.OL & 100 & 2.292 & 764 \\
OSBoost & 100 & 1.753 & 584 \\
Brier aggregator & 400 & 8.482 & 2827 \\
\hline
\end{tabular}
\caption{Runtime summary for the implementation used in the experiments.
Wall-clock times are averages over all stream/seed runs, each with
$T=3000$ rounds.  The exact constants are implementation-dependent, but the
difference in maintained learners is structural: both unboosted controls and
the Defensive Booster maintain one online learner, whereas the ensemble
baselines maintain many weak learners in parallel.  The Brier aggregator's
time is the sum of its four constituent ensembles because all must be run.}
\label{tab:runtime}
\end{table}

\paragraph{Sensitivity to baseline parameters.}
The main experiments use one setting for each algorithm family on every
stream.  Figure~\ref{fig:parameter-sensitivity} varies, one at a time, the OGB
stage step, the classification learner's Hedge rate, and the target-edge
parameter on the binary aggregation stream.  Each value is $.25$, $.5$, $1$, $2$, or $4$ times
the reported setting, and each point averages $10$ seeds.  OGB improves
steadily with its stage step but has higher Brier loss than the Defensive
Booster at every tested value.  Online BBM is stable across both sweeps, and
AdaBoost.OL has slightly lower randomized error than the Defensive Booster at
the smallest tested learning rate.  OSBoost is substantially more sensitive
to both its classification learning rate and target edge.  Thus the main
comparison does not depend on a single narrow baseline setting, although the
relative ordering of the lowest-error classification methods can change.

\begin{figure}[H]
\centering
\includegraphics[width=.98\linewidth]{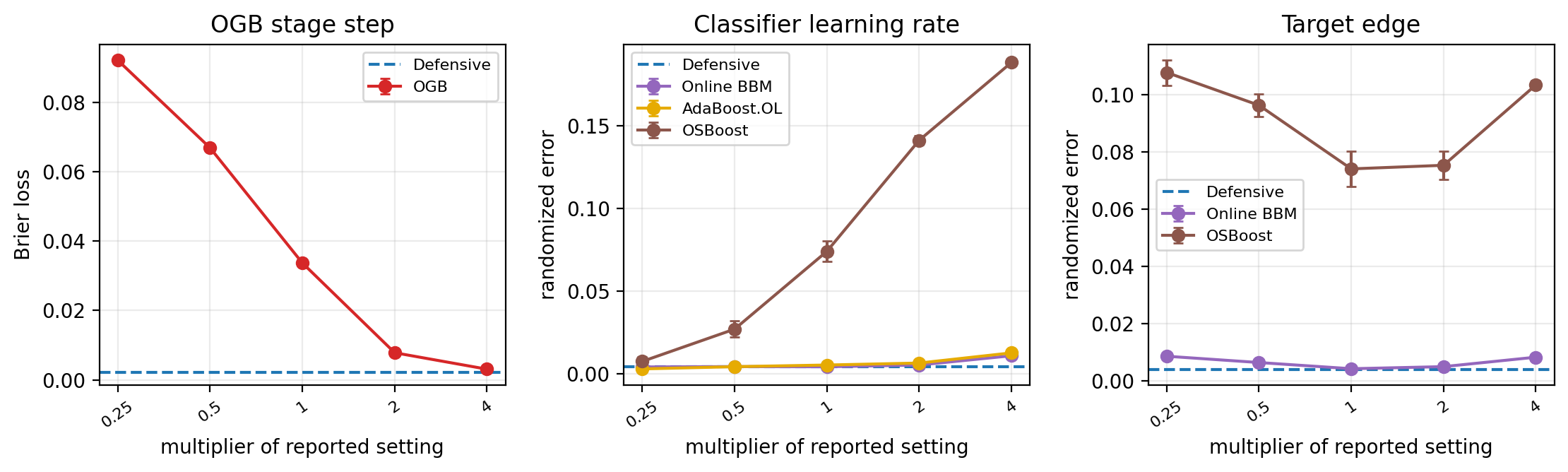}
\caption{One-at-a-time parameter sensitivity on the binary aggregation stream.  The
dashed blue line is the parameter-free Defensive Booster.  The middle and
right panels show randomized classification error; the left panel shows Brier
loss.  Error bars are standard errors over $10$ seeds.  A target-edge
multiplier above one deliberately overstates the edge guaranteed by the
construction and is included as a misspecification check.}
\label{fig:parameter-sensitivity}
\end{figure}

Figure~\ref{fig:real-parameter-sensitivity} repeats the same sixteen-fold
sweep on Electricity and Occupancy, the two real streams on which the
Defensive Booster has the largest advantage.  It retains the lowest Brier
loss under every tested setting.  This check does not tune the reported
results: the main tables continue to use multiplier one for every dataset.

\begin{figure}[H]
\centering
\includegraphics[width=.98\linewidth]{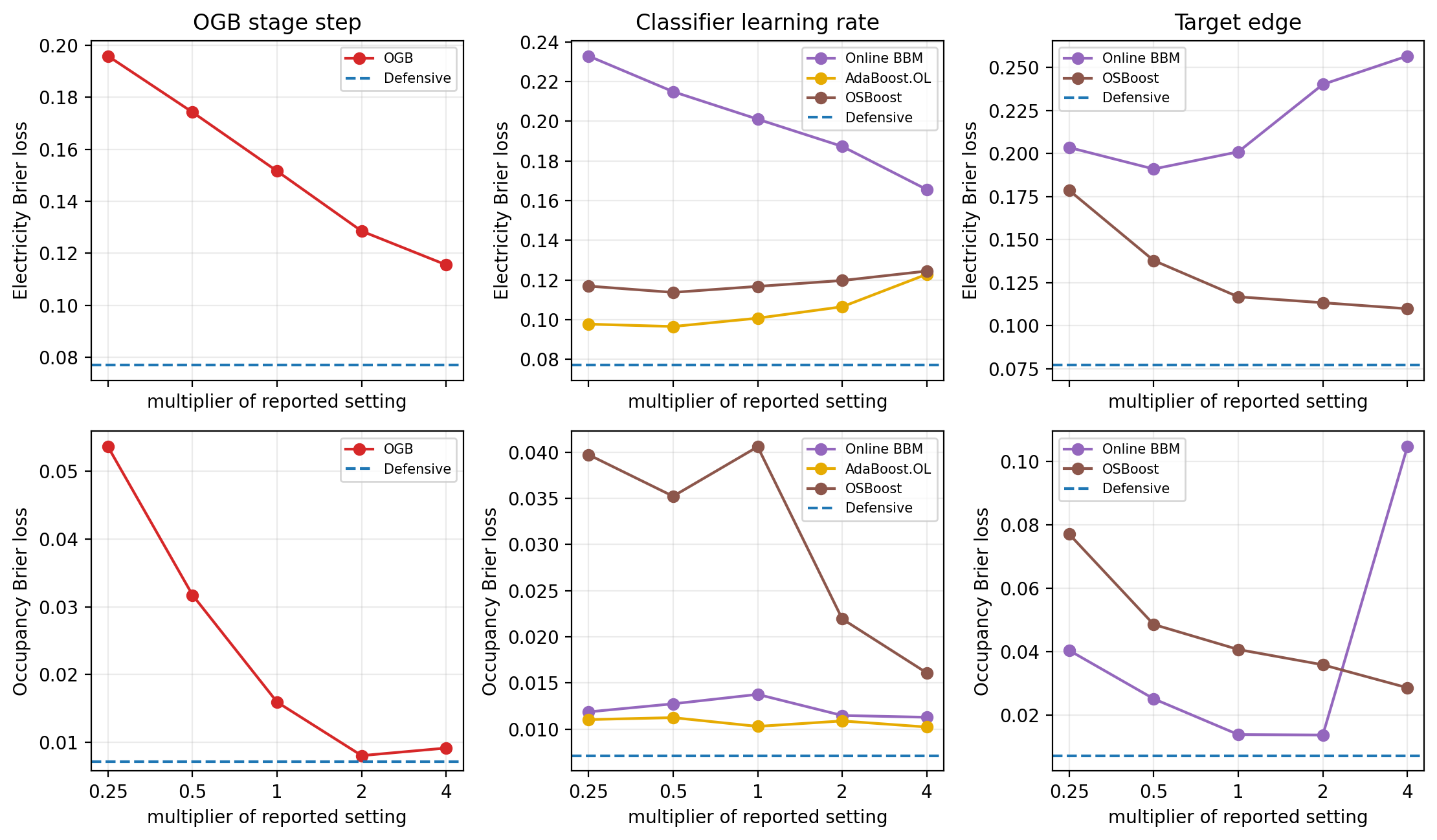}
\caption{One-at-a-time parameter sensitivity on the complete Electricity
(top) and Occupancy (bottom) streams.  Multipliers range from $.25$ to $4$.
The dashed blue line is the parameter-free Defensive Booster.  Every panel
shows final Brier loss; lower is better.  AdaBoost.OL has no target-edge
parameter and therefore appears only in the middle column.}
\label{fig:real-parameter-sensitivity}
\end{figure}

\paragraph{Takeaways.}
The planted-decoy stream tests identification within a large weak class, but
does not require boosting: one base rule already has the correct sign on every
round.  Accordingly, the unboosted classifier and OSBoost have the smallest
errors, and the Defensive Booster also attains low Brier and randomized error.
Unboosted regression has low threshold error but poor Brier and randomized
error, illustrating why probability forecasting is stricter than threshold
accuracy.  The unrestricted least-squares span score has loss $.0887$, whereas
the Defensive Booster reaches $.0119$; this weak-to-strong behavior is not
explained by span fitting.

The binary aggregation stream requires genuine aggregation: no individual
binary rule is perfect, and averaging all displayed rules gives zero.  A
hidden choice of one orientation from each pair nevertheless classifies every
round correctly.  The unboosted classifier has error $.1829$, while the
Defensive Booster reaches Brier loss $.0018$ and randomized error $.0036$
using one weak learner, despite the $.0860$ loss floor for fixed affine span
scores.  At $N=100$, all three weak-to-strong ensembles also reach low hard
error, but each has higher Brier loss and randomized error than the Defensive
Booster.  The ensemble-size comparison in Figure~\ref{fig:compute-sweep}
shows how these values compare with each ensemble as its number of weak
learners grows.

The three linear streams in Figure~\ref{fig:brier-experiments} examine Brier
loss when the smooth weak-learning condition need not hold.  On linear span,
the offline span score has loss $.0900$, and the Defensive Booster attains
$.0570$.  On random-label mixture, the randomly labeled rounds provide a
smooth reweighting with small weak-class edge, while the structured rounds
retain an informative span comparator.  OGB, the unboosted regressor, and the
Defensive Booster have substantially lower Brier loss than the unboosted
classifier and the classification-boosting baselines.  OSBoost and Online BBM remain more competitive in randomized error
than in Brier loss because randomized error depends linearly on their signed
margins.  Figure~\ref{fig:certificate-diagnostic} examines the
random-label mixture more closely: the Defensive Booster's multiaccuracy and
self-orthogonality errors and the weak-class edge under its mistake weighting
all decay, although the weighting retains nontrivial density.
Finally, on random labels, no method has a real signal; the Defensive
Booster, unboosted regressor, and OGB stay near the $p=1/2$ Brier-loss
baseline, while the unboosted classifier and the classification boosters have
larger Brier loss.  The weak-class edge under the Defensive Booster's mistake
weighting remains small.

Under these globally fixed tuning rules, the classification boosters are strongest
on the binary aggregation stream, while OGB is strongest on the random-label
mixture.  The Defensive Booster is competitive with the better family in both
comparisons.  Table~\ref{tab:runtime} shows that it does so while maintaining
one online weak learner rather than an ensemble of $100$ learners.

\subsection{Naturally ordered real streams}
\label{sec:real-data-details}

\paragraph{Data and preprocessing.}
We evaluate four public binary prediction streams in their recorded order.
UCI Bank Marketing predicts whether a client subscribes to a term deposit
\citep{MoroRitaCortez2012BankMarketing}; we use the date order supplied by the
full dataset.  The MOA Electricity stream \citep{MOADatasets} predicts price
movement in the New South Wales electricity market; the data originate in
Harries's electricity-pricing study \citep{Harries1999Electricity}.
The MOA Airlines stream predicts whether a flight is delayed
\citep{MOADatasets}.  UCI Occupancy Detection
\citep{CandanedoFeldheim2016Occupancy} predicts whether an office is occupied
from minute-level sensor measurements, which we merge by recorded timestamp.
We do not shuffle any dataset: on
each round the algorithm receives the next context, predicts, and then
observes its label.

We use a $128$-dimensional deterministic signed-hash representation and the
Euclidean unit-ball weak class for all four datasets.  Each row includes a
bias feature and is normalized to unit norm.  Categorical values are hashed as
indicators; numeric values are standardized using unlabeled covariates from
earlier rounds, clipped to five running standard deviations, and then hashed.
The current numeric value is transformed using the mean and variance of that
feature among preceding contexts and is incorporated into those statistics
only afterward.  Thus no feature or label from a future round enters the
current representation.  For Bank Marketing we drop call duration, which
is unavailable before the outcome.  Occupancy uses contemporaneous
temperature, humidity, light, and CO$_2$ measurements, the derived humidity
ratio, and cyclic encodings of the recorded time and weekday.  The Bank,
Electricity, Airlines, and Occupancy runs use all $41{,}188$, $45{,}312$,
$539{,}383$, and $20{,}560$ examples, respectively.

Figure~\ref{fig:real-data} plots cumulative average Brier loss over each
stream.  Table~\ref{tab:real-data} reports the final Brier, deterministic, and
randomized classification errors, together with runtime.

\begin{figure}[H]
\centering
\includegraphics[width=.48\linewidth]{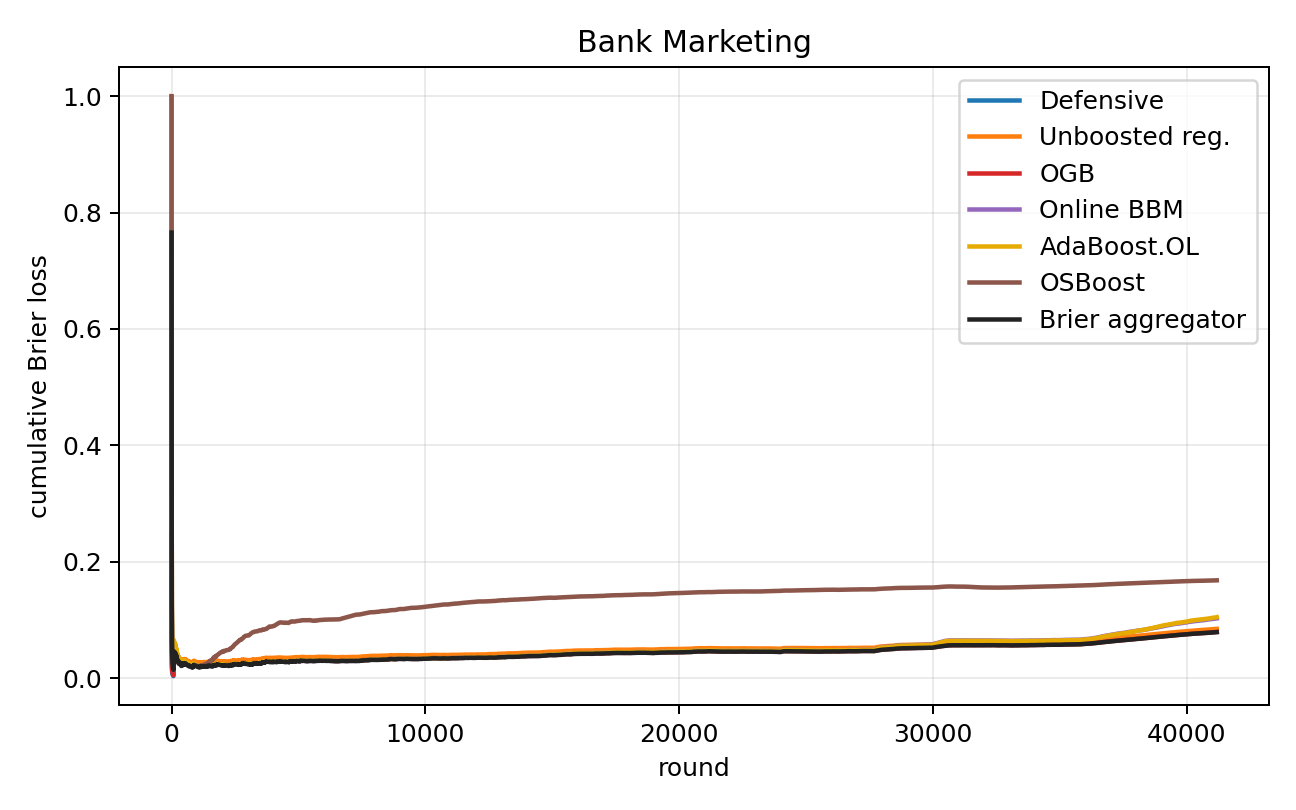}
\hfill
\includegraphics[width=.48\linewidth]{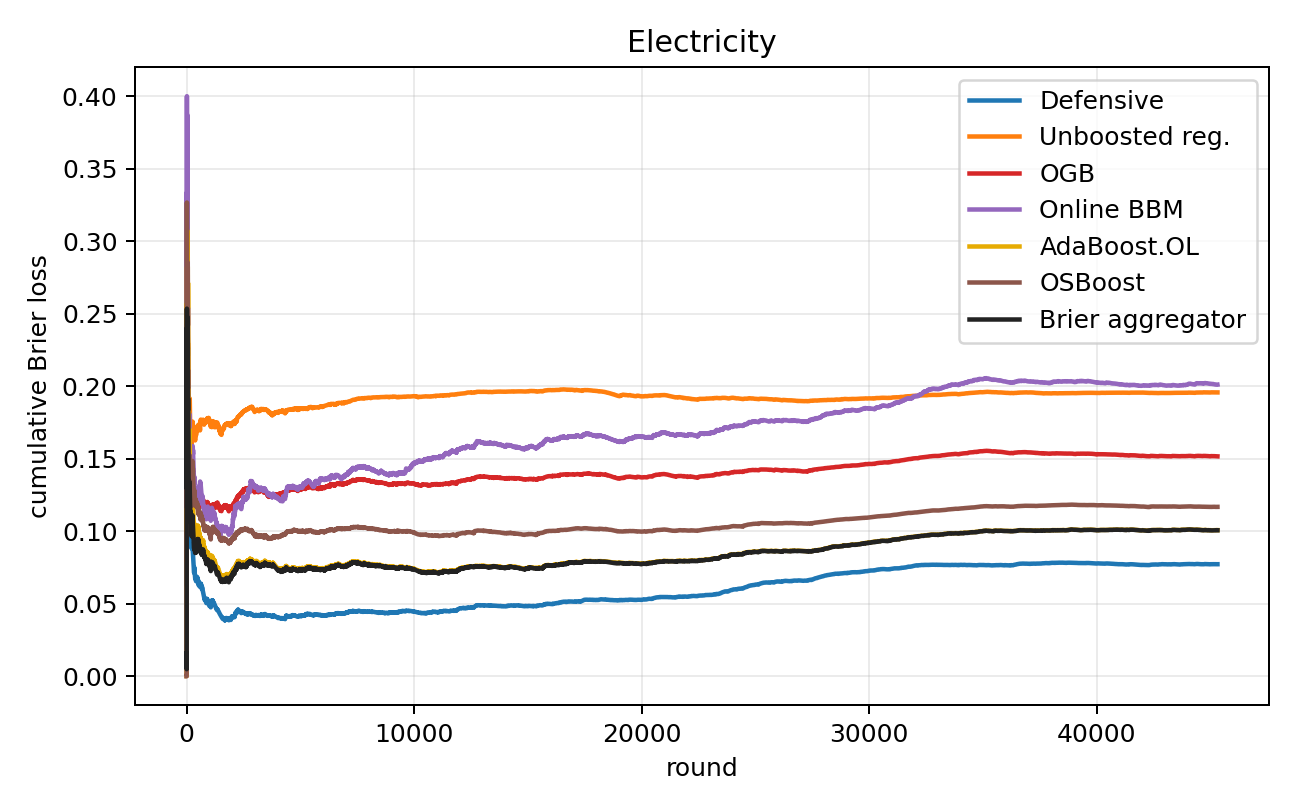}

\vspace{.6em}
\includegraphics[width=.48\linewidth]{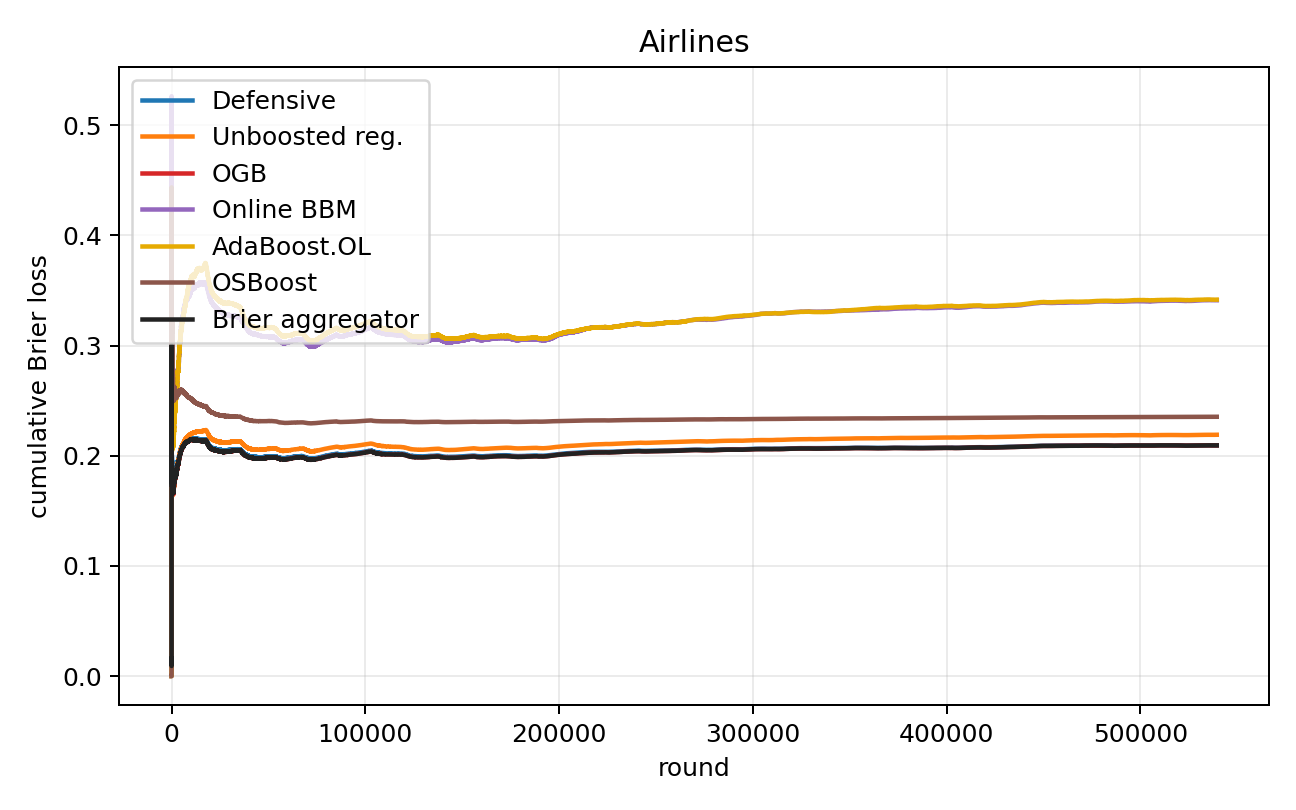}
\hfill
\includegraphics[width=.48\linewidth]{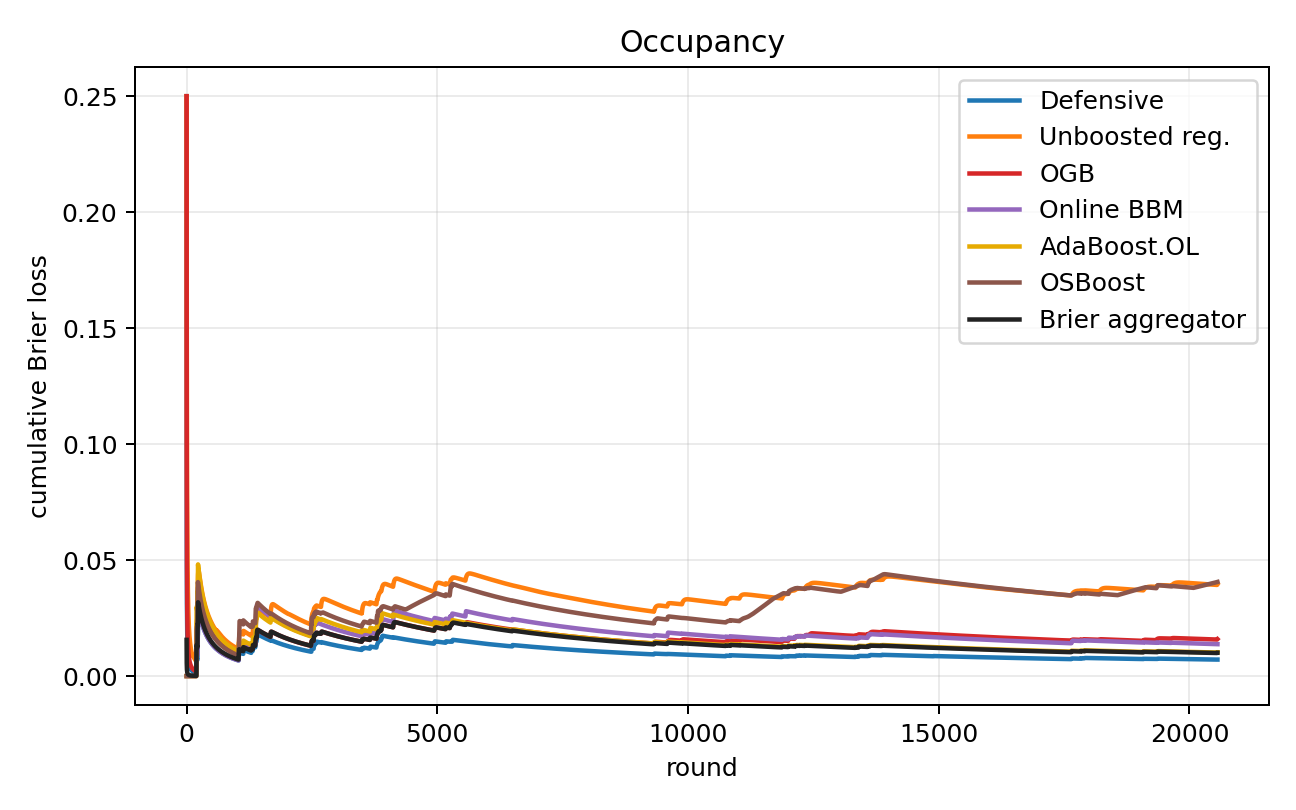}
\caption{Cumulative average Brier loss on the real-data streams.  Top left:
on Bank Marketing, OGB
and the Defensive Booster are close.  Top right: on Electricity, the
Defensive Booster has substantially lower Brier loss than all six plotted
baselines.  Bottom left: on Airlines, the Defensive Booster, OGB, and the
Brier aggregator are nearly indistinguishable.  Bottom right: on Occupancy,
the Defensive Booster has the lowest Brier loss.
Each curve is the average loss incurred up to that point while processing the
dataset in recorded order.  The hard-label
unboosted classifier is omitted for scale and reported in
Table~\ref{tab:real-data}.}
\label{fig:real-data}
\end{figure}

\begin{table}[H]
\centering
\small
\begin{tabular}{llrrrrr}
\hline
Dataset & Algorithm & 0/1 & Brier & Base & Rand. err. & $\mu$s/round \\
\hline
Bank & Defensive & .103 & .080 & .100 & .159 & 14 \\
Bank & Adaptive Def. & .103 & .081 & .100 & .159 & 86 \\
Bank & Unboosted reg. & .103 & .085 & .100 & .212 & 10 \\
Bank & Unboosted cls. & .109 & .109 & .100 & .109 & 8 \\
Bank & OGB & .102 & .079 & .100 & .157 & 803 \\
Bank & BBM & .103 & .103 & .100 & .103 & 300 \\
Bank & AdaBoost.OL & .106 & .105 & .100 & .106 & 522 \\
Bank & OSBoost & .131 & .168 & .100 & .387 & 356 \\
Bank & Brier agg. & .102 & .079 & .100 & .148 & 1981 \\
\hline
Electricity & Defensive & .108 & .077 & .244 & .154 & 15 \\
Electricity & Adaptive Def. & .085 & .064 & .244 & .129 & 86 \\
Electricity & Unboosted reg. & .273 & .196 & .244 & .422 & 11 \\
Electricity & Unboosted cls. & .367 & .367 & .244 & .367 & 8 \\
Electricity & OGB & .219 & .152 & .244 & .310 & 807 \\
Electricity & BBM & .201 & .201 & .244 & .201 & 329 \\
Electricity & AdaBoost.OL & .102 & .101 & .244 & .102 & 526 \\
Electricity & OSBoost & .111 & .117 & .244 & .280 & 349 \\
Electricity & Brier agg. & .102 & .101 & .244 & .102 & 2011 \\
\hline
Airlines & Defensive & .332 & .209 & .247 & .419 & 16 \\
Airlines & Adaptive Def. & .324 & .207 & .247 & .413 & 89 \\
Airlines & Unboosted reg. & .348 & .219 & .247 & .451 & 11 \\
Airlines & Unboosted cls. & .390 & .390 & .247 & .390 & 10 \\
Airlines & OGB & .332 & .209 & .247 & .419 & 826 \\
Airlines & BBM & .341 & .341 & .247 & .341 & 352 \\
Airlines & AdaBoost.OL & .342 & .342 & .247 & .342 & 543 \\
Airlines & OSBoost & .370 & .235 & .247 & .480 & 362 \\
Airlines & Brier agg. & .332 & .209 & .247 & .419 & 2082 \\
\hline
Occupancy & Defensive & .009 & .007 & .178 & .014 & 14 \\
Occupancy & Adaptive Def. & .007 & .007 & .178 & .013 & 85 \\
Occupancy & Unboosted reg. & .052 & .040 & .178 & .120 & 9 \\
Occupancy & Unboosted cls. & .096 & .096 & .178 & .096 & 8 \\
Occupancy & OGB & .022 & .016 & .178 & .033 & 838 \\
Occupancy & BBM & .014 & .014 & .178 & .014 & 288 \\
Occupancy & AdaBoost.OL & .011 & .010 & .178 & .011 & 536 \\
Occupancy & OSBoost & .015 & .041 & .178 & .088 & 382 \\
Occupancy & Brier agg. & .012 & .010 & .178 & .016 & 2044 \\
\hline
\end{tabular}
\caption{Average online performance on the real-data streams.  ``Base'' is
the Brier loss of the best constant probability forecast on the evaluated
dataset.
\textsc{Unboosted cls.} and BBM report the Brier score of their hard binary
predictions.  \textsc{Brier agg.} runs all four ensemble baselines.  Runtime is
reported for the same implementation and machine as Table~\ref{tab:runtime}.
The absolute times are implementation-dependent.  The basic Defensive
Booster maintains one weak-class learner, \textsc{Adaptive Def.} maintains
one at each active dyadic scale (20 for Airlines, 17 for Bank and Electricity,
and 16 for Occupancy), each ensemble baseline maintains $100$, and the Brier
aggregator maintains $400$.}
\label{tab:real-data}
\end{table}

Among the eight methods in the main comparison, the Defensive Booster has the
lowest Brier loss on Electricity and Occupancy.  On Occupancy its Brier loss
$.007$ is less than half OGB's $.016$; it also has the lowest deterministic
error.  AdaBoost.OL has the lowest randomized error on Occupancy and the
lowest two classification errors on Electricity.  These distinctions are
consistent with the algorithms' objectives: the Defensive Booster is
designed to forecast probabilities, while AdaBoost.OL directly optimizes
classification.  The Brier gains are
not explained merely by maintaining fewer learners: both unboosted controls
are substantially worse on Electricity and Occupancy.  The Brier aggregator
is best on Bank by $.0010$ over the Defensive Booster.  On Airlines, OGB has
the numerically smallest Brier loss, but it, the aggregator, and the Defensive
Booster differ by less than $6\cdot10^{-5}$.  The unboosted regressor is also
competitive on these two streams, while the classification methods are worse
on Brier loss.  On Electricity and Occupancy, however, the Defensive Booster's
Brier loss is respectively $61\%$ and $82\%$ lower than the unboosted
regressor's.  The Brier aggregator maintains
$400$ weak learners.
On the four real streams, the $N=100$ ensembles take $20$--$60\times$ as much
wall-clock time per round as the Defensive Booster, which maintains one weak
learner; the Brier aggregator takes about $130$--$150\times$ as much.  The
separate Brier and classification columns matter here: a boosted margin method
can classify accurately without producing the most accurate probability
forecasts.

\section{Extension to bounded real-valued outcomes}
\label{sec:bounded-outcomes}

It is evident that the Defensive Booster, as it is defined, does not require that the outcomes be binary; it can be applied essentially without modification in a regression setting with bounded scalar labels. We will now briefly state and discuss this natural extension: in a nutshell, the Defensive Booster's span guarantee is satisfied in the exact same way as in the binary setting.
Then, we will
evaluate this extension on three
chronological regression streams --- in these experiments, the Defensive
Booster has $17$--$29\%$ lower normalized mean squared error than 100-stage online
gradient boosting while maintaining one weak-class learner; OGB takes
$65$--$70\times$ as much wall-clock time per round in our implementation.

To begin, suppose that $Y_t\in[0,1]$ and that $p_t\in[0,1]$
is interpreted as a prediction of the bounded outcome.  Retain the affine
encoding
\[
  \sigma_t=2Y_t-1,
  \qquad
  \mu_t=2p_t-1,
  \qquad
  r_t=2(Y_t-p_t).
\]
The Defensive Booster is unchanged; also note that outcomes in any other fixed bounded
interval reduce to this setting by affine rescaling.
It is now easy to see that the span guarantee remains the same.
\begin{proposition}[Certificate and span guarantee for bounded outcomes]
\label{prop:bounded-certificate-span}
For every adaptive sequence with $Y_t\in[0,1]$, the Defensive Booster
satisfies the multiaccuracy and self-orthogonality guarantees of
Theorem~\ref{thm:residual-certificate}, with
\[
  S_T=4\sum_{t=1}^T(Y_t-p_t)^2.
\]
In particular, writing
$L_{2,T}=T^{-1}\sum_t(Y_t-p_t)^2$, for every
$f\in\spanop_\Lambda(\calH)$,
\[
  L_{2,T}
  \le
  \frac1T\sum_{t=1}^T(Y_t-q_f(x_t))^2
  +
  \frac{\Lambda A_H+A_S}{\sqrt T}\sqrt{L_{2,T}}
  +
  \frac{\Lambda B_H+B_S}{2T}.
\]
The low-loss conclusion of Corollary~\ref{cor:low-loss-span} also holds with
$B_T$ and $B_f$ interpreted as the corresponding average squared errors.
\end{proposition}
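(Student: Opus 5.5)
The plan is to re-run the proofs of Theorem~\ref{thm:residual-certificate} and Theorem~\ref{thm:span-guarantee} verbatim, checking at each step where the hypothesis $Y_t\in\{0,1\}$ was actually used. The answer should be: only in the identity $S_T=4\sum_t(Y_t-p_t)^2$ (which holds for any real $Y_t$) and in the range bounds on the signed quantities. With $Y_t\in[0,1]$ we still have $\sigma_t\in[-1,1]$. Since $\mu_t\in[-1,1]$ by the root rule, this gives $|r_t|\le2$.

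The first step is to verify that every online primitive still receives admissible inputs. The oracle coefficient is $c_t=r_t\in[-2,2]$, as required by Definition~\ref{def:weak-class-oracle}. The scalar inputs satisfy $|u_t|=|\mu_tr_t|\le|r_t|\le2$ and $|v_t|\le(|\widehat h_t|+|\theta_t\mu_t|)|r_t|/2\le|r_t|$. So $u_t,v_t\in[-2,2]$ and Lemma~\ref{lem:scalar-ogd} applies.

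The only step touching the label is Lemma~\ref{lem:root-sign}. It was stated for every $\sigma\in[-1,1]$, so $F_t(\mu_t)(\sigma_t-\mu_t)\le0$ holds pathwise for bounded outcomes too. Hence the aggregated gain is nonpositive on every round. The chain \eqref{eq:auditor-gains}, followed by the oracle and $\mathsf S$ regret bounds, goes through with identical constants $A_H,B_H,A_S,B_S$. Symmetry of $\calH$ again supplies the absolute values. This gives the certificate part of the statement.

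For the span bound, I will repeat the convexity step:
\[
(\sigma_t-\mu_t)^2\le(\sigma_t-f(x_t))^2+2r_t(f(x_t)-\mu_t),
\]
which is an exact algebraic identity up to a nonnegative square and needs nothing about $\sigma_t$. Then:
\begin{itemize}
\item bound $|\sum_t f(x_t)r_t|\le\Lambda(A_H\sqrt{S_T}+B_H)$ by linearity over the representation of $f$;
\item bound $|\sum_t\mu_tr_t|$ by self-orthogonality;
\item divide by $4T$, using $(\sigma_t-f(x_t))^2=4(Y_t-q_f(x_t))^2$ and $S_T=4TL_{2,T}$.
\end{itemize}
The low-loss statement is purely the algebra in the proof of Corollary~\ref{cor:low-loss-span}, applied to the same inequality, so it transfers directly.

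I expect no real obstacle. The one point needing care is the sign lemma: confirming that case (ii)/(iii) of the root rule uses only $\sigma\le1$ resp.\ $\sigma\ge-1$, not integrality of $Y_t$. A secondary point is that $L_{2,T}\le1$ still holds, since $|Y_t-p_t|\le1$; this yields the simplified $\sqrt T$ form.
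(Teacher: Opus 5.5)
Your proposal is correct and follows the paper's proof: the paper likewise notes that Lemma~\ref{lem:root-sign} already allows every $\sigma_t\in[-1,1]$, so the proofs of Theorem~\ref{thm:residual-certificate} and Theorem~\ref{thm:span-guarantee}, together with the algebra of Corollary~\ref{cor:low-loss-span}, carry over unchanged. Your extra check that $c_t,u_t,v_t$ remain in $[-2,2]$, so the oracle and scalar guarantees still apply, is a detail the paper leaves implicit.
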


\begin{proof}
Lemma~\ref{lem:root-sign} already permits every
$\sigma_t\in[-1,1]$.  The proof of
Theorem~\ref{thm:residual-certificate} therefore applies unchanged.  The
proof of Theorem~\ref{thm:span-guarantee} then uses only that certificate and
the same squared-loss expansion, so it also applies unchanged.
\end{proof}

\subsection{Regression experiments with bounded outcomes}
\label{sec:regression-experiments}

We test the Defensive Booster beyond binary outcomes on three public
regression datasets, processed in timestamp order.  \emph{Appliance energy}
records household appliance use
every ten minutes together with indoor and outdoor sensor measurements
\citep{Candanedo2017AppliancesDataset}.  \emph{Bike demand} records hourly
Capital Bikeshare rentals with calendar and weather variables
\citep{FanaeeT2013BikeDataset}.  \emph{Interstate traffic} records hourly
westbound I-94 traffic volume with calendar and weather variables
\citep{Hogue2019MetroDataset}.

We normalize each target by a fixed upper bound $R$: $2000$ Wh for Appliance
Energy, $2000$ rentals per hour for Bike Demand, and $10000$ vehicles per
hour for Interstate Traffic.  We map a target $Y_t\in[0,R]$ to
$Y_t/R\in[0,1]$ and map a normalized forecast $p_t$ back to $Rp_t$.  Thus
\[
  (Y_t/R-p_t)^2=(Y_t-Rp_t)^2/R^2,
\]
so normalized MSE and root mean squared error in the original units differ
only by the fixed factor $R$.  Every observed target lies in its stated
interval, so this normalization clips no outcome.

We use the first $10\%$ of each stream as a common chronological
initialization prefix and report losses on the remaining $90\%$.  Thereafter,
each algorithm receives the current context, predicts, observes the target,
and updates.  Numeric features are standardized using statistics from
strictly earlier rows.  Each context contains calendar variables and the
available sensor or weather measurements.  Bike and Traffic additionally use
targets observed exactly one hour, one day, and one week earlier when those
timestamps exist; Appliance Energy uses lags of ten minutes, one hour, and one
day.  We omit simultaneous light consumption and two random decoy columns
from Appliance Energy.  We omit the casual and registered rental counts from
Bike because they sum to the target, and collapse duplicate weather reports
at a Traffic timestamp.

All learned methods (except the past-outcome mean baseline) receive the same 128-dimensional signed feature-hashed context,
rescaled to have Euclidean norm at most one, and use the same unit-ball linear
weak class.  Algorithmic hyperparameters are fixed across datasets.  The
Defensive Booster maintains one second-order linear oracle; OGB maintains
$N=100$ such oracles
and uses the stage step $\eta=(\log N)/N$ from the binary experiments.  Two
controls use either one unboosted squared-loss learner or the mean of targets
observed before the current round.  Figure~\ref{fig:regression-mse}
plots cumulative mean squared error on the normalized targets, and
Table~\ref{tab:regression-results} reports final normalized MSE and root mean
squared error in the original units.

\begin{figure}[H]
\centering
\includegraphics[width=.98\linewidth]{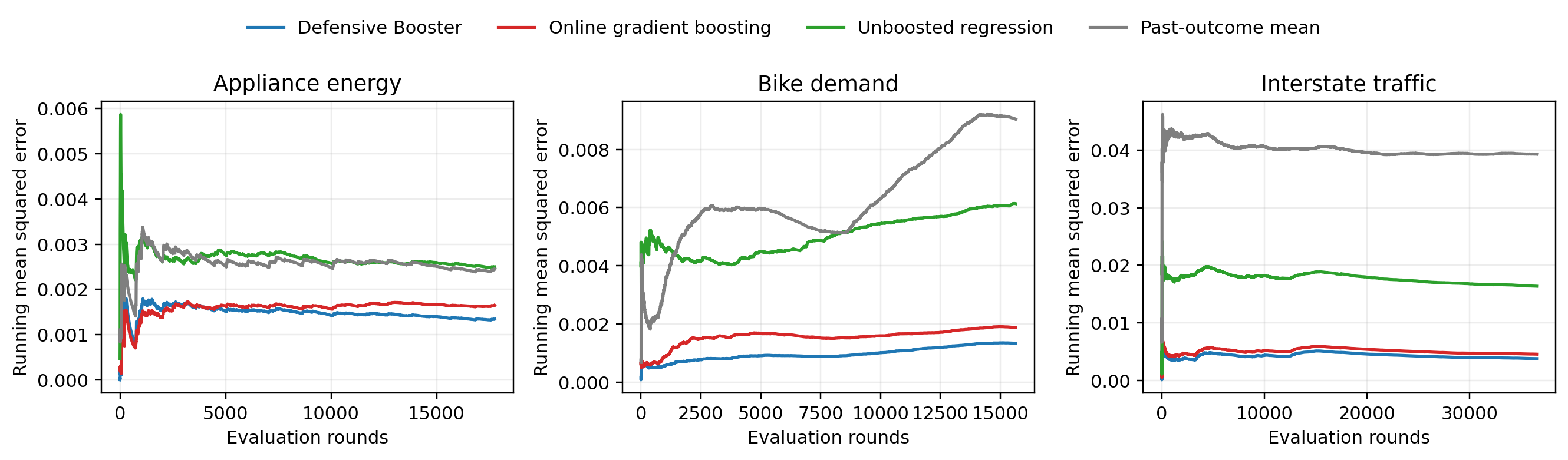}
\caption{Cumulative mean squared error after the common chronological
initialization prefix.  The Defensive Booster and unboosted control each
maintain one weak-class learner, whereas online gradient boosting maintains
$100$.  The Defensive Booster has the lowest final loss on all three streams.}
\label{fig:regression-mse}
\end{figure}

\begin{table}[H]
\centering
\small
\begin{tabular}{lrrrrrr}
\hline
& & \multicolumn{2}{c}{Normalized MSE} & \multicolumn{2}{c}{Raw-unit RMSE} & OGB time \\
Dataset & Rounds & Defensive & OGB & Defensive & OGB & / Defensive \\
\hline
Appliance energy & $17{,}762$ & $\mathbf{.0013443}$ & $.0016461$ & $\mathbf{73.330}$ & $81.145$ & $64.7\times$ \\
Bike demand & $15{,}642$ & $\mathbf{.0013440}$ & $.0018824$ & $\mathbf{73.322}$ & $86.773$ & $65.9\times$ \\
Interstate traffic & $36{,}518$ & $\mathbf{.0038040}$ & $.0045762$ & $\mathbf{616.769}$ & $676.473$ & $69.6\times$ \\
\hline
\end{tabular}
\caption{Final regression performance after the common chronological
initialization prefix; lower error is better.  Raw-unit RMSE is measured in
Wh, rentals per hour, and vehicles per hour, respectively.  The final column
is the ratio of OGB's wall-clock time per round to the Defensive Booster's on
the same machine.  Absolute runtimes depend on the implementation, whereas
OGB's $100$ maintained weak-class learners versus the Defensive Booster's one
is an algorithmic difference.}
\label{tab:regression-results}
\end{table}

The Defensive Booster reduces normalized MSE relative to OGB by $18\%$ on
Appliance Energy, $29\%$ on Bike Demand, and $17\%$ on Interstate Traffic.
On Bike and Traffic, both controls are substantially worse, so
the improvement does not come merely from predicting the running mean or from
applying the shared weak learner once.  Thus the empirical advantage extends
beyond binary outcomes: on each stream, the Defensive Booster obtains lower
squared error than the $100$-learner OGB ensemble while maintaining one
weak-class learner.

\section{Strongly adaptive experiments}
\label{sec:adaptive-diagnostics}
We next compare the basic Defensive Booster with its strongly adaptive
variant from
Section~\ref{sec:interval-boosting}.  The implementation uses the canonical
dyadic interval family and the Adapt-ML-Prod second-order aggregation rule
with the
sleeping-expert confidence reduction of
\citet{GaillardStoltzVanErven2014SecondOrder}.  The weak-class and scalar
states are vectorized across scales, which reduces implementation overhead
without changing their updates.  This variant introduces no dataset-specific
parameter: the same second-order weak oracle and scalar routines are used at
every scale.

The \textsc{Adaptive Def.} rows in Table~\ref{tab:real-data} report its full-stream
performance.  On Electricity, strong adaptivity lowers Brier loss from
$.0772$ to $.0644$, deterministic error from $.1077$ to $.0851$, and
randomized error from $.1538$ to $.1289$.  It also lowers Airlines Brier loss
from $.2094$ to $.2066$ and Occupancy Brier loss from $.0071$ to $.0069$; on
Bank it increases Brier loss from $.0800$ to $.0807$.  The adaptive
implementation takes $85$--$89$ microseconds per round, about six times the basic forecaster but
still substantially less than the 100-learner ensembles in
Table~\ref{tab:real-data}.
Figure~\ref{fig:adaptive-real} compares the methods' trailing-window losses
and shows when these full-stream improvements occur.

\begin{figure}[H]
\centering
\includegraphics[width=.96\linewidth]{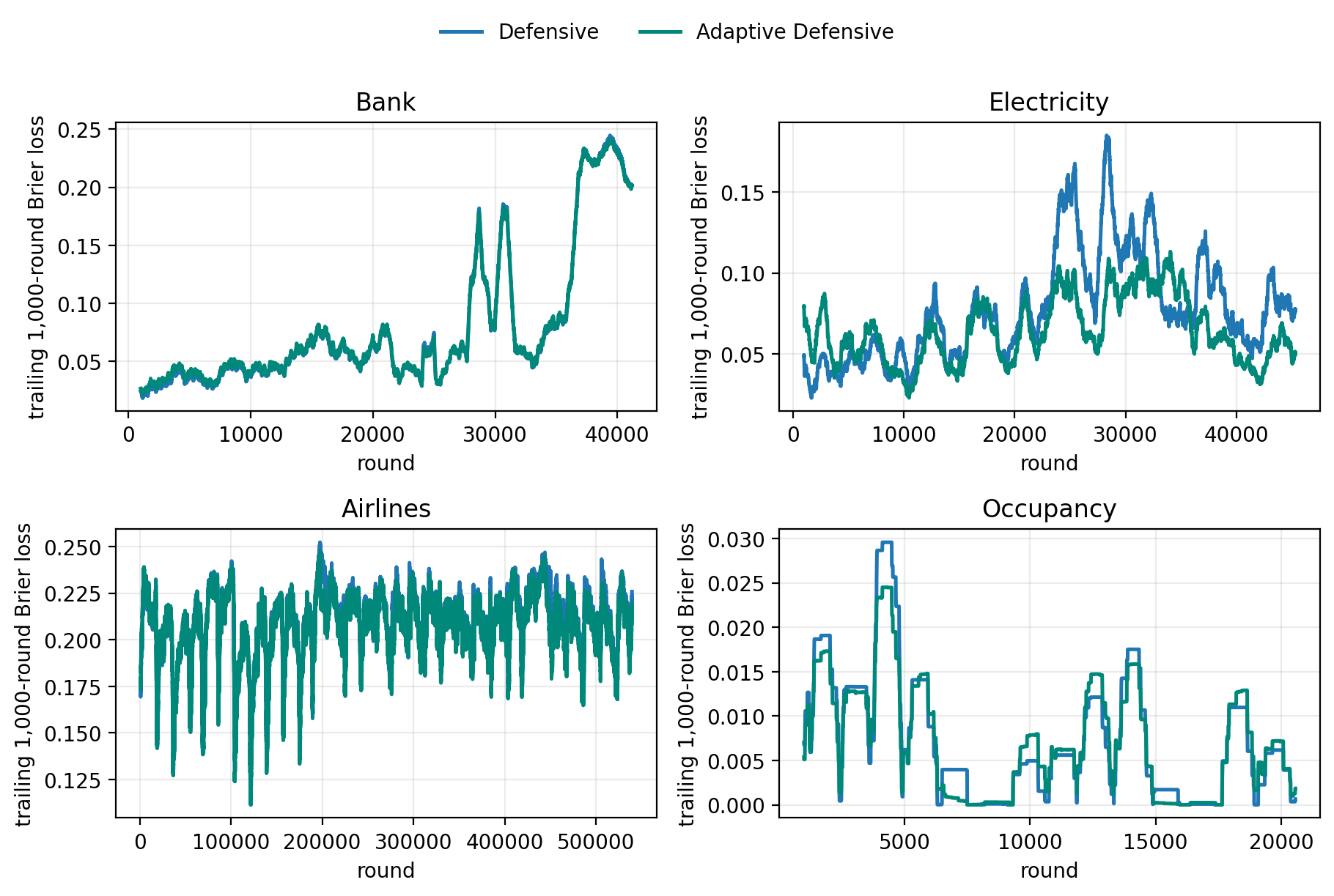}
\caption{Local performance of the basic and strongly adaptive Defensive
Boosters on all four real streams.  Each curve is the trailing
$1{,}000$-round Brier loss; the same window is fixed for every dataset.  The
adaptive variant tracks the basic forecaster closely on Bank and Airlines,
while on Electricity it improves substantially during the later high-loss
portions of the stream.  Occupancy is mixed locally but favors the adaptive variant in
full-stream loss.  Table~\ref{tab:real-data} reports the corresponding
full-stream averages.}
\label{fig:adaptive-real}
\end{figure}

\paragraph{Controlled drift benchmark.}
To isolate adaptation to distribution shift, we additionally use the INSECTS
optical-sensor benchmark of
\citet{SouzaDosReisMaletzkeBatista2020Benchmarking}.  Each example is an
optical-sensor recording of mosquito flight.  The benchmark orders these
examples using a hidden temperature variable to produce known abrupt,
incremental-gradual, and recurring drift patterns.  We use all five balanced
variants and preserve each released order.  Because our setting is binary, we fix one target for
the entire benchmark: recognize \emph{Aedes albopictus} (either sex) versus
\emph{Aedes aegypti} or \emph{Culex quinquefasciatus}.  Each of the six source
classes has equal frequency, so the binary target has positive rate $1/3$.
The 33 numeric signal features receive the same prefix-only standardization
and row normalization as the other real streams.  Neither algorithm is given
the temperature, the drift type, or the change points.
Table~\ref{tab:insects-drift} reports full-stream performance for all five
released orderings.

\begin{table}[H]
\centering
\small
\begin{tabular}{lrrrrr}
\hline
Drift pattern & Rounds & \multicolumn{2}{c}{Brier loss} & \multicolumn{2}{c}{$0/1$ error} \\
 & & Basic & Adaptive & Basic & Adaptive \\
\hline
Abrupt & $52{,}848$ & $.1307$ & $\mathbf{.1198}$ & $.1905$ & $\mathbf{.1697}$ \\
Incremental-gradual & $24{,}150$ & $.0905$ & $\mathbf{.0877}$ & $.1321$ & $\mathbf{.1235}$ \\
Incremental-abrupt recurring & $79{,}986$ & $.0871$ & $\mathbf{.0813}$ & $.1231$ & $\mathbf{.1148}$ \\
Incremental recurring & $79{,}986$ & $.0829$ & $\mathbf{.0769}$ & $.1151$ & $\mathbf{.1064}$ \\
Incremental & $57{,}018$ & $\mathbf{.1700}$ & $.1711$ & $\mathbf{.2535}$ & $.2547$ \\
\hline
\end{tabular}
\caption{Average online performance on five controlled-drift INSECTS streams;
lower is better.  ``Basic'' is the Defensive Booster, which maintains one
weak learner; ``Adaptive'' is its strongly adaptive variant.  The binary task
and all algorithmic choices are fixed across rows.}
\label{tab:insects-drift}
\end{table}

The adaptive variant lowers both metrics on the four streams that combine
abrupt, gradual, or recurring shifts.  On the continuously incremental stream,
the methods differ by at most $.0012$.  The adaptive implementation takes
$83$--$85$ microseconds per round.  On the four original real streams,
Table~\ref{tab:real-data} shows that the adaptive method
is slower than the basic forecaster but remains $3$--$10\times$ faster than
the 100-learner ensembles.
Figure~\ref{fig:adaptive-insects} shows the local Brier losses around the
published change points for two representative streams.

\begin{figure}[H]
\centering
\includegraphics[width=.96\linewidth]{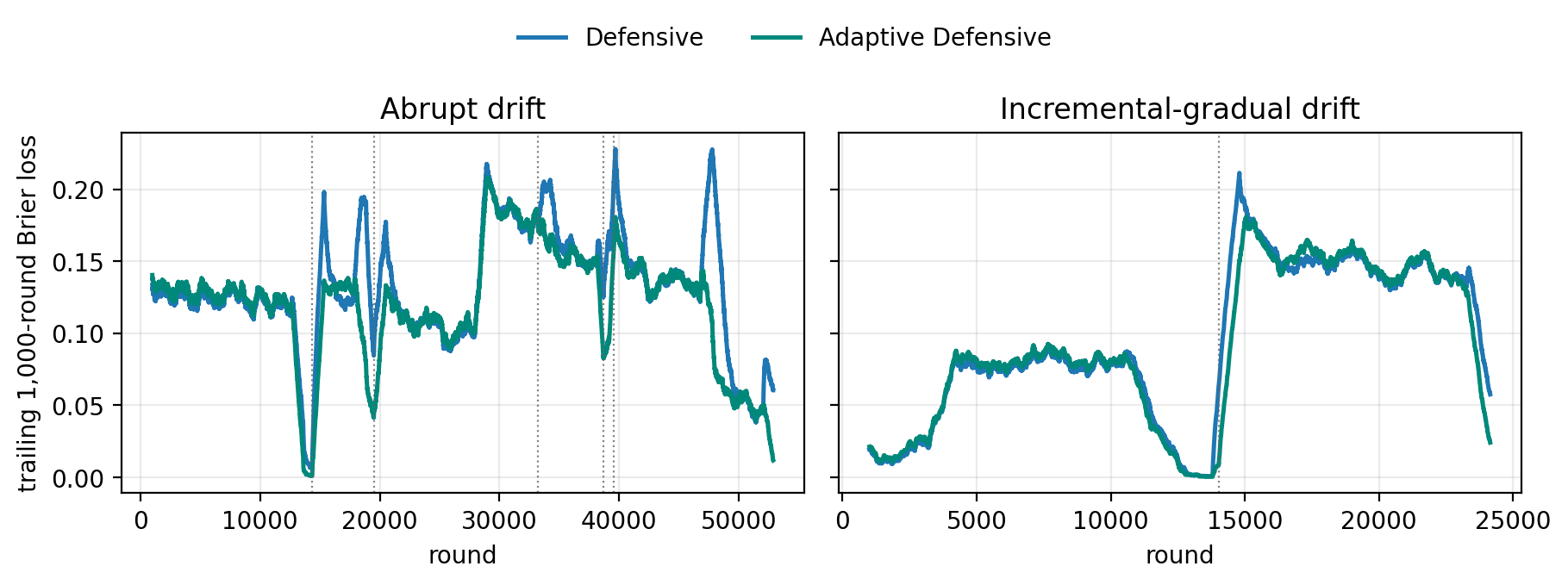}
\caption{Local Brier loss on the abrupt and incremental-gradual INSECTS
streams.  Curves are trailing $1{,}000$-round averages; dotted lines mark the
change points published with the benchmark.  The adaptive forecaster reduces
several of the largest post-shift loss spikes.}
\label{fig:adaptive-insects}
\end{figure}

The interval guarantee also produces local hard-core witnesses.
Figure~\ref{fig:adaptive-hard-core-evolution} examines these witnesses on the
abrupt INSECTS stream.  For each selected endpoint
$t$ and dyadic length $L$, it computes the mistake weighting on the trailing
interval $I=[t-L+1,t]$ and reports both quantities that define its hard-core
quality: density and normalized weak-class edge.

\begin{figure}[H]
\centering
\includegraphics[width=.98\linewidth]{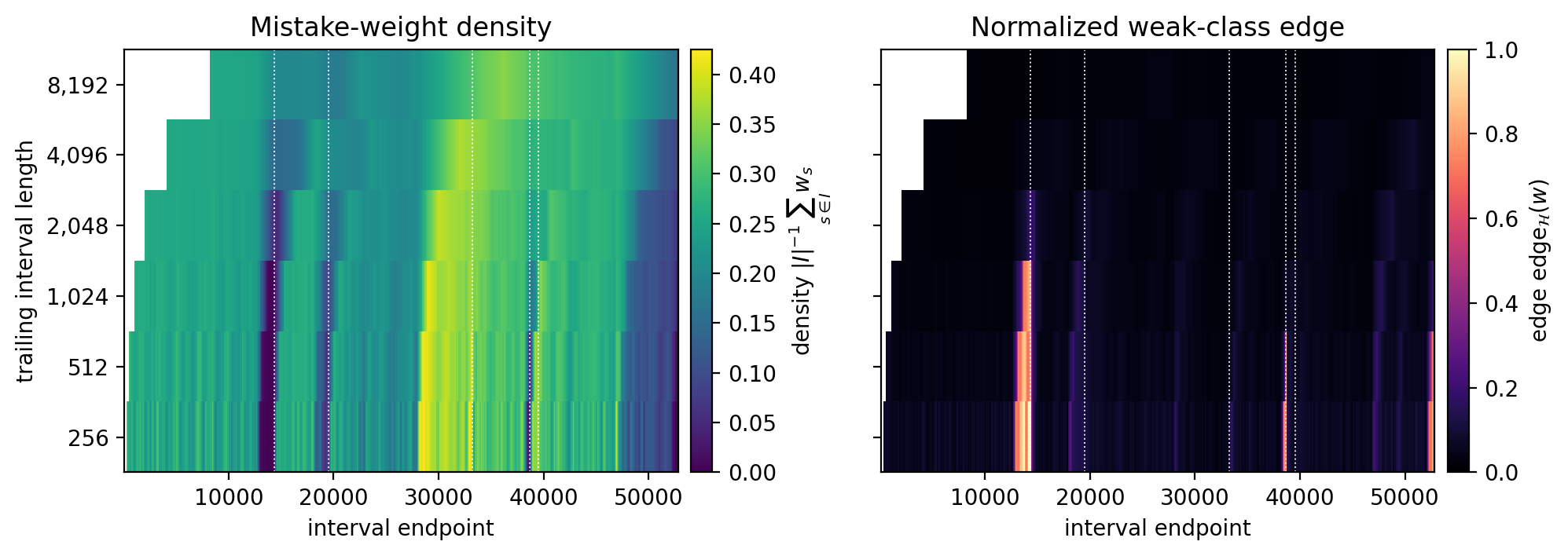}
\caption{Evolution of local hard-core witnesses for the strongly adaptive
Defensive Booster on the abrupt INSECTS stream.  Each pixel represents the
mistake weights $w_s=|Y_s-p_s|$ on one trailing interval
$I=[t-L+1,t]$: the horizontal axis is its endpoint $t$, and the vertical axis
is its dyadic length $L$.  The left panel gives the density
$|I|^{-1}\sum_{s\in I}w_s$; the right gives $\edge_{\calH}(w)$.  Bright regions
on the left paired with dark regions on the right are smooth, low-edge local
hard-core witnesses.  Dotted lines mark the five published abrupt change
points; white regions precede the first complete interval at a given scale.}
\label{fig:adaptive-hard-core-evolution}
\end{figure}

\end{document}